\documentclass{fundamx}
\usepackage{url} 
\usepackage[ruled,lined]{algorithm2e}
\usepackage{graphicx}

\usepackage{mymacros}

\begin{document}
\setcounter{page}{1001}
\issue{XXX~(2019)}

\title{Symbolic Tensor Neural Networks for Digital Media
\\{\small  -- from Tensor Processing via BNF Graph Rules to CREAMS Applications\thanks{This is a tutorial  submitted to the {\it "Special Issue on Deep Neural Networks for Digital Media Algorithms" of Fundamenta Informaticae}.}}}

\address{w.skarbek{@}ire.pw.edu.pl}

\author{W\l adys\l aw Skarbek\\
Institute of Radioelectronics and Multimedia Technology \\
Faculty of Electronics and Information Technology \\ 
Warsaw University of Technology \\ 
Nowowiejska 15/19, 00-665 Warszawa, Poland
} 
\maketitle

\runninghead{W. Skarbek}{Symbolic Tensor Neural Networks for Digital Media}

\vspace*{-20mm}
\begin{abstract} This tutorial material on Convolutional Neural Networks (CNN) and its applications in digital media research is based on  the concept of Symbolic Tensor Neural Networks. The set of STNN expressions is specified in Backus-Naur Form (BNF) which is annotated by constraints typical for labeled acyclic directed graphs (DAG). The BNF induction begins from a collection of neural unit symbols with extra (up to five) decoration fields (including tensor depth and sharing fields). The inductive rules provide not only the general graph structure but also the specific  shortcuts for residual blocks of units. A syntactic mechanism for network fragments modularization is introduced via user defined units and their instances. Moreover, the dual BNF rules are specified in order to generate the Dual Symbolic Tensor Neural Network (DSTNN). The joined interpretation of STNN and DSTNN provides the correct flow of gradient tensors, back propagated at the training stage. The proposed symbolic representation of CNNs is illustrated for six generic digital media applications (CREAMS): Compression, Recognition, Embedding, Annotation, 3D Modeling for human-computer interfacing, and data Security based on digital media objects. In order to make the CNN description and its gradient flow complete, for all presented applications, the symbolic representations of mathematically defined loss/gain functions  and gradient flow equations for all used core units, are given. The tutorial is to convince the reader that STNN is not only a convenient symbolic notation for public presentations of CNN based solutions for CREAMS problems but also  that it is a design blueprint with a potential for automatic generation of application source code.  
\end{abstract}

\begin{keywords}
convolutional neural network, tensor neural network, deep learning, deep digital media application
\end{keywords}

\tableofcontents

\section{Introduction}

Artificial Neural Networks (ANN) are present in science and engineering since late 1950s (Rosenblatt \cite{Rosenblatt57}. Initially, as a computational tool for computer-aided decision making (perceptron), then as a function approximation universal mechanism (multilayer perceptron) -- since 1980s when the error backpropagation was published (cf. Werbos' pioneer paper \cite{WerbosP82a} and Rumelhart et.al. 
\cite{RumelhartD88a}) using a low dimensional data 
for their classification and regression, and nowadays, since  about 2010, as Deep Neural Networks (DNN) equipped with  specialized operations (e.g. convolutions), operating on large multidimensional signals (aka\footnote{The acronyms used: aka -- "also known as",
wrt -- "with respect to", iff -- "if and only if".} tensors) and dozens of processing layers to extract their implicit tensor features. The comprehensive survey of ANN history with a large collection of references can be found in Schmidhuber's article \cite{Schmidhuber14a}.

Deep neural networks algorithms embrace a broad class of  ANN algorithms for data model building in order to solve the hard data classification and regression problems. However, now due to unprecedented progress in computing technology, both tasks can accept digital media signals approaching the complexity, the live organisms can deal with. 
Moreover, due to the high quality of DNN models, nowadays digital media systems and application significantly improved their performance, comparing to the research status at the turn of the century when the existing media standards (JPEG-x, MPEG-x) were being established. 

The paper is focused on Convolution Neural Network (CNN) having the structure of  directed acyclic graphs (DAG) with edges transferring and  nodes processing tensors -- the multidimensional views of (usually large) data buffers.
Initially, the author intended to make the presentation of the existing CNN solutions for representative application areas of digital media. Since the presentation for Fundamenta Informaticae should be more mathematically rigorous than just manually handcrafted figures for CNN graphs or just tables with parameters for all CNN layers, the author was looking for a notation which joins both forms of the presentation for CNN structure and parameter definitions. 

The trial and error process has led the author to the concept of the decorated symbol which represents a data processing unit -- the symbol represents the operation type (like $\bb{C}$ stands for convolution) while decorators include the essential information to perform the operation (like the filter size or the number of filters). Then the author could show students of {\it Adaptive Image Recognition} course, the simplified facial landmarks detector\footnote{The architecture was developed by Rafa\l\ Pilarczyk as the lightweight solution for mobile applications [Pilarczyk].} as the following CNN formula\footnote{Here, the concept of {\it formula} is semantically more close to the chemical formula like $\bb{C}_6\bb{H}_{12}\bb{O}_{6}$ (the molecular formula for glucose) than to the mathematical formula $A_6B_{12}C_6$ (the product of three numbers located in arrays $A,B,C$).}:\\[5pt]
\noindent\hspace*{-2mm}
\begin{small}
\xdblock{image}{}{
$\underbrace{
\text{
\xconv{2_{\sigma}3}{32}{p}{}{br}
\xconv{3}{64}{p}{}{r}
\xconv{2_{\sigma}3}{64}{p}{}{br}
\xconv{3}{64}{p}{}{r}
\xconv{2_{\sigma}3}{64}{p}{}{br}
\xconv{3}{128}{p}{}{r}
\xconv{3}{128}{p}{}{br}  
\xconv{2_{\sigma}3}{256}{p}{}{br}
\xpool{g}{}{a}{}{}
}}_{FEATURE\ \ EXTRACTION}
\underbrace{\text{
\xdense{}{136}{}{}{}
}}_{REGRESSION}$}{\text{fp$_{68}$}}\\[2pt]
\end{small}

Yet another motivations for CNN formulas definition follow from the daily software experience: 
\begin{enumerate}
  \item CNN models are archived and transferred from one application to another one in portable way for instance as JSON documents (JavaScript Object Notation).\\
{\it   The CNN formula as the structured string of \LaTeX\  commands is portable in the same sense as any \LaTeX\ document is portable between computer platforms. However, the symbols for all inputs should be complete regarding the input tensor shapes. Binary buffers including model parameters are encoded in a portable floating point representation of real numbers, for instance in IEEE 754 standard. Moreover, the unit interconnection structure of the directed acyclic graph should be incorporated into CNN formulas. To this goal the concept of unit labeling and their outputs merging is added to the grammar of CNN formulas.
}  
\item APIs (Application Programming Interfaces) to define and use CNNs have evolved from  one stage approach where the input data, the processing operation, and the data flow were defined together, into two stage approach where the data definition and its flow are separated from the operation definitions and their graph interconnections. The latter stage is attributed by API designers as the symbolic model definition.\\
{\it The CNN formula defines the symbolic model of CNN as it includes all information on CNN units hyper parameters and on CNN units interconnection. The necessary  information to perform the tensor flow is clearly separated from CNN formula. }
\end{enumerate}

In this introduction the preliminary topics are discussed:
\begin{enumerate}
  \item The CREAMS categorization  as it is widely accepted in the digital media research.
  \item The generalized perceptron as the primary concept for tensor neural networks and its duals.
\end{enumerate}

The main body of the paper is divided into four parts: 
\begin{itemize}
\item The dual generalized perceptron for gradient computations (section \ref{sec:dgp}).
  \item The tensor concept as it is used in the CNN research (section \ref{sec:grads}). 
  \item The symbolic tensor neural networks (STNN -- section \ref{sec:stnn}). 
  \item The CNN formulas for digital media applications -- a review of the representative cases for CREAMS categorization (section \ref{sec:creams}).
\end{itemize}

\subsection{CREAMS -- digital media as research area}

The CNN formulas developed in this paper for digital media applications refer to the fundamental digital media research issues represented together by the acronym CREAMS:
\begin{itemize}
\item {\bf C}ompression of digital image, video, and audio  (efficient generative modeling).
\item {\bf R}ecognition of semantic objects in media objects (including object detection, segmentation, classification, and verification).
\item {\bf E}mbedding of one media object into another one (stego-analysis aspects, watermarking, etc.).
\item {\bf A}nnotation, aka media indexing, captioning, and summarization.
\item 3D {\bf M}odeling for human computer interfacing including gaze and pose identification.
\item Data {\bf S}ecurity schemes and algorithms based on digital media.
\end{itemize}

\subsection{Generalized perceptron-- directed graph of data processing units}

Artificial Neural Networks (ANN) can be described as a special class of  computer algorithms where the computation is performed by a collection $U$ of interconnected data processing units. 

\begin{definition}[data processing unit]
The data processing unit $u$ performs an operation\footnote{We postpone the specification of operation domain and range till the concept of tensor view attached to data buffers will be defined. Instead of range and domain compatibility for operations of connected processing units, we use the concepts of data buffer reference and data request from a referenced data buffer. The "reference to" induces here the "connection from".} $f_u$ on its input data $x_1',\dots,x_{k'}'$ and stores its results $x_1,\dots,x_k$ into its private data buffers (memory blocks) $X_1,\dots,X_k$.
The input data is propagated from other data buffers (memory blocks) $X_1',\dots,X_{k'}'$. Therefore, in a formal way {\it data processing unit} $u$ is defined by three components {\it (input buffers referenced, processing function, output buffers filled)}:
\begin{equation}
\left[
X_u^{in}\eqd(X_1',\dots,X_{k'}'),\ 
X_u^{out}\eqd(X_1,\dots,X_{k})\right] \lra
u \eqd \left(X_u^{in}; f_u; X_u^{out}\right)
\end{equation}
\end{definition}

The data processing unit requests the input data $x_i'$ from its data sources $X_i'$ which are memory blocks. The unit data source $X_i'$ is either a data buffer of another unit $u'\in U$ or $X_i'$ is a data buffer filled by an external action independent of any unit activity in the network ANN. The latter buffer can be considered as the {\it input unit}, i.e. the unit with the sole action of data input without any modification of received data. For the input unit $u$ with the buffer $X$ we apply  occasionally the following notation: $u^{in}=(;;X).$ 

In general there is no need for a definition, by symmetry, of {\it output units} as the units with outputs not requested by other units can play such a role. However, in practice each data buffer in the network can be used directly or indirectly for generating the network's nominal output or error output (while training) or observed output (while training or testing). Then the definition of the "pure" output unit which requests the data $X$ only is useful: $u^{out}=(X;;).$ 

We can consider the input request by reference as the input link between a data buffer and the processing unit where the data is propagated to. There are another implicit links: between the unit itself and its private buffers being filled by its output data. The units and the buffers extended by those links create the directed bipartite graph.
\begin{definition}[neural network bipartite graph] Let $\cl{U}$ be the set of neural network units and let $\cl{X}$ be the set of data buffers (memory blocks). Then the neural network bipartite graph $\cl{N}\eqd(V,E)$ where $V\eqd\cl{U}\cup\cl{X}$ is the set of graph vertexes and $E\eqd E_{ux}\cup E_{xu}$ is the set of directed  edges describing the input/output data connections. Hence the {\it neural network bipartite graph} $\cl{N}$ is formally defined as follows:
\begin{equation}
\begin{array}{c}
E_{ux}\eqd\left\{(u,X): u\in\cl{U},\ X\in X_u^{out}\subset\cl{X}\right\},\ 
E_{xu}\eqd\left\{(X,u): u\in\cl{U},\ X\in X_u^{in}\subset\cl{X}\right\}
\lra\\ \cl{N}\eqd\left(\cl{U}\cup\cl{X},E_{ux}\cup E_{xu}\right)
\end{array}
\end{equation} 
\end{definition}

\begin{definition}[generalized perceptron]
If the neural network's graph $\cl{N}$ has no loops, i.e. if $\cl{N}$ is the directed acyclic graph (DAG)  then the neural networks belongs to the class of the {\it Generalized Perceptrons} (GP). The data flow of GP, i.e. its {\it forward signal propagation} can be realized in the sequence of topologically sorted units $u\in\cl{U}$ by the rule:
{\it if $u$ is the input unit then $X_u^{out}$ is filled from an external data source else}
\[
\begin{array}{c}
\left[
u \eqd \left(X_u^{in}; f_u; X_u^{out}\right),\ 
X_u^{in}\eqd\left(X_1',\dots,X_{k'}'\right),\ 
X_u^{out}\eqd(X_1,\dots,X_{k})\right]
\lra\\
X_1,\dots,X_{k} \ass f_u\left(X_1',\dots,X_{k'}'\right)
\end{array}
\]
\end{definition}


The digital media applications, discussed in this paper, use GP solutions with convolutions as the main processing functions and with the tensor view of data buffers.

\subsubsection{Networks with loops -- avoiding deadlocks}
If the neural network bipartite  graph has cycles (loops) then the signal flow of GP cannot be applied in the topological sequence. Moreover, the requests for data could lead to a deadlock in the asynchronous computational model of data processing. 

We can resolve this problem if the data buffers located in a cycle are doubled. It means the units on cycles have the buffers for the output being currently computed and the buffers for the recently computed output. The requests for data refer always to the recently computed output. Hence the request links and the output links cannot create any loop. Therefore, the computation of all data outputs can be performed in the asynchronous way if only we admit the synchronous switch between the buffer labels: {\it recent output} and {\it current output}. 
The synchronized signal is issued if all units complete filling up of their current data buffers.

The above discussion leads to the conclusion that the forward signal propagation starting from inputs is possible for networks with loops if only the units in the loops read from the recent buffers and write to the current buffers. Such artificial neural networks include the class {\it Recurrent Neural Networks}  (RNN) as the {\it recent buffers} represent the state of the network. In this tutorial the neural networks with loops are not further investigated.

\subsubsection{Merging and splitting units}

In practice the computing unit function $f_u$ of GP operates on the single input data buffer ($k'=1$) and the single output data buffer ($k=1$) using their tensor views. However, in theory the unit in the given network could be implemented by another network having multiple input and output data buffers. In order to keep the property "single input -- single output" for all data processing units, in case the unit uses the output data from more than one data source, a data merging is required. To this goal we introduce the {\it merging unit} which is responsible for the task. A symmetric (dual) case when the data buffer is split into parts which are requested by another units separately is theoretically possible\footnote{Typical use of splitter is for displaying of data tensor slices.} and to this goal we introduce the {\it splitting unit.}

The merging and the splitting of any data buffer can be made on many ways. In practice tensor slices are used if tensor view is attached to the buffers. For further discussion we denote by $\mu$ any merging unit and by $\sigma$ any splitting unit. 

The merging and the splitting units can be used to normalize the units with multiple input buffers and multiple output buffers in order to get the single input and the single output for each computing unit. After such normalization we get clear subdivision of data processing units into computing units with single input/output buffers and data permuting (manipulating) units which merge and split data buffers\footnote{The special cases are pure input nodes (no input from other unit, single output) and pure output units (single input, no output buffer).}.

\begin{definition}[normalization of multi-input, multi-output unit by merging and splitting]
Let the processing unit $u=\left(X_u^{in}; f_u; X_u^{out}\right)$ with  $X_u^{in} = (X_1',\dots,X_{k'}')$  and $X_u^{out}\eqd(X_1,\dots,X_{k})$. If $k'>1$ then we define the merger unit $\mu'$ to integrate input buffers into the single buffer $X_u'=f_{\mu'}(X_1',\dots,X_{k'}')$. If $k>1$ then the merger $\mu$ integrates output buffers into the single buffer $X_u =f_{\mu}(X_1,\dots,X_{k})$. Let $\sigma$ denotes the splitter which performs the inverse action for the merger $\mu.$ Then the {\it  normalized unit} with the single input buffer $X_u'$ and the single output buffer $X_u$ is defined by the unit $u'\eqd (X_u';f_u';X_u),$ where $f_u'$ preserves the unit $u$ functionality:
$
f_{\sigma}f_u'(X_u')=f_u\left(X_1',\dots,X_{k'}'\right)\ .
$
\end{definition}

Further we assume that our GP networks are normalized in the above sense before optimizing their free parameters.

\section{Dual generalized perceptron}\label{sec:dgp}

\subsection{Abstract conjugation of neural network graph}

\subsubsection{Duals of merging and splitting operations}

Note that in the above definition we implicitly assumed the existence of inversion for merging and splitting operations.

\begin{definition}[duality of merging and splitting]
Let the inverse action for merging unit $\mu$ is performed by the the {\it dual (conjugated)} splitting unit $\sigma\eqd\ov{\mu}$. If the merging unit $\mu$ makes the inversion of splitting unit $\sigma$ then we also say that $\mu$ is dual (conjugated) to $\mu.$  By the definition the operations of conjugated splitters/mergers are related by the function inversions:
\begin{equation}
f_{\ov{\mu}} \eqd \ov{f}_{\mu} \eqd f_{\mu}^{-1},\ \ f_{\ov{\sigma}} \eqd \ov{f}_{\sigma} \eqd f_{\sigma}^{-1}
\end{equation}
\end{definition}

\subsubsection{Dual of multi-casting operation}

Let us observe that the above normalization exploiting duality of merging and splitting units is not applied for the case when the output buffer $X$ of any unit is requested by many other units $u$. The data flow from one unit to two or more units to be processed as their input data is not considered as a splitting -- it is rather a multi-casting. The dual concept to multi-casting is in-casting which aggregates (in some sense) the data coming back from the duals of  units being addressed in multi-casting.
The in-casting concept can be defined formally via the neural network bipartite graph $\cl{N}$ since the multi-casting group $mcg(X)$ of processing units $u$ requesting the data from the data buffer $X$ in terms of edges in $\cl{N}$ equals to
\begin{equation}
mcg(X) \eqd \{u\in\cl{U}: (X,u)\in E_{xu}\}
\end{equation}

\begin{definition}[in-casting unit as dual of multi-casting unit]
Let the data from the buffer $X$ be requested by the units  $u\in mcg(X)$ then the {\it in-casting} unit $\alpha$ is defined by an aggregation\footnote{When the dual units transfer gradients of error function then the aggregation $f_{\alpha}$ is the usual vector (generally tensor) summation.} operation $f_{\alpha}:$
\begin{equation}
\left[\cl{N}\eqd\left(\cl{U}\cup\cl{X},E_{ux}\cup E_{xu}\right)  \right]\lra\left[
\ov{X} \ass f_{\alpha}(\ov{X}_{u},\dots),\ \ (X,u)\in E_{xu}
\right]
\end{equation}
where $\ov{X}_u$ is the output buffer created for the dual of unit $u\in mcg(X)$. The data from all such buffers is aggregated into the buffer $\ov{X}$. The bar notation denotes that a new memory block is assigned for all dual buffers. However, all dual buffers $\ov{X}_u$ and $\ov{X}$ have the same size as the original buffer $X$.
\end{definition}

\subsubsection{Conjugation of computing unit}

Having the duality concept for unit operations and dual (conjugated) concepts for data merging, splitting and multi-casting we can define the concept of the dual  generalized perceptron\footnote{The motivation to do it is the automatic generation of the training algorithm as the dual network of the original generalized perceptron.}. We assume that the perceptron is normalized.

\begin{definition}[dual of computing unit]
Let $u\in\cl{U}$ be the computing unit processing the requested input $X_u'$ by the function $f_u$ and putting the result into the data buffer $X_u.$ Then the dual (conjugated) unit\footnote{The definition of duality assumes that for any computing unit $f_u$ used in the perceptron there exists its conjugated form referring to an input data buffer corresponding to the output buffer of the original unit $u$. The results of the dual unit are stored in a data buffer which is corresponding to the input buffer of $u.$ The duality of data buffers implicitly assumes that the buffer and its dual can store values from the same data domains.} $\ov{u}\in\ov{\cl{U}}$ is defined as follows $\ov{u}\eqd\left(\ov{X}_u;\ov{f}_u;\ov{X'}_u\right)$, where:
\begin{itemize}
  \item $\ov{f}_u$ is the dual for function $f_u$,
  \item the dual function stores the results into the dual buffer $\ov{X'}_u$,
  \item the dual unit requests the input from the dual buffer $\ov{X}_u$ which is one of two possible buffers:
\begin{itemize}  
  \item it is the buffer of in-casting unit if $|mcg(X_u)|>1,$
  \item it is the dual buffer of the unit $v$ if $mcg(X_u)=\{v\}$,
  \item it is the dual buffer of the unit $u$, i.e. $\ov{X}_u$ if $|mcg(X_u)|=0$  which becomes the buffer of the input unit in the dual network.
\end{itemize}
\end{itemize}
\end{definition}

\begin{definition}[dual of multi-casting operation]
If the unit $u\in\cl{U}$ has the output buffer $X$ and $|mcg(X)|>1$ then the multi-casting operation defines the in-casting unit $\alpha=\left(\ov{mcg(X)};f_{\alpha};\ov{X}\right),$ where $f_{\alpha}$ is the aggregation operation and $\ov{X}$ is requested by the dual of the unit $u.$
\end{definition}

\begin{definition}[dual of input unit]
If $u=(;;X)$ is the input unit then its dual (conjugated)
$\ov{u}$ is empty unit (none) since its $\ov{X}$ buffer is created either for in-casting unit $\alpha$ if $|mcg(X)|>1$ or for unit $v$ if $mcg(X_u)=\{v\}.$
\end{definition}

\begin{definition}[dual of merging and splitting units]
If $\mu\in\cl{U}$ is the unit merging the requested data $X'_1,\dots,X_{k'}'$ into the buffer $X$ then its dual (conjugated) $\ov{\mu}$ is the unit $\sigma$ which splits the requested data from the dual buffer $\ov{X}$ into the dual buffers $\ov{X'}_1,\dots,\ov{X'}_{k'}.$

On the other hand if $\sigma\in\cl{U}$ is the unit splitting the requested data $X$ into the buffers $X_1,\dots,X_{k}$ then its dual (conjugated) $\ov{\sigma}$ is the unit $\mu$ merging the requested data from the dual buffers $\ov{X}_1,\dots,\ov{X}_{k}$ into the dual buffer $\ov{X}.$
\end{definition}

\begin{definition}[dual of generalized perceptron]
The dual generalized perceptron is defined wrt the output units, i.e. the processing units with\\ $|mcg(X_u)|=0$. If we want to define as the output the data buffer with $|mcg(X_u)|>0$ then we should add to $\cl{U}$ the pure output unit $u^{out}\eqd(X_u;;)$.
Then any output unit adds the single input unit in the dual perceptron.

Let the original perceptron defines the neural network bipartite graph $\cl{N}\eqd\left(\cl{U}\cup\cl{X},E_{ux}\cup E_{xu}\right).$ Then we build the dual bipartite graph 
$\ov{\cl{N}}\eqd\left(\ov{\cl{U}}\cup\ov{\cl{X}},\ov{E}_{\ov{u}\ov{x}}\cup \ov{E}_{\ov{x}\ov{u}}\right)$ incrementally in the inverse topological order of the processing units $u\in\cl{U}:$ 
join the dual $\ov{u}$ together with dual data buffer(s) as defined for the given type of $u$ and the in-casting unit if $|mcg(X_u)|>1.$
\end{definition}

\subsection{Gradient based dual generalized perceptron}

\subsubsection{Sensitivity of modeling error on input data}

The generalized perceptron GP is used for modeling of classification and regression functions wrt the training (learning) input data. The sensitivity of model on input data $X_u\inv{n_u}$ stored in the buffer with $n_u$ real numbers is usually measured by the gradient $\od{\cl{E}}{X_u}$ of the modeling error function $\cl{E}$, $u\in\cl{U}_{in}$. In order to compute the gradients wrt input units, the existence of finite gradient $\od{\cl{E}}{X_u}$ for any unit $u\in\cl{U}$ in GP must be ensured. By the chain rule we can prove it if only there exists Jacobian matrix  $J_{f_u}\inm{n'_u}{n_u}$ for any\footnote{By $[n]$ we denote any set of $n$ integer indexes. For instance $[n]=\{0,\dots,n-1\}$ or $[n]=\{1,\dots,n\}$. The actual index set $[n]$ is the same as it is implicitly used in the notation $X\inv{n}\equiv X\in\bb{R}^{[n]}.$} $u\in\cl{U}:$
\begin{equation}
\begin{array}{ll}
\left[X=f_u(X'),\ X'\inv{n_u'},\ X\inv{n_u}\right] & \lra J_{f_u}(X')\inm{n_u'}{n_u},\\[10pt] 
& \left(J_{f_u}(X')\right)_{ij} \eqd \sp{\left(f_u(X')\right)_j}{x_i'},\ i\in[n_u'], j\in[n_u]
\end{array}
\end{equation}
Namely having $\od{\cl{E}}{X}$ we can compute $\od{\cl{E}}{X'}$ by the chain rule:
\begin{equation}\label{jacoby}
\od{\cl{E}}{X'} = J_{f_u}(X')\od{\cl{E}}{X}
\end{equation}

The error function $\cl{E}$ requests data $X$ from certain units $u\in\cl{U}_{out}$ directly or indirectly through pure output units, makes the processing of them, and finally compares the results with the training data\footnote{The  given input, and the desired output can be also processed, for instance to get implicit features for further comparisons.}. The computed measure of model inaccuracy is the value of the function $\cl{E}$.  

\begin{definition}[error gradient for output units to set their dual buffers]
Given the generalized perceptron GP with output units $\cl{U}_{out}.$ We assume that the gradient $\od{\cl{E}}{X_u}$ is somehow computed for each $u\in\cl{U}_{out}$ for the model error function $\cl{E}$. We assume that error depends on GP input data exclusively via data buffers of the output units. Those gradients are stored in the data buffers $\ov{X}_u$ which are by the definition duals (conjugated) to data buffers $X_u$, $u\in\cl{U}_{out}$.
\end{definition}

\begin{definition}[gradient dual for processing unit]
If $u=(X_u';f_u;X_u)$ then the gradient dual $\ov{u}=(\ov{X}_u,\ov{f}_u,\ov{X}'_u)$ where the gradient dual function $\ov{f}_u$ and the gradient dual buffer of the gradient dual processing unit is computed using the Jacobian equation \eqref{jacoby}:
\begin{equation}
\ov{f}_u(\ov{X}_u) \eqd J_{f_u}(X'_u)\ov{X}_u  \lra
\ov{X}'_u \eqd \od{\cl{E}}{X'} = J_{f_u}(X'_u)\od{\cl{E}}{X_u}
= \ov{f}_u(\ov{X}_u)
\end{equation}
\end{definition}

\begin{proposition}[error gradient via multi-casting group]
If model error $\cl{E}$ depends on $X_u$ only via units $v\in mcg(X_u)$ using arguments $X_u^v$ of processing functions $f_v$ then by chain rule:
\begin{equation}
\od{\cl{E}}{X_u} = \sum_{v\in mcg(X_u)}\od{\cl{E}}{X_u^v}
\end{equation}
\end{proposition}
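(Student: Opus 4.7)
The plan is to treat the proposition as a direct corollary of the multivariate chain rule, once the multi-casting operation is properly modeled as the duplication of $X_u$ into as many formal argument slots $X_u^v$ as there are recipients $v\in mcg(X_u)$. The key conceptual step is to reinterpret the common buffer $X_u$ not as a single datum that is re-read, but as a tuple of identical copies $(X_u^v)_{v\in mcg(X_u)}$, each of which is the specific argument of the corresponding $f_v$. The error $\cl{E}$ then becomes a composite function that factors through the tuple of copies, and every path from $X_u$ to $\cl{E}$ passes through exactly one such copy by the hypothesis that $X_u$ affects $\cl{E}$ only via the units in $mcg(X_u)$.

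First I would write $\cl{E}$ as $\cl{E}=\cl{E}(X_u^{v_1},\dots,X_u^{v_m})$ where $mcg(X_u)=\{v_1,\dots,v_m\}$ and each $X_u^{v_i}$ is the input slot of $f_{v_i}$ equal to $X_u$. Next, invoking the multivariate chain rule (in its Jacobian form \eqref{jacoby} applied component-wise along each copy branch), I would compute
\begin{equation}
\od{\cl{E}}{X_u} \;=\; \sum_{v\in mcg(X_u)} J_{X_u\mapsto X_u^v}(X_u)\cdot \od{\cl{E}}{X_u^v}.
\end{equation}
The map $X_u\mapsto X_u^v$ is the identity (pure copy), so its Jacobian is the identity matrix of the appropriate dimension, and the stated equality drops out immediately.

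The main subtlety, rather than any computational obstacle, is justifying rigorously that the hypothesis \emph{``$\cl{E}$ depends on $X_u$ only via units $v\in mcg(X_u)$''} exhausts all paths of dependency. Within the normalized GP introduced earlier, this is precisely the content of the edge set $E_{xu}$: every outgoing influence of $X_u$ on any downstream buffer must pass through a directed edge $(X_u,v)\in E_{xu}$, and $mcg(X_u)$ enumerates exactly those edges. Hence no other dependency path on $X_u$ exists, and the factorization through the tuple of copies is complete.

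The only other thing worth checking is that the sum is of tensors of matching shape. Since every $X_u^v$ is by construction of the multi-casting operation the same buffer as $X_u$, all duals $\od{\cl{E}}{X_u^v}$ live in the same space as $\od{\cl{E}}{X_u}$, so the summation is well-typed; this is also what later justifies defining the in-casting aggregator $f_\alpha$ as ordinary tensor addition.
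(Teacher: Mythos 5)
Your proof is correct and takes essentially the same route as the paper, which states the proposition as an immediate consequence of the chain rule without further elaboration; your expansion — modeling the multi-cast as duplication of $X_u$ into identical argument slots $X_u^v$, applying the multivariate chain rule, and observing that the copy map has identity Jacobian — is exactly the argument the paper leaves implicit. The additional remarks on exhausting all dependency paths via $E_{xu}$ and on shape compatibility of the summands are sound and consistent with the paper's later definition of the in-casting aggregation as tensor summation.
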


\begin{definition}[aggregation function of in-casting unit in gradient dual network]
The gradient dual for multi-casting operation is in-casting unit with the aggregation function $f_{\alpha}:$
\begin{equation}
f_{\alpha}(\dots) = \sum_{v\in mcg(X_u)}\ov{X'}_v
\end{equation}
where $\ov{X'}_{v}$ is the gradient dual buffer of the unit $\bar{v}.$
\end{definition}

\begin{proposition}[merging and splitting units in dual GP network]
The gradient dual for merging and splitting units are the same as merging and splitting units in any dual GP network - they permute data from input(s) to output(s) only.
\end{proposition}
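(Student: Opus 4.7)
The plan is to verify the claim by computing the gradient dual operation explicitly from the Jacobian formula and recognizing that the resulting data movement is itself a merging/splitting. I would rely on the intuition, already implicit in the paper, that a merging unit is (up to relabeling of buffer coordinates) an injective concatenation or stacking, so that $f_\mu : (X'_1,\ldots,X'_{k'})\mapsto X$ with $X$ of size $n = n'_1 + \cdots + n'_{k'}$ acts as an index permutation: each coordinate of $X$ equals exactly one coordinate of exactly one $X'_j$. Analogously, the splitting unit $\sigma = \overline{\mu}$ is the inverse permutation.

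With that observation, the first step is to compute the Jacobian $J_{f_\mu}$. Because every output coordinate of $X$ coincides with a single input coordinate of some $X'_j$, each column of $J_{f_\mu}$ contains exactly one $1$ and otherwise $0$s, and similarly each row contains at most one $1$. Thus $J_{f_\mu}$ is a $(0,1)$-permutation matrix. The second step is to apply the gradient dual definition $\bar f_\mu(\bar X)=J_{f_\mu}(X')\,\bar X$: this equation simply scatters the entries of $\bar X$ back to the $\bar X'_j$ slots according to the same index correspondence used by $\mu$. But that is, by definition, the action of the splitting unit $\sigma = \overline{\mu}$ applied to $\bar X$, so the gradient dual of $\mu$ is indeed a splitting unit in the dual GP graph, permuting data without arithmetic.

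The third step is the symmetric direction. For a splitter $\sigma$ with $f_\sigma:X\mapsto(X_1,\ldots,X_k)$, the Jacobian $J_{f_\sigma}$ is again a permutation matrix (it is, in fact, the transpose of the Jacobian of its inverse merger), so $\bar f_\sigma(\bar X_1,\ldots,\bar X_k)=J_{f_\sigma}(X)(\bar X_1,\ldots,\bar X_k)$ reassembles the gradient pieces into $\bar X$ by the same index correspondence used by $\sigma^{-1}$. This is exactly a merging unit acting on the dual buffers, and the proposition follows; in particular no arithmetic aggregation occurs, in contrast to the in-casting case of the previous definition.

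The routine calculation is the Jacobian computation, and the only real subtlety, which I would flag explicitly rather than belabor, is that the paper has not fixed a single concrete form of merge/split beyond saying that $f_{\overline{\mu}} = f_\mu^{-1}$ and that in practice these are realized by tensor slicing and concatenation. Under that standing assumption the operations are bijective coordinate reindexings, which is precisely what makes the Jacobian a permutation matrix and makes the conclusion ``permute data from input(s) to output(s) only'' literal rather than figurative. If a merging/splitting operation were ever defined with a nontrivial linear mixing (e.g.\ averaging), the proposition would need to be restated: the dual would still be linear, but it would be the transpose of the merge, not the merge of the dual buffers -- so the main obstacle is really in pinning down the admissible class of merge/split operations, not in the algebra.
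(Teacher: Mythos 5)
Your proposal is correct and follows exactly the route the paper's own definitions dictate: the gradient dual is $\ov{f}_u(\ov{X}_u)=J_{f_u}(X'_u)\ov{X}_u$, and since the duality $f_{\ov{\mu}}=f_\mu^{-1}$ forces mergers and splitters to be bijective coordinate reindexings, their Jacobians are permutation matrices, so the dual action is the inverse reindexing, i.e.\ a splitter for a merger and vice versa. The paper states this proposition without proof, so your Jacobian computation simply supplies the omitted argument, and your closing caveat about restricting to bijective (slice/concatenation) merge--split operations is a fair and accurate reading of the paper's standing assumptions.
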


\begin{proposition}[sensitivity of GP model wrt to all data buffers]
Gradient dual network $\ov{\text{GP}}$ with units $\cl{U}$ and model error function $\cl{E}$ of generalized perceptron GP computes  gradients $\od{\cl{E}}{X_u}$ for all units $u\in\cl{U}$. They can be found in the dual data buffers $\ov{X}_u$ after data propagation in GP and next data propagation\footnote{The propagation in GP is called the forward propagation while the propagation in $\ov{\text{GP}}$ is called the backward error propagation as in the traditional error back-propagation algorithms the dual buffers are assigned to the original units not to their dual units.} in $\ov{\text{GP}}$.
\end{proposition}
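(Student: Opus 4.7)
The plan is to proceed by induction on the reverse topological order of the units of the normalized GP, showing that after the forward propagation in GP and the subsequent propagation in $\ov{\text{GP}}$, each dual buffer $\ov{X}_u$ holds exactly $\od{\cl{E}}{X_u}$. The base case is handled by the definition that initializes $\ov{X}_u = \od{\cl{E}}{X_u}$ for $u\in\cl{U}_{out}$; this is justified by the standing assumption that $\cl{E}$ depends on GP's input data only through the output units' buffers.

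For the inductive step, I would fix a unit $u\in\cl{U}$ with $|mcg(X_u)|>0$ and assume the claim holds for every unit $v\in\cl{U}$ downstream of $u$ in the topological order. I would split the argument along the three structural cases used in the definition of the dual. First, if $|mcg(X_u)|>1$, the in-casting unit $\alpha$ fires before the dual of $u$ and, by the induction hypothesis, each input $\ov{X'}_v$ already equals $\od{\cl{E}}{X_u^v}$ for $v\in mcg(X_u)$; the aggregation $f_\alpha$ therefore produces $\sum_{v\in mcg(X_u)} \od{\cl{E}}{X_u^v}$, which by the proposition on error gradients via multi-casting groups is exactly $\od{\cl{E}}{X_u}$, and this value is stored in $\ov{X}_u$. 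Second, if $mcg(X_u)=\{v\}$, the dual $\bar v$ writes its output to the single buffer $\ov{X}_u$, and by the induction hypothesis that buffer already coincides with the sole term of the multi-casting sum. Third, for a pure output unit with $|mcg(X_u)|=0$ the value was placed there in the base case.

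It remains to verify that each dual unit correctly propagates the gradient into the dual buffer(s) of its input(s). For a computing unit $u=(X'_u;f_u;X_u)$, the dual $\ov{u}$ applies the Jacobian map $J_{f_u}(X'_u)$ to $\ov{X}_u$; by the induction hypothesis $\ov{X}_u=\od{\cl{E}}{X_u}$, and by the chain rule identity \eqref{jacoby} the stored result $\ov{X'}_u$ equals $\od{\cl{E}}{X'_u}$, as required for the next inductive step. For merging and splitting units, the previous proposition ensures that the duals simply permute components of the gradient, so the identity $\ov{X}_u=\od{\cl{E}}{X_u}$ transfers verbatim through the dual operation. Once the induction reaches the input units $u\in\cl{U}_{in}$, we obtain $\ov{X}_u=\od{\cl{E}}{X_u}$, which is the sensitivity described in the first subsection.

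The main obstacle is bookkeeping rather than calculation: one must verify that the reverse topological traversal in $\ov{\text{GP}}$ visits each dual buffer only after every dual unit that contributes to it has fired. This reduces to showing that the construction of $\ov{\cl{N}}$ in the definition of the dual generalized perceptron — inserting an in-casting node whenever $|mcg(X_u)|>1$ and reversing all edges — produces a DAG whose topological order is the reverse of that of $\cl{N}$. Once this combinatorial fact is established, the case analysis above yields the proposition.
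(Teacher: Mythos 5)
Your induction is correct and fills in exactly the argument the paper leaves implicit: the proposition is stated there without any proof, being presented as an immediate consequence of the preceding definitions (error gradient for output units, gradient dual of a processing unit via equation \eqref{jacoby}, the in-casting aggregation) and the two propositions on multi-casting groups and on merge/split duals. Your reverse-topological induction assembles precisely these ingredients in the intended way, so it matches the paper's (unwritten) reasoning rather than offering a genuinely different route.
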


\subsubsection{Sensitivity of modeling error on network parameters}

In order to train the generalized perceptron some unit operations $f_u$ should depend on parameters $W$ which are modified during the training process. It means that we expect the basic action $f_u$ in the form $X_u\ass f_u(X_u',W).$ For parameter optimization the stochastic gradient techniques are used as and they  expect that for any training inputs the error gradient  $\od{\cl{E}}{W}$ is computed.

\begin{theorem}[error gradient wrt parameters from unit data sensitivity]
If the computing unit $u\in\cl{U}$ depends on parameters $W$, i.e. $X_u\ass  f_u(X_u',W)$ and the sensitivity of the modeling error $\cl{E}$ wrt $X_u$  is measured by $\od{\cl{E}}{X_u}$ then
\begin{equation}
\od{\cl{E}}{W} = J_{f_u}^w(X_u',W)\od{\cl{E}}{X_u}
\end{equation}
where the Jacobian matrix $J_{f_u}^w$ computed wrt the parameters $W$ is defined as follows:
\begin{equation}
\begin{array}{l}
W\inv{n_w},\ X_u\inv{n_u},\  X_u\ass f_u(X'_u,W) \lra\\[10pt]
\left(J_{f_u}^w(X'_u,W)\right)_{ij} \eqd \sp{\left(f_u(X'_u,W)\right)_j}{w_i},\ i\in[n_w], j\in[n_u]
\end{array}
\end{equation}
If $W$ is shared by units in the set $\cl{U}_w$ the gradient of the error function $\cl{E}$ wrt to parameters $W$ equals to:
\begin{equation}
\od{\cl{E}}{W} = \sum_{v\in\cl{U}_w}J_{f_v}^w(X_v',W)\od{\cl{E}}{X_v}
\end{equation}
\end{theorem}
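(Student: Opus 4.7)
The plan is to apply the multivariate chain rule to the forward computation graph, mirroring the derivation of equation~\eqref{jacoby} but differentiating with respect to $W$ instead of the input $X'$. I first dispatch the case $\cl{U}_w = \{u\}$, where only $u$ uses $W$: because every other unit is independent of $W$, the buffer $X_u'$ is determined upstream of $u$ and does not depend on $W$, so the whole dependence of $\cl{E}$ on $W$ passes through the single edge $W \mapsto X_u = f_u(X_u',W) \mapsto \cl{E}$. Applying the ordinary chain rule to this composition yields
\[
\od{\cl{E}}{W} \;=\; J_{f_u}^w(X_u',W)\,\od{\cl{E}}{X_u},
\]
with $J_{f_u}^w \inm{n_w}{n_u}$ exactly as defined in the statement; dimensions give $\od{\cl{E}}{W} \inv{n_w}$, and $\od{\cl{E}}{X_u}$ is the content that the gradient dual network of section~\ref{sec:dgp} already deposits into the dual buffer $\ov{X}_u$ after one backward pass.

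For a shared $W$, I would reduce to the single-unit case by a \emph{clone-and-sum} device: introduce a distinct dummy $W_v$ for every $v \in \cl{U}_w$, and define $\tilde{\cl{E}}(\ldots,W_v,\ldots)$ to be $\cl{E}$ recomputed on the forward pass in which unit $v$ consumes $W_v$ in place of $W$. Because $W_v$ now appears only in unit $v$, the single-unit argument above applies verbatim at each site, giving
\[
\od{\tilde{\cl{E}}}{W_v}\bigg|_{\mathrm{diag}} \;=\; J_{f_v}^w(X_v',W)\,\od{\cl{E}}{X_v},
\]
where $\mathrm{diag}$ denotes the substitution $W_{v} = W$ for every $v \in \cl{U}_w$. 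Since $\cl{E}$ is recovered from $\tilde{\cl{E}}$ by that diagonal substitution, the total derivative rule collapses to $\od{\cl{E}}{W} = \sum_{v\in\cl{U}_w}\od{\tilde{\cl{E}}}{W_v}\big|_{\mathrm{diag}}$, which is the claimed formula.

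The main obstacle is ensuring that no sensitivity contribution is double-counted when $W$ is shared. In the original graph the parameter is injected at several sites, and an upstream input $X_v'$ with $v\in\cl{U}_w$ may itself depend on $W$ through earlier units in $\cl{U}_w$. The clone-and-sum construction resolves this transparently: after cloning, the $W_v$ become genuinely independent variables, so $\od{\tilde{\cl{E}}}{W_v}$ is an honest partial derivative with all other $W_{v'}$ (and hence with $X_v'$) held fixed, and the upstream re-injections of $W$ are tallied separately as the corresponding terms in the sum. Applying the diagonal chain rule then produces each of the $|\cl{U}_w|$ summands exactly once, as required.
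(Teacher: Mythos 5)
Your proof is correct, and its mathematical content --- the chain rule applied at each use-site of $W$, plus additivity over the sites --- coincides with the paper's. The packaging differs: the paper runs a gedankenexperiment in which $W$ becomes the buffer of an extra input unit $u_w=(;;W)$, each parameterized unit acquires $W$ through a merger during normalization, and the Jacobian is split into its parameter block $J^w_{f_u}$ and data block $J^x_{f_u}$ so that the splitter dual to that merger peels off exactly the claimed formula; the shared case is then dispatched by observing that $\cl{U}_w$ is the multi-casting group of $u_w$, so the previously established proposition on gradients through multi-casting groups (in-casting aggregates by summation) yields $\sum_{v\in\cl{U}_w}J_{f_v}^w(X_v',W)\od{\cl{E}}{X_v}$ for free. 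Your clone-and-sum device is the pure-calculus rendering of that same multi-casting decomposition: the dummy variables $W_v$ play the role of the per-consumer arguments $X_u^v$ in the paper's proposition, and the diagonal substitution is the multi-cast. What the paper's route buys is that the parameter gradient falls out of the dual network already built for data gradients, with no new machinery; what yours buys is independence from that formalism and an explicit argument for why nothing is double-counted when an upstream $X_v'$ itself depends on $W$ through earlier members of $\cl{U}_w$ --- a point the paper leaves implicit in its appeal to the multi-casting proposition.
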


\begin{proof}
Without the change of dual generalized perceptron which can compute the computing gradients for data buffers of all units we can compute the gradients wrt parameters $W$ by the following gedankenexperiment:
\begin{enumerate}
  \item Assume that  $W$ is the data buffer of some extra input unit $u_w$ with the input buffer $W$, i.e. $u_w\eqd(;;W).$ 
  \item Now the parameter dependent units get at least two arguments.
  \item During normalization any parameterized unit will get its single argument from the merger.
  \item Each merger before the parametrized unit has the input from extra input unit of type $u_w$.
  \item In the dual network this input of merger becomes the output of splitter which splits this part of gradient which corresponds to $W$. This part can be deduced form Jacobian equation \eqref{jacoby} separated into parameter and non-parameter parts:
  \[
  \od{\cl{E}}{X',W} = J_{f_u}(X',W)\od{\cl{E}}{X} =
  \left[
  \begin{array}{l}
  J_{f_u}^w(X',W)\od{\cl{E}}{X}\\[5pt]
  J_{f_u}^x(X',W)\od{\cl{E}}{X}
  \end{array}
  \right]
\]
where
\[
\begin{array}{l}
W\inv{n_w},\ X_u\inv{n_u},\  X_u\ass f_u(X'_u,W) \lra\\[10pt]
\left(J_{f_u}^w(X'_u,W)\right)_{ij} \eqd \sp{\left(f_u(X'_u,W)\right)_j}{w_i},\ i\in[n_w], j\in[n_u]
\end{array}
\] 
\item Finally, the gradient of error function wrt to parameters $W$ equals to:
\[
\od{\cl{E}}{W} = J_{f_u}^w(X_u',W)\od{\cl{E}}{X_u}
\]
\item If $W$ is shared by units in the set $\cl{U}_w$ then actually $\cl{U}_w$ is the multi-casting group for the unit $u_w$. Hence the gradient of the error function wrt to the parameters $W$ equals to:
\[
\od{\cl{E}}{W} = \sum_{v\in\cl{U}_w}J_{f_v}^w(X_v',W)\od{\cl{E}}{X_v}
\]
\end{enumerate}
\end{proof}


\section{Tensor data for digital media applications\label{sec:grads}}

Tensor $T$ of $d$ dimensions is a generalization of vector and matrix to $d$ dimensions. Formally the tensor of real values is a mapping defined on a set of $d$-dimensional indexes $I_T$ (aka multi-indexes or index vectors) to reals $\bb{R}$, i.e. $T\inv{I_T}:$
\begin{equation}
T:I_T\ra \bb{R},\ I_T\subseteq[n_1]\times\dots\times[n_d]
\end{equation}
where $n_i$ is the resolution of $i$-th axis. 

\subsection{Tensor axes in digital media applications}

In digital media applications the {\it index axes} beside the sequential id -- axis $1$, $\dots$, axis $d$ -- they get the symbolic id, e.g.: axis $x$, axis $y$, (attribute) axis $a$ , (batch) axis $b$. The ordering of axes  symbols (called as {\it axes signature}) implies the sequential id. For instance, the  batch of $n_b=10$ color images of pixel resolution $640\times 480$ ($n_x=640,$ $n_y=480$) with RGB color attribute $n_a=3$ is 4D tensor $T$ with the signature $byxa,$ is the mapping 
$T:[10]\times[480]\times[640]\times[3]\ra\bb{R}$ with $T[b,y,x,a]$ identifying its arbitrary element if $b\in[10],$ $y\in[480],$ $x\in[640],$ $a\in[3].$ The interpretation of tensor's content in this case is the level of the given spectrum channel, i.e. the level of color component.

In signal neural networks there are observed trends for generalizations of signal dimensions (axes). Nowadays,  we can specify the following types of input and activation tensors $T\inv{I}$ where $I$ is the set of index vectors of the fixed length:
\begin{enumerate}
  \item $T\inv{N_a}$ -- feature tensor with $N_a$ attributes,
 \item $T\inv{N_a\times I_s}$ -- feature tensor of $N_a$ attributes for signal samples indexed by $I_s,$ where $I_s$ is the signal domain indexes -- in particular:
 \begin{enumerate} 
  \item $I_s=[N_x]\lra T\inv{N_a\times N_x}$ -- feature tensor of $N_a$ attributes  for $N_x$ samples of 1D signal,
  \item $I_s=[N_x]\times[N_y]\lra T\inv{N_a\times N_y\times N_x}$ -- feature tensor of $N_a$ attributes  for $N_x\cdot N_y$ pixels in 2D image,
  \item $I_s=[N_z]\times[N_y]\times[N_x]\lra T\inv{N_a\times N_z\times N_y\times N_x}$ -- feature tensor of $N_a$ attributes  for $N_z\cdot N_y\cdot N_x$ voxels in 3D image,
  \item $T\inv{N_a\times N_t\times \dots}$ -- temporal sequence of $N_t$ tensors in one of the above types of tensors,
  \end{enumerate}
  \item $T\inv{N_b\times N_a\times I_s}$ -- batch of $N_b$  feature tensors of $N_a$ attributes for signal samples indexed by $I_s$.
\end{enumerate}

In the tensor approach, the computing  unit transforms its input data buffer representing a tensor $T_0\inv{I_0}$ into its output data buffer representing tensor $T_1\inv{I_1}.$ The output index $I_1=[N_a']\times I_s'$ is determined from $I_0=[N_a]\times I_s$, in general, for each type of processing in a specific way. In case of batch processing the batch size is never changed: 
\[I_0=[N_b]\times [N_a]\times I_s\lra I_1=[N_b]\times [N_a']\times I_s'\]

\subsection{Parameter tensors and linear operations}

\subsubsection{Multiplication (composition) of tensors}

The tensor multiplication  (composition) $C\eqd A\cdot B \eqd AB$ is defined by generalization of matrix by matrix multiplication where the scalar indexes are replaced by vector indexes (multi-indexes). If tensor $A$ is defined on the set of multi-indexes of the form $I_A\eqd I_0\times I_1$ while the tensor $B$ on the multi-indexes $I_B\eqd I_1\times I_2$ then the tensor $C$ by the definition is specified for the indexes in the set $I_C\eqd I_0\times I_2$ by the formula: 
\begin{equation}\label{eq:ten-dot}
C[p,r] = \sum_{q\in I_1}A[p,q]B[q,r],\ p\in I_0,\ r\in I_2
\end{equation}
If we have a translation tensor $T\inv{I_0\times I_2}$ then the affine operation is defined as follows:
\begin{equation}\label{eq:ten-dott}
C[p,r] = \sum_{q\in I_1}A[p,q]B[q,r] + T[p,r],\ p\in I_0,\ r\in I_2
\end{equation}

Let us observe that the tensor composition of tensor $A\inv{I_A}$ and $B\inv{I_B}$ is defined wrt $I_1$ -- the common subindex of $A$ and $B$: $I_A\eqd I_0\times I_1,$ $I_B\eqd I_1\times I_2.$ Therefore for the given $I_A$ and $I_B$ we can define many composition operations of different output tensor represented by index decomposition $I_0\times I_2$. For instance for $I_A=I_B=[n]^d$ we get $d$ possible composition operations, including the dot product if $I_0=\text{\O}=I_2.$ 
However, not for any trailing index $q$ in $I_A$ we can always find the same leading index in $I_B$ and opposite. Depending on this "trailing-leading index compatibility" we can get from $0$ to $\min(d_A,d_B)$ different composition operations, where $d_X$ is the index length $|i|$, the same for any $i\in I_X.$ 

For the digital media applications discussed in this tutorial the composition operation \eqref{eq:ten-dot} is used only by so called {\it full connection layer} (aka dense layer)  when $I_A=[n_{out}]\times I_B$. Then $I_0=[n_{out}],$ $I_1=I_B,$ $I_2=\text{\O},$ and $A$ is the parameter tensor  matching input tensor $B$ to produce the output vector $C\inv{n_{out}}$ in the composition operation \eqref{eq:ten-dot}.

\subsubsection{Parameter tensor for convolution}

The convolution is the generalization of the single kernel (mask) linear filter, implementing the dot product of the kernel coefficients with relevant slices of the input signal. 

Referring to different tensor types we can distinguish and suitably interpret,  the following types of $W$ tensors for convolutions\footnote{The translation tensor $B\inv{N_a'}$ is actually the vector with the single scalar value for each output attribute.}. The general shape of kernel tensor $W$ has the form $W\inv{N_a'\times N_a\times J_s},$ where $J_s$ is the signal domain of the kernel:
\begin{enumerate}
  \item $J_s=[n_x]\lra W\inv{N_a'\times N_a\times n_x}$ -- the tensor of $N_a'$ kernels for 1D signal convolution implemented in the window of $n_x$ samples, together for all $N_a$ input attributes\footnote{As the result $N_a$ input features of any sample are linearly transformed to its $N_a'$ output features.},
  \item $J_s=[n_y]\times[n_x]\lra W\inv{N_a'\times N_a\times n_y\times n_x}$ -- the tensor of $N_a'$ kernels for 2D signal convolution implemented in the window of $n_y\times n_x$ samples, together for all $N_a$ input attributes,
  \item $J_s=[n_z]\times[n_y]\times[n_x]\lra W\inv{N_a'\times N_a\times n_z\times n_y\times n_x}$ -- the tensor of $N_a'$ kernels for 3D signal convolution implemented in the window of $[n_z]\times [n_y]\times [n_x]$ samples, together for all $N_a$ input attributes\footnote{In our review of applications the 3D convolutions are used only in image compression when the entropy of quantized features is estimated. However, then the $z$ axis is identified with $a$ axis at entropy optimization. Note that like for $n_x,n_y$, $n_z$ is a small fraction of $N_z.$},
\item $J_s=[n_t]\times .\lra W\inv{N_a'\times N_a\times n_t\times .}$ -- the tensor of $N_a'$ kernels for in time 1D/2D/3D signal convolution implemented in the window of the given shape together for all $N_a$ input attributes.
\end{enumerate}

If $I_Y\eqd I_Y^a \times I_Y^s,$ denotes the set of multi-indexes of the output tensor decomposed into the signal domain $I_X^s$ and in the attribute domain $I_X^a$ then for the convolution operation, the indexes $I_W$ of the kernel window (e,g. $I_W=[n_y]\times [n_x]$) are used to locate input tensor elements for the local dot products \footnote{In the formula we assume: (a) the domain of the output signal tensor $Y$ fits to the domain of the input tensor $X$ and to the padding mode at the nD signal boundaries, (b) indexes begin from zero, (c) addition of multi-indexes is vectorial.}:
\begin{equation}\label{eq:conv-xd}
\begin{array}{l}
\ds
Y[f',p] \ass \sum_{f\in[N_a]}\sum_{q\in J_s} W[f',f,q]\cdot X[f,p+q]+B[f']\\[5pt] 
f'\in [N_a'], \forall q\in J_s\left[ p\in I_Y^s \lra (p+q)\in I_X^s\right]
\end{array}
\end{equation}
In case of batch tensors, the formula for the convolution operation has the form:
\begin{equation}\label{eq:conv-bxd}
\begin{array}{l}
\ds
Y[b,f',p] \ass \sum_{f\in[N_a]}\sum_{q\in J_s} W[f',f,q]\cdot X[b,f,p+q]+B[f']\\[5pt] b\in[N_b], f'\in [N_a'], \forall q\in J_s\left[ p\in I_Y^s \lra (p+q)\in I_X^s\right]
\end{array}
\end{equation}

\subsection{Tensor representation of data buffer}
The tensor $T\inv{I_T}$ to be processed is represented (stored) as data memory block (data buffer) $X\inv{|I_T|}.$ There are many ways to store elements of $T$ into $X$ like there is many one-to-one mapping from the set of multi-dimensional indexes $I_T$ to the set of scalar indexes $\left[|I_T|\right].$ Further we call such mapping as {\it the addressing scheme}.

\begin{definition}[strided addressing]
In practice of tensor processing packages the {\it affine addressing scheme} is used (aka {\it strided addressing}):
\begin{equation}\label{astrided}
a\eqd address(i_1,\dots,i_d) \eqd o+\tp{s}i = o +\sum_{j=1}^ds_ji_j \lra X[a] = T[i_1,\dots,i_d]
\end{equation}
where the stride vector $s\in\bb{N}^d$ and the index vector (multi-index) $i\in\bb{N}^d.$ The offset $o$ means that the data of tensor $T$ begins from the element $X[o]$ in the data memory block $X$. The $address$ function returns the index in the data buffer $X$ -- it is not the byte offset from the beginning of $X$. 
\end{definition}

\begin{definition}[monotonic addressing  wrt lexicographic order of indexes]
Let $I_T\subseteq[n_1]\times\dots[n_d]$ be the index domain of the tensor $T:I_T\ra\bb{R}.$ The {\it lexicographic order} $\prec$ of $I_T$ is defined in the standard way:
\[
(i_1,\dots,i_d)\prec(j_1,\dots,j_d)\ \text{ iff }\ i_1<j_1\ \text{ or } \exists k\in\langle 1,d), i_1=j_1,\dots,i_k=j_k, i_{k+1}<j_{k+1}
\]
We say that {\it addressing scheme is monotonic wrt  the lexicographic order} 
\[
(i_1,\dots,i_d)\prec(i'_1,\dots,i'_d) \lra  address(i_1,\dots,i_d)<address(i'_1,\dots,i'_d)
\] 
Briefly, the {\it addressing scheme is lexicographic} iff it is monotonic wrt  the lexicographic order.
\end{definition}

It is interesting that there is only one strided addressing scheme which is lexicographic, i.e. it preserves the lexicographic order.
\begin{theorem}[addressing scheme vs. lexicographic order]
Let $I_T=[n_1]\times\dots[n_d]$. The the strided addressing given by the formula \eqref{astrided} is monotonic wrt the lexicographic ordering of $I_T$ iff 
\begin{equation}
s_{d}=1, s_{j-1}=n_{j}s_j, j=d,\dots,2
\end{equation}
The explicit form of the above strides is given by the equations:
\begin{equation}\label{lexstrides}
s_d=1,\ s_j=\prod_{k=j+1}^dn_k,\ j=1,\dots,d-1
\end{equation}
The above strides are the weights of the {\it mixed radix representation} for integers $a\in\langle 0,n_1\cdot\dots\cdot n_d).$ They define the unique lexicographic strided addressing scheme. 
\end{theorem}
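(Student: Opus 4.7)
The plan is to prove both implications simultaneously, using the implicit requirement (inherited from the earlier definition of \emph{addressing scheme}) that the strided map $i \mapsto o + \sum_j s_j i_j$ is a bijection between $I_T$ and the buffer index set $\{0,\ldots,N-1\}$, where $N = n_1 \cdots n_d$. The technical core in both directions is the telescoping identity
\[
s_k = 1 + \sum_{j=k+1}^{d} s_j(n_j - 1),
\]
which holds under the hypotheses $s_d = 1$ and $s_{j-1} = n_j s_j$ by writing $\sum_{j=k+1}^{d} s_j(n_j-1) = \sum_{j=k+1}^{d}(s_{j-1} - s_j) = s_k - s_d = s_k - 1$.

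For the ``if'' direction, assume the stated recurrence on the strides. Given $i \prec i'$, let $k$ be the smallest coordinate at which the two multi-indexes differ, so $i_k < i'_k$ while $i_j = i'_j$ for $j < k$. Then
\[
\mathrm{address}(i') - \mathrm{address}(i) = s_k(i'_k - i_k) + \sum_{j > k} s_j(i'_j - i_j) \geq s_k - \sum_{j > k} s_j(n_j - 1) = 1,
\]
using $i'_k - i_k \geq 1$, $|i'_j - i_j| \leq n_j - 1$, and the telescoping identity. Strict monotonicity follows at once. The explicit closed form $s_j = \prod_{k=j+1}^{d} n_k$ is then just an unrolling of the recurrence with base case $s_d = 1$.

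For the ``only if'' direction, assume the strided addressing is monotonic and, as required for an addressing scheme, a bijection onto $\{0,\ldots,N-1\}$. A monotonic bijection between two totally ordered finite sets of equal cardinality must send the $k$-th smallest element to $k$, so $\mathrm{address}$ must agree with the lexicographic rank. Evaluating the affine formula at $(0,\ldots,0)$ forces $o = 0$; at $(0,\ldots,0,1)$ it forces $s_d = 1$; and at the multi-index with a single $1$ in position $j-1$ and zeros elsewhere (whose lex rank is $\prod_{k=j}^{d} n_k$) it forces $s_{j-1} = \prod_{k=j}^{d} n_k = n_j s_j$. This yields the full recurrence and, upon unrolling, the closed form.

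The genuine subtlety is not computational but structural: without the bijectivity assumption, monotonicity alone is too weak, since any uniform positive rescaling of the strides still yields a strictly monotonic affine map. The uniqueness claim in the theorem therefore rests crucially on reading the word ``addressing'' in the strict sense fixed earlier in the paper, and making this dependency explicit is the only real hurdle in the argument.
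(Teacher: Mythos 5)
Your argument is correct. Note that the paper states this theorem without providing any proof, so there is no in-paper argument to compare against; your proposal fills that gap. Both directions check out: the telescoping identity $\sum_{j=k+1}^{d}s_j(n_j-1)=s_k-1$ is exactly what makes the first-differing-coordinate estimate work in the ``if'' direction, and the lex-rank evaluation at the unit multi-indexes correctly recovers $o=0$, $s_d=1$, and $s_{j-1}=\prod_{k=j}^{d}n_k=n_js_j$ in the ``only if'' direction. You are also right to flag the structural point: strict monotonicity alone does not force the recurrence (uniform rescaling of the strides preserves it), so the ``only if'' direction and the uniqueness claim genuinely depend on the paper's definition of an addressing scheme as a one-to-one map from $I_T$ onto $\left[|I_T|\right]$ — which the paper does fix in the paragraph preceding the strided-addressing definition — together with zero-based indexing of $[n_j]$, consistent with the mixed-radix interpretation. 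Making that dependency explicit, as you do, is the right call.
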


\begin{proposition}[from address to index]
Let $I_T=[n_1]\times\dots[n_d]$ and  $n\eqd n_1\cdot\dots\cdot n_d$\ . We assume the strided lexicographic addressing  of tensor indexes. Then for any address $a\in\langle 0,n)$ of an element in tensor $T$ its indexes $[i_1,\dots,i_d]$ can be computed by {\it  the mixed radix algorithm}: 
$
\left\{
\begin{array}{l}
b\ass a\\
for\ k=d,\dots,1:\\
\hspace*{7mm}(b,i_k)\ \ass\ (b\div n_k,\ b\bmod n_k)
\end{array}
\right.
$
\end{proposition}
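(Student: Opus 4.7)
The plan is to verify by induction on $d$ that the mixed radix loop inverts the lexicographic strided addressing $a = \sum_{j=1}^{d} s_j i_j$ whose strides, by the preceding theorem, satisfy $s_d = 1$ and $s_j = \prod_{k=j+1}^{d} n_k$ for $j < d$. The base case $d=1$ is immediate: with $s_1=1$ and $0 \le i_1 < n_1$ one has $a = i_1$, so the single iteration returns $i_1 = a \bmod n_1$ and residual $a \div n_1 = 0$.

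For the inductive step, I would isolate the last index by exploiting the divisibility structure of the strides: every $s_j$ with $j < d$ is a multiple of $n_d$, while $s_d = 1$. Rewriting
\[
a \;=\; i_d \;+\; n_d \sum_{j=1}^{d-1} s'_j i_j, \qquad s'_j \eqd s_j/n_d = \prod_{k=j+1}^{d-1} n_k,
\]
and using $0 \le i_d < n_d$, one reads off $i_d = a \bmod n_d$ and $a \div n_d = \sum_{j=1}^{d-1} s'_j i_j$, which is exactly the action of the loop iteration at $k=d$. Crucially, the reduced strides $s'_j$ are precisely the lexicographic strides of the truncated shape $[n_1] \times \cdots \times [n_{d-1}]$ supplied by the previous theorem, so the residual address $a \div n_d$ encodes $(i_1,\dots,i_{d-1})$ in the same strided lexicographic form but one dimension lower. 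The inductive hypothesis then guarantees that the remaining iterations $k=d-1,\dots,1$ correctly extract $i_{d-1},\dots,i_1$.

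The only bookkeeping required is to confirm that the residual $a \div n_d$ lies in $[0,\,n_1 \cdots n_{d-1})$ so that the inductive hypothesis applies; this follows from $a < n_1\cdots n_d$ together with $n_d \mid (a - i_d)$. I do not anticipate any substantive obstacle: the claim is essentially the correctness of the mixed radix expansion, and once the divisibility of the strides supplied by the preceding theorem is unpacked, the induction runs by a routine calculation.
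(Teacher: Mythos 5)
Your proof is correct. The paper in fact states this proposition without any proof (it is presented as an immediate consequence of the preceding theorem giving the lexicographic strides $s_d=1$, $s_j=\prod_{k=j+1}^d n_k$), so there is nothing to compare against; your induction on $d$ — peeling off $i_d = a \bmod n_d$ via the divisibility of all earlier strides by $n_d$, observing that the reduced strides $s_j/n_d$ are exactly the lexicographic strides of the truncated shape, and checking that the residual quotient stays below $n_1\cdots n_{d-1}$ — is the standard argument and is complete as written.
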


\begin{definition}[tensor view for a data buffer]
Let $X$ be the data buffer with $n$ elements. The tensor $T\inv{n_1\times\dots\times n_d}$ with $n = n_1\cdot\dots\cdot n_d$ is {\it the tensor view of the data buffer} $X$ iff
there exists the strides vector $s\in\bb{N}^d$: 
\[
T[i] = X[\tp{s}i],\ \text{ for any }\ i\in [n_1]\times\dots\times[n_d]
\]
\end{definition}
The strided addressing schemes which are not lexicographic can define interesting tensor views $T$. For instance:
\begin{enumerate}
  \item {\it RGB image planes.}\\
  If the data buffer $X\inv{3n_xn_y}$ represents in the lexicographic order RGB image $I_{RGB}\in\langle 0,255\rangle^{n_y\times n_x\times 3}$ with the strides $s_1=3n_x,s_2=3,s_3=1$ then the same image with tensor view $I'_{RGB}\in\langle 0,255\rangle^{3\times n_y\times n_x}$ is defined as the view of $X$ by strides $s'$: 
  \[s_1'=s_3=1,\ s_2'=s_1=3n_x,\ s_3'=s_2=3\]
  In $I'_{RGB}$ the pixel color components are virtually unpacked to image color components.
\item  {\it Image transposition.}\\If the data buffer $X\inv{n_xn_y}$ represents in the lexicographic order the luminance image $Y\in\langle 0,255\rangle^{n_y\times n_x}$ with the strides $s_1=n_x,s_2=1$ then the same image with tensor view $Y'\in\langle 0,255\rangle^{n_x\times n_y}$ is defined as the view of $X$ by strides $s'$: 
 $s_1'=s_2=1,\ s_2'=n_x.$ This view actually implements the transposition of image matrix $Y$ without moving its elements:$Y'=\tp{Y}.$
 \item{\it  Luminance to RGB buffer}.\\Certain graphics system to display the luminance image $Y$ require the conversion of $Y\in\langle 0,255\rangle^{n_y\times n_x}$ into $RGB\in\langle 0,255\rangle^{n_y\times n_x\times 3}$ image with components $R=G=B=Y.$ The tensor view $Y_{RGB}$ with strides $s_1=n_x,s_2=1,s_3=0$ provides the necessary sequence of values to fill the graphics buffer without creation of intermediate RGB image in the application memory.
 \item {\it Temporal signal windowing}.\\If $X\inv{n}$ is the data buffer which represents the $n$ samples of temporal signal and $n=k\cdot\tau,$ i.e. the signal can be divided into $k$ temporal windows of length $\tau$ then the tensor view $X_w\inm{k}{\tau}$ representing the windows and the samples in the windows together is defined by strides: $s_1=\tau, s_2=1.$
 \item {\it Sliding window for 1D signal and 1D convolution.}\\
Let $X\inv{n_x}$ be the discrete signal of length $n_x$ and $W\inv{n_w}$ be the convolution kernel of length $n_w<n_x,$ to be applied with offset $d_x$. Then the output signal is computed in the positions $i\cdot d_x$ such that $i\cdot d_x+n_w\leq n_x$. Hence the filtered signal $Y\inv{n_{out}}$, where $n_{out}= 1+\lfloor(n_x-n_w)/d_x\rfloor.$
The sliding window (without padding) is the tensor view $X_s\inm{n_{out}}{n_w}$ with the strides $s_1=d_x, s_2=1$. Then the convoluted signal can be computed as $Y=X_s\cdot W$, i.e. the convolution operation is implemented by the matrix $X_s$ composed with the vector $W$.
\item {\it Sliding window for 2D signal and 2D convolution.}\\
Sliding window with offsets $d_x, d_y$ for the luminance image $X\inm{n_y}{n_x}$ and 2D convolution kernel $W\inm{n_h}{n_w}$ can be also arranged to implement the 2D convolution by the multiplication of a 4D tensor view by a 2D kernel (if the full index matching is used for tensor multiplication) or 3D tensor by 1D vector (if 1d index matching is allowed for multiplication). Namely, like for 1D convolution, the filtered image $Y\inm{n_y'}{n_x'}$ has the shape 
\[n_y' = 1+\lfloor(n_y-n_h)/d_y\rfloor,\ \ n_x' = 1+\lfloor(n_x-n_w)/d_x\rfloor\ .\]
The sliding window now results in 4D tensor view $X_s\in\bb{R}^{n_y'\times n_x'\times n_h\times n_w}$ of $X$, defined by strides $s_1=d_yn_x,$ $s_2=d_x,$ $s_3=n_x,$ $s_4=1.$  In the full index matching we get the filtered luminance image $Y=X_sW$. If only 1D indexes are matched then we have to change the sliding window view of $X$  from 4D tensor $X_s$ to 3D tensor $X_s'\in\bb{R}^{n_y'\times n_x'\times n_h\cdot n_w}$ with strides $s_1=d_yn_x,s_2=d_x,s_3=1$ and the 2D kernel $W$ to 1D vector view $W'$ with strides $s_1'=n_w,$ $s_2'=1.$ Then the filtered luminance $Y=X_s'W'.$
\end{enumerate}

\begin{definition}[index range]
Let $a,b,r\in\bb{Z},$ $r\neq 0$, $(b-a)\cdot r>0$. Then the notation $a:b:r$ denotes  the range of integers, aka the arithmetic sequence of integers $x_i$. It begins from $a$, moves by $r$ and is strictly bounded by $b$:
\[
\begin{array}{l}
x_0 = a,\ x_i = x_{i-1}+r,\ i>0\\
r>0 \lra x_i<b,\ \ r<0 \lra x_i>b, i>0
\end{array}
\]
\end{definition}

\begin{definition}[tensor slice]
If $I_T= [n_1]\times\dots\times[n_d]$ be the index of the tensor view $T$ for the data buffer $X\inv{n}$ with the strides $s_1,\dots,s_d$. Let also $I_T^j\eqd (a_j:b_j:r_j)\subseteq[n_j],$ $j=1,\dots,d,$ be ranges. Then $T[I_T^1,\dots,I_T^d]$ is the {\it tensor slice} which is also the tensor view $T'$ of $X$ with the following address function wrt to the buffer $X$:
\begin{equation}
address(i_1,\dots,i_d) = \tp{s}(a+r\odot i)
\end{equation}
where $i_j\in[k_j],$ $k_j=1+\left(b_j-a_j-\text{sign}(r_j)\right)\div r_j$, $j=1,\dots,d,$ and $\odot$ is component-wise multiplication. Hence the tensor view $T'\in[k_1]\times\dots\times[k_d].$
\end{definition}

\begin{proposition}[tensor slice is always nonempty]
The conditions for the index range $r_j\neq 0$, $(b_j-a_j)\cdot r_j>0$ and the range inclusion $I_T^j\subseteq[n_j]$ imply:
\begin{itemize}
\item $1\leq k_j\leq n_j$
\item $k_j=1$ iff $|b_j-a_j|\leq |r_j|$ 
\item $k_j=n_j$ iff $\left[|r_j|=1\ \text{ and }\ r_j(b_j-a_j)=n_j\right]$
\end{itemize}
Hence, the tensor slice is always nonempty. However, the tensor could be equal to the original tensor view or even it could be $d$-dimensional tensor with the single element $T[a_1,\dots,a_d].$
\end{proposition}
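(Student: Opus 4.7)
The plan is to verify each of the three bullets separately by splitting into the two cases $r_j>0$ and $r_j<0$, using only the arithmetic of the definition
\[
k_j=1+\bigl(b_j-a_j-\text{sign}(r_j)\bigr)\div r_j,
\]
together with the hypotheses $r_j\neq 0$, $(b_j-a_j)\cdot r_j>0$ and $I_T^j=(a_j\!:\!b_j\!:\!r_j)\subseteq[n_j]$. The nonemptiness of the slice then follows immediately once we have the inequality $k_j\geq 1$ for every axis $j$, since the slice is a product $\prod_{j=1}^d[k_j]$ of nonempty index sets.

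First I would establish $k_j\geq 1$. The sign hypothesis $(b_j-a_j)\cdot r_j>0$ forces $b_j-a_j$ to be a nonzero integer with the same sign as $r_j$, so $|b_j-a_j|\geq 1$ and $b_j-a_j-\text{sign}(r_j)$ has the same sign as $r_j$ (or is zero). Dividing two integers of the same sign by integer division yields a nonnegative result, giving the bound. The characterization $k_j=1\iff|b_j-a_j|\leq|r_j|$ is then a direct inspection: for $r_j>0$ one rewrites $k_j=1+\lfloor(b_j-a_j-1)/r_j\rfloor$, which equals $1$ exactly when $0\leq b_j-a_j-1<r_j$, i.e.\ $1\leq b_j-a_j\leq r_j$. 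The $r_j<0$ case is symmetric after substituting $r_j\mapsto -r_j$, $b_j\mapsto 2a_j-b_j$.

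Next I would prove $k_j\leq n_j$. By construction the arithmetic sequence $a_j,a_j+r_j,\dots,a_j+(k_j-1)r_j$ consists of $k_j$ \emph{distinct} integers (because $r_j\neq 0$), and each one lies in $[n_j]$ by the inclusion $I_T^j\subseteq[n_j]$. A set of distinct integers contained in a set of cardinality $n_j$ has at most $n_j$ elements, hence $k_j\leq n_j$. For the equality characterization, $k_j=n_j$ means the sequence fills \emph{all} of $[n_j]$. Since the sequence is an arithmetic progression of common difference $r_j$ and its image is $[n_j]=\{0,1,\dots,n_j-1\}$, one must have $|r_j|=1$; moreover the endpoints of the progression are $0$ and $n_j-1$, so either $(a_j,b_j,r_j)=(0,n_j,1)$ or $(a_j,b_j,r_j)=(n_j-1,-1,-1)$. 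In both sub-cases $r_j(b_j-a_j)=n_j$. Conversely, if $|r_j|=1$ and $r_j(b_j-a_j)=n_j$ then substitution into the defining formula gives $k_j=n_j$.

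The only mildly delicate point, and the main place one could slip, is pinning down the convention of the integer-division symbol $\div$ together with $\text{sign}(r_j)$ so that the case $r_j<0$ really does mirror $r_j>0$; this requires one to verify (for instance via $\lfloor -x\rfloor=-\lceil x\rceil$) that the same floor-style rounding that works for positive strides also yields the correct count of sequence members strictly bounded by $b_j$ when $r_j<0$. Once that bookkeeping is settled, all three bullets are elementary, and the concluding sentence about the possibility $k_j=1$ for every $j$ (reducing the slice to the single element $T[a_1,\dots,a_d]$) follows at once from the $k_j=1$ characterization with $|b_j-a_j|\leq|r_j|$ on each axis.
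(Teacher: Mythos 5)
The paper states this proposition without any proof, so there is no in-paper argument to compare against; judged on its own, your proposal is correct and supplies the missing verification. The structure you choose -- sign case analysis on $r_j$ for the lower bound and the $k_j=1$ characterization, then distinctness-plus-containment of the arithmetic progression for $k_j\leq n_j$ and the extremal characterization -- is the natural one. Your worry about the semantics of $\div$ resolves exactly as you suggest: under the hypotheses, $b_j-a_j-\text{sign}(r_j)$ and $r_j$ always have the same sign or the numerator vanishes, so floor division and truncation agree, and $k_j=1+\lfloor(b_j-a_j-1)/r_j\rfloor$ (for $r_j>0$, with the mirror formula for $r_j<0$) counts exactly the terms $a_j+ir_j$ strictly bounded by $b_j$. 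One caveat worth recording: the forward direction of the third bullet tacitly assumes $n_j\geq 2$. Your step ``one must have $|r_j|=1$'' rests on the span comparison $(n_j-1)|r_j|\leq n_j-1$, which is vacuous when $n_j=1$; indeed for $n_j=1$ the range $(0\!:\!b_j\!:\!r_j)$ with any $r_j>b_j>0$ is the singleton $\{0\}$, so $k_j=1=n_j$ while $|r_j|$ is arbitrary and $r_j(b_j-a_j)\neq n_j$. This is a defect of the proposition as stated rather than of your argument, but an honest write-up should either exclude $n_j=1$ from the third bullet or note the exception explicitly.
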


\subsection{Gradient duals for tensor computing units}

The dual network  is defined via dual operations for computing units and dual DAG supported by  duals for data multi-casting, merging, and splitting operations (cf. section \ref{sec:dgp}). While the latter three are common for all generalized perceptrons, the gradient dual of computing unit is specific for the function the unit computes. 

Assuming our data buffer model for processing units which represent tensor views, we can observe that in daily practice of neural computing  two types of functions of form $f:\bb{R}^{n^{in}}\ra\bb{R}^{n^{out}}$ prevail: differentiable functions\footnote{There are few exceptions like ReLU and Trunc functions where the gradient does not exist in few points of their domain. However, the sub-gradient sets are bounded.} which compute the single output vector component from the single input vector component (component-wise functions) and functions which change any output tensor component proportionally to the change of any input tensor component. In principle for both types of units their gradient duals computed as the  gradient $\ov{x}^{in}\eqd \od{\cl{E}}{x^{in}}$ of the network error function $\cl{E}$ wrt to the input data $x^{in}\inv{m}$ can be easily determined via the chain rule \eqref{jacoby}: $\ov{x}^{in} = J_{f}(x^{in})\ov{x}^{out}$ where $x^{out}=f(x^{in}).$
\begin{itemize}
  \item {\it Component-wise computing units and their duals: }
\[\left[n^{out}=n^{in},\ x^{out}\eqd f(x^{in})\right] \lra x_i^{out} = f_i(x^{in}_i),\ \ \forall i\in[n]\ .\]
Then the Jacobian matrix is diagonal and we can compute the gradient in the component-wise manner, as well:
\begin{equation}
\ov{x}^{in}_i = \frac{d\left[f_i(x_i^{in})\right]}{dx_i^{in}}\ov{x}_i^{out},\ \ \ i\in[n^{out}]
\end{equation}
For instance for ReLU ({\bf Re}ctified {\bf L}inear {\bf U}nit) unit and $i\in[n^{out}]:$
\begin{equation}\label{eq:relu}
x_i^{out} = ReLU(x_i^{in})\eqd\left\{
\begin{array}{ll}
0 & \text{if }\ x_i^{in}\leq 0,\\
x_i^{in} & \text{otherwise.}
\end{array}
\right. 
\ \ \lra\ \ \ \
\ov{x}^{in}_i = \left\{
\begin{array}{ll}
0 & \text{if }\ x_i^{in}< 0,\\
\frac{1}{2}\ov{x}_i^{out} & \text{if }\ x_i^{in}= 0,\\
\ov{x}_i^{out} & \text{if }\ x_i^{in}> 0.
\end{array}
\right.
\end{equation}

The rationale behind the value $\frac{1}{2}$ taken from the sub-gradient of ReLU at the point $0$ is the batch computation of the gradient used in the stochastic optimization of TNN. Namely, let us slightly modify the current input of the network to get small positive $x_i^{in}(1)>0$ and small negative $x_i^{in}(2)<0$ values for any ReLU units with  $x_i^{in}=0$,  and let us replace in the batch the original input by such pair of new inputs. To keep the same number of elements in batch computation of gradient we replace the undefined gradient by the average of defined gradients: 
$\frac{1}{2}\left(1\cdot\ov{x}_i^{in}(1)+0\cdot\ov{x}_i^{in}(2)\right)\approx\frac{1}{2}\ov{x}_i^{in}.$ Hence the value $\frac{1}{2}$ arises. 
  \item {\it Affine computing units and their duals}: 
\[
\begin{array}{ll} 
\left[x^{out}\eqd f(x^{in}), \Delta x^{out} \eqd (f(x^{in}+\Delta x^{in}) - f(x^{in}))\right]\ \lra 
& \Delta x_i^{out}= a_{ij}\Delta x^{in}_j\\[5pt]
&\forall i\in[n_u^{out}], j\in[n_u^{in}]\ .
\end{array}
\]
Note that from the above definition easily follows that the affine function is the linear function determined by the the matrix $A\inm{n^{out}}{n^{in}}$ with the bias $b\eqd f(0):$ $x^{out}=f(x^{in})=Ax^{in}+b, x^{in}\inv{n^{in}},$ $A\inm{n^{out}}{n^{in}}, b\inv{n^{out}}.$ 
The Jacobian matrix $J_f$ for the affine function $f$ equals to the transposed matrix $\tp{A}$ as:
\[
\begin{array}{l}
x_i^{out} = \dots +a_{ij}x_j^{in}+\dots+b_i,\ i\in[n^{out}], j\in[n^{in}] \lra \\[10pt]
\left(J_{f}(x^{in})\right)_{ji} \eqd \sp{\left(f_i(x^{in})\right)}{x^{in}_j}=a_{ij},\ i\in[n^{out}], j\in[n^{in}]
\ \ \lra\ \ J_f(x^{in}) = \tp{A}
\end{array}
\]
Hence the gradient flow equation for the affine unit has the form:
\begin{equation}\label{eq:grad-flow}
\od{\cl{E}}{x^{in}} = J_f(x^{in})\od{\cl{E}}{x^{out}} \lra \ov{x}^{in} = \tp{A}\ov{x}^{out}
\end{equation}
Except the full connection function, the matrix $A$ is sparse  and usually components of $b$ are not independent -- the convolution is good example for both cases. In applications we never represent explicitly the sparse and redundant matrix $A$ and the redundant vector $b$. In case of convolution we use the kernel and bias parameters for filter value computation. 
\end{itemize}

There is another implication of chain rule important for programming  of the gradient flow through the linear units. 
\[
y_i \eqd \sum_j a_{ij}x_j \lra
\ov{x}_j\eqd\sp{\cl{E}}{x_j} = \sum_i \sp{\cl{E}}{y_i}\sp{y_i}{x_j} = 
\sum_i \ov{y}_i a_{ij}
\]

It leads to the conclusion  that any programming code designed to compute the forward flow through a linear unit can be changed to the code computing the gradient flow just by modification the single line!

\begin{proposition}[single line modification to get the gradient flow]
\begin{equation}\label{eq:single-line}
\boxed{\big[y_i \xleftarrow{+} a_{ij}x_j\big] \lra \big[\ov{x}_j \xleftarrow{+} a_{ij}\ov{y}_i\big]}
\end{equation}
\end{proposition}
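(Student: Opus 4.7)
The plan is to interpret both sides of \eqref{eq:single-line} as accumulation bodies inside a doubly-nested loop over the index pair $(i,j)$, and verify via the chain rule \eqref{eq:grad-flow} that each body computes the correct forward or backward quantity for a linear (or affine) unit.

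First I would fix the meaning of the arrow $\xleftarrow{+}$: the statement $y_i \xleftarrow{+} a_{ij}x_j$ is read as the body of a double loop over $i\in[n^{out}]$, $j\in[n^{in}]$, executed with $y$ initialized to zero (or to the bias $b$ in the affine case, which contributes nothing to the gradient). After termination one has $y_i=\sum_j a_{ij}x_j$, which matches the linear part of $f$ appearing in \eqref{eq:grad-flow}.

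Next, I would invoke the chain-rule identity derived immediately above the proposition, $\ov{x}_j = \sum_i \ov{y}_i a_{ij}$, which is the component form of $\ov{x}^{in}=\tp{A}\ov{x}^{out}$. Reading this sum as an accumulation pattern under the same nested loop over $(i,j)$ with $\ov{x}$ initialized to zero, the body $\ov{x}_j \xleftarrow{+} a_{ij}\ov{y}_i$ yields precisely the correct gradient. The two bodies share the same coefficient $a_{ij}$ and the same loop skeleton, so the transformation reduces to the single-line swap $x_j \leftrightarrow \ov{y}_i$, $y_i \leftrightarrow \ov{x}_j$, exactly as the boxed equation asserts.

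The only subtlety worth checking is that the execution order of the pairs $(i,j)$ is irrelevant in either direction, so that the forward and backward loops can reuse the very same iteration schedule. In the forward body, for fixed $i$ the target $y_i$ receives contributions from all $j$, but scalar addition commutes; in the backward body, for fixed $j$ the target $\ov{x}_j$ receives contributions from all $i$, and again the additions commute. Hence any nesting order (or a parallel schedule preserving atomicity per target cell) yields the same final tensor, so the rewrite is a correct program transformation rather than merely an algebraic identity. This commutativity of accumulation is the only non-routine point; everything else is a direct restatement of \eqref{eq:grad-flow}.
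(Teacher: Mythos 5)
Your proposal is correct and follows essentially the same route as the paper: the entire content of the proposition is the chain-rule identity $\ov{x}_j = \sum_i \ov{y}_i a_{ij}$ displayed immediately before it, reread as an accumulation body under the same loop skeleton as the forward pass. Your additional remarks on zero-initialization and the commutativity of the accumulation order are sensible elaborations of points the paper leaves implicit, but they do not change the argument.
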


\subsubsection{Full connection unit and its gradient flow}

The full connection (dense) unit is implemented by the composition of the parameter tensor\\ $W:I^{out}\times I^{in}\rightarrow \bb{R}$ with the input tensor $X^{in}: I^{in} \rightarrow \bb{R}$. The composition is followed by the vector translation  $B$. Hence the output tensor of the dense unit is the vector $X^{out}: I^{out}\rightarrow \bb{R}.$ 
\begin{equation}\label{eq:fc-gen}
\ds \left[X^{out} = WX^{in}+B,\ i\in I^{out}\right] \lra  X^{out}[i] = B[i]+\sum_{j\in I^{in}} W[i,j]X^{in}[j]
\end{equation}
Let $n^{in}\eqd|I^{in}|$, $n^{out}\eqd|I^{out}|$, ${_vW}\inv{n^{out}\cdot n^{in}},$ ${_vX^{in}}\inv{n^{in}}$ be the data buffers (vectors) -- the tensors $W, X^{in}$ are monotonic lexicographic views at. Then the  above formula can be written in the vectorial form:
\begin{equation}
i\in[n^{out}] \lra {_vX}^{out}[i] = B[i] + \sum_{j\in[n^{in}]} {_vW}[\underbrace{n^{in}\cdot i+j}_{k\in[n^{in}\cdot n^{out}]}]\cdot{_vX^{in}}[j]
\end{equation}
Therefore the full connection unit can be implemented without using explicit strides for computing addresses of data elements in the buffers\footnote{In the pseudo-code below, we wssume the zero-based indexing of data arrays.}:
\begin{equation}\label{eq:fc-pcode}
\left\{
\begin{array}{l}
 {_vX}^{out}\ass \bm{0};\  i\ass 0;\ j\ass 0\\
for\ \ k\in[n^{in}\cdot n^{out}]:
\left\{
\begin{array}{l}
{_vX}^{out}[i] \xleftarrow{+} {_vW}[k]\cdot {_vX^{in}}[j];\ j\xleftarrow{+}1\\[2pt]
if\ \ j=n_{in}:
\left\{
\begin{array}{l}
{_vX}^{out}[i] \xleftarrow{+}B[i];\ i\xleftarrow{+}1\\
j \ass 0
\end{array}
\right.
\end{array}
\right.
\end{array}
\right.
\end{equation}

The above code will get even less addressing operations if an extra variable $s$ is used for row dot product computations:
\[
\left\{
\begin{array}{l}
s \ass 0;\  i\ass 0;\ j\ass 0\\
for\ \ k\in[n^{in}\cdot n^{out}]:
\left\{
\begin{array}{l}
s \xleftarrow{+} {_vW}[k]\cdot {_vX^{in}}[j];\ j\xleftarrow{+}1\\[2pt]
if\ \ j=n_{in}:
\left\{
\begin{array}{l}
X[i] \ass B[i]+s;\ i\xleftarrow{+}1\\
s\ass 0;\ j \ass 0
\end{array}
\right.
\end{array}
\right.
\end{array}
\right.
\]
Let now ${_mW}\inv{n^{out}\times n^{in}}$ be the matrix view of the parameter buffer ${_vW}$  then using \eqref{eq:grad-flow} we can conclude  the gradient flow formula for the full connection unit:
\begin{equation}
\begin{array}{c}\label{eq:fc-grad-x}
\ {_mW}[i,j] \eqd {_vW}[i\cdot n^{in}+j] \eqd W[i,index(j)]\ \lra\\[3pt]
X = B+({_mW})(_vX^{in})\ \lra\ \ov{_vX}^{in} = \tp{({_mW})}\ov{X}^{out}\ \lra\\[3pt]
\od{\cl{E}}{_vX^{in}} = \tp{({_mW})}\od{\cl{E}}{{_vX}^{out}}\ \lra\ \od{\cl{E}}{X^{in}} = \tp{W}\od{\cl{E}}{X^{out}}
\end{array}
\end{equation}
where the multi-index $j'\eqd index(j)\in I^{in}$, corresponding to the buffer address $j$, is appended to the index $i\in[n_{out}].$  The last tensor transposition $\tp{W}$ is performed with respect the pair of multi-indexes $(i,j)\mapsto (j,i)$:
\[
i\in I^{out},\ j\in I^{in} \lra \tp{W}[j,i]\eqd W[i,j]\ .
\]

The algorithm to fill the dual buffer $\ov{X}^{in}$ with error gradient has a dual form wrt to the algorithm \eqref{eq:fc-pcode} filling the buffer $X$ by regular data signal:

\[
\left\{
\begin{array}{l}
 {_v\ov{X}}^{in}\ass \bm{0};\  i\ass 0;\ j\ass 0\\
for\ \ k\in[n^{in}\cdot n^{out}]:
\left\{
\begin{array}{l}
{_v\ov{X}}^{in}[j] \xleftarrow{+} {_vW}[k]\cdot {_v\ov{X}^{out}}[i];\ j\xleftarrow{+}1\\[2pt]
if\ \ j=n_{in}:
\left\{
\begin{array}{l}
i\xleftarrow{+}1;\ j \ass 0
\end{array}
\right.
\end{array}
\right.
\end{array}
\right.
\]


We will show now that the error gradient wrt $_mW$ is the outer product of the gradient wrt $X$ with the vector $_vX^{in}$. Namely
\[
\begin{array}{l}
\sp{\cl{E}}{_vW[k]} = \sp{\cl{E}}{W[i,j]}= \sp{\cl{E}}{X^{out}[i]}\sp{X^{out}[i]}{W[i,j]}=
\sp{\cl{E}}{X^{out}[i]}X^{in}[j]
\end{array}
\]
Hence the buffer of the error gradient wrt to the parameters $W$ of dense unit at the position\\ $k\in[n^{in}\cdot n^{out}]$ is filled by:
\begin{equation}
\left(\od{\cl{E}}{_vW}\right)_k = \left(\od{\cl{E}}{X^{out}}\right)_{i}\cdot ({_vX^{in}})[j]\ \ 
\text{where }\ \ k = i\cdot n^{in}+j
\end{equation}
Finally, the matrix view of this gradient has mathematically elegant form of the outer product:
\begin{equation}\label{eq:fc-grad-w}
\od{\cl{E}}{_mW} = \od{\cl{E}}{X^{out}}\cdot \tp{(_vX^{in})}
\end{equation}

The error gradient of dense unit wrt translation parameters $B\inv{n_{out}}$ can be derived easily:
\[
\begin{array}{l}
\sp{\cl{E}}{B[i]} = \sp{\cl{E}}{X^{out}[i]}\sp{X^{out}[i]}{B[i]}=
\sp{\cl{E}}{X^{out}[i]}\cdot 1
\end{array}
\]
Therefore for the dense unit:
\begin{equation}\label{eq:fc-grad-b}
\od{\cl{E}}{B} = \od{\cl{E}}{X^{out}}
\end{equation}

\subsubsection{Sliced full connection unit and its gradient flow}

In full connection unit all features are linearly mapped into a new feature vector. Then we mix data from different axes. If the signal domain axes are not homogeneous like frequency and time in speaker identification, for engineers is more clear if we mix features in homogeneous stages. Therefore we prefer the full connection in slices, i.e. along the axes which are considered as homogeneous.

In order to define the concept of full connecting in slices, let us divide the index of the signal domain into two indexes $(p,q)\in[N_s^{in}].$ Then the regular full connecting unit could be described as follows:
\[
X_{FC}^{out}[a] \eqd \sum_{a',p,q}A[a,a',p,q]X^{in}[a',p,q]
\]
In case of the slice indexed by $p$ the full connecting along this slice has the form:
\[
X_{FC(p)}^{out}[a,p] \eqd \sum_{a',q}A[a,a',p,q]X^{in}[a',p,q]
\]
Hence the regular full connection function can be summed up from the sliced versions:
\[
X_{FC}^{out}[a] = \sum_{p}X^{out}_{FC(p)}[a,p]
\]

For the reference we gather the forward signal flow equation with the backward gradient flow equation:
\begin{equation}
\left[X^{out}[a,p] \xleftarrow{+} A[a,a',p,q]X^{in}[a',p,q]\right]
\lra 
\left[\ov{X}^{in}[a',p,q]\xleftarrow{+}A[a,a',p,q]\ov{X}^{out}[a,p]\right]
\end{equation}

\subsubsection{Convolution unit and its gradient flow}

While for the full connection unit the order of axes in tensor views is irrelevant for efficiency of computing for both, the forward signal flow and backward error propagation, in case of convolution of the tensor $X^{in}$ with the parameter tensor $W$, the order for signal and attribute axes, e.g. $x,y,z,a^{out},a^{in}$ is important if only we want to minimize the number of index operations. Therefore we assume that attribute axis precedes the signal axes. We begin from the generalization of tensor convolution definition \eqref{eq:conv-xd} presented in the vector (buffer) form . Next we develop the procedure for error gradients wrt to signal and parameter data of the convolution unit.

\paragraph{General definition of (transposed) convolution\\}

According our notation for buffers created and buffers reference, the filter result is denoted by $X^{out}$, and the convolution input is $X^{in}.$ Moreover, according the common practice, the filter with kernel $J_s$ is generalized here by the sub-sampling rate $\sigma=k\geq 1, k\in\bb{N}$ or the over-sampling rate $\sigma=\frac{1}{k}$, and the dilation factor $\delta \geq 1, \delta\in \bb{N}$.  If the over-sampling is used then the convolution is called {\it fractionally strided convolution} or {\it transposed convolution} -- the name used in auto-encoders of image compression algorithms where over-sampling rate of the transposed convolution unit in decoder is the inverse of sub-sampling rate in the corresponding convolution unit.

The general formula (without batch leading index) is defined in the signal domain $I_s^{in}$ of the input tensor. We assume here that for optional preserving of signal resolution, the necessary padding is included in the input tensor domain.
\begin{equation}\label{eq:conv-gen}
\begin{array}{l}\ds
X^{out}[g,p] \ass B[g]+\sum_{f\in[N_a^{in}]}\sum_{q\in J_s} W[g,f,q]\cdot
\left\{ 
\begin{array}{ll}
X^{in}[f,k\cdot p+\delta\cdot q] & \text{if }\ \sigma=k,\\[2pt]
\ds X^{in}\left[f,\frac{p+\delta\cdot q}{k}\right] & \text{if }\ \sigma=\frac{1}{k},\ k|(p+\delta\cdot q),\\[2pt]
0 & \text{if }\ \sigma=\frac{1}{k},\ k\not|(p+\delta\cdot q),
\end{array}
\right.
\\[30pt] 
g\in [N_a^{out}], \forall q\in J_s\left[ p\in I_s^{out} \lra 
\left\{
\begin{array}{ll}
(k\cdot p+\delta\cdot q)\in I_s^{in} & \text{if }\ \sigma=k,\\[2pt]
(p+\delta\cdot q)/k \in I_s^{in} & \text{if }\ \sigma=\frac{1}{k},\ k|(p+\delta\cdot q).
\end{array}
\right.
\right]
\end{array}
\end{equation}
The number of parameters  $\#\bb{C}$ for any convolution layer depends on number of features $N_a^{in}, N^{out}$ and the number of kernel weights $n_s$ in the  spatial domain:
\begin{equation}
\#\bb{C} = \left( 1+n_s\cdot N_a^{in}\right)\cdot N_a^{out}
\end{equation}

\paragraph{Resolution change at (transposed) convolution -- general formulas\\}

Let generalize the parameters of (transposed) convolution by letting dependence of (up-) sub-sampling ratio $k$ and dilation factor $\delta$ on $d$ signal axes. Then $k_s,\delta_s\in\bb{N}^d$ like axes resolutions $N_s,  n_s\in\bb{N}^d$ for data and kernel tensors.

Then the basic constraints for the above parameters get the form of the following inequalities and equivalent equalities:
\begin{equation}\label{eq:conv-ineq}
\begin{array}{ll}
\bb{C}: & \left(N_s^{out}-1\right)\cdot k_s+ (n_s-1)\cdot\delta_s\leq N_s^{in}-1 \lra 
\left\{
\begin{array}{l}
\left(N_s^{out}-1\right)\cdot k_s+\ov{N}_s^{in}\bmod k_s = \ov{N}_s^{in}\\
\text{where }   \ov{N}_s^{in} \eqd N_s^{in}-(n_s-1)\cdot\delta_s-1
\end{array}
\right.
\\[15pt]
_t\bb{C}: & \left(N_s^{in}-1\right)\cdot k_s+ (n_s-1)\cdot\delta_s\leq N_s^{out}-1 \lra 
\left\{
\begin{array}{l}
\left(N_s^{in}-1\right)\cdot k_s+\ov{N}_s^{out}\bmod k_s = \ov{N}_s^{out}\\
\text{where }   \ov{N}_s^{out} \eqd N_s^{out}-(n_s-1)\cdot\delta_s-1
\end{array}
\right. 
\end{array}
\end{equation}

The above equalities can be resolved -- uniquely in case of convolution and by $k_s$ solutions in case of transposed convolution:
\begin{equation}\label{eq:conv-res}
\begin{array}{ll}
\bb{C}:  & \ds N_s^{out} =1+\frac{\ov{N}_s^{in}-\ov{N}_s^{in} \bmod k_s}{k_s}\\[15pt]
_t\bb{C}: & N_s^{out}=\left(N_s^{in}-1\right)\cdot k_s+ 1+(n_s-1)\cdot\delta_s+r^{\ast},\ \text{ where } r^{\ast}\in[0,k_s)^d
\end{array}
\end{equation}

\paragraph{AS-let block -- input/output compatibility\\}

There is interesting aspect of correspondence of convolutions and their transposed counterparts. Such convolution corresponding is natural in encoding/decoding applications, e.g. compression, steganographic embeddings, and in analysis/synthesis pairs responsible for the same level of signal resolution. The latter application type exhibits the direct chain (called as AS-let) of convolution unit $\cl{C}$ and transposed unit $\cl{C}'$ with the same corresponding down-sampling and up-sampling rate $k_s$, the same kernel size $n_s$, the same dilation factors $\delta_s$, and  with invariant I/O resolution in the signal domain:  
\begin{equation}\label{eq:as-inv}
Y\ass \cl{C}_{down}(X),\ Z \ass \cl{C}_{up}(Y,Y')  \lra N_s(Z) = N_s(X)
\end{equation}
where the optional tensor $Y'$ has the same signal resolution as the tensor $Y$ and it is stacked with $Y$ along the feature axis.

\begin{theorem}[Necessary and sufficient condition of resolution invariance for AS-let] 
Let $X,Y,Z$ be the tensors processed in the AS-let as defined in \eqref{eq:as-inv}. Let $\ov{N}_s \eqd N_s(X)- (n_s-1)\cdot\delta_s-1$ and $r^{\ast}$ be the term fixing the resulting resolution of transposed convolution as defined in \eqref{eq:conv-res}. Then $N_s(Z)=N_s(X)$ if and only if $r^{\ast}=\ov{N}_s\bmod  k_s.$
\end{theorem}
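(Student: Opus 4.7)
The plan is to chain the two resolution formulas from \eqref{eq:conv-res} — one for $\cl{C}_{down}$ and one for $\cl{C}_{up}$ — and show that the desired equality $N_s(Z)=N_s(X)$ collapses, after cancellation, to exactly the stated congruence condition on $r^{\ast}$. All operations are componentwise on the $d$ signal axes, but since the relation $r^{\ast} = \ov{N}_s \bmod k_s$ is also componentwise, I will suppress the vector index and treat one axis at a time.

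First I would apply the convolution branch of \eqref{eq:conv-res} to $Y\ass\cl{C}_{down}(X)$. With input resolution $N_s(X)$, kernel $n_s$, dilation $\delta_s$, down-sampling $k_s$, and writing $\ov{N}_s \eqd N_s(X)-(n_s-1)\cdot\delta_s -1$ as in the statement, this gives
\begin{equation*}
N_s(Y) \;=\; 1+\frac{\ov{N}_s-\ov{N}_s\bmod k_s}{k_s},
\qquad\text{equivalently}\qquad
(N_s(Y)-1)\cdot k_s \;=\; \ov{N}_s - \ov{N}_s\bmod k_s.
\end{equation*}
Next I would apply the transposed-convolution branch of \eqref{eq:conv-res} to $Z\ass\cl{C}_{up}(Y,Y')$, noting that stacking with $Y'$ along the feature axis leaves the signal resolution of the input tensor equal to $N_s(Y)$. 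This yields
\begin{equation*}
N_s(Z) \;=\; (N_s(Y)-1)\cdot k_s + 1 + (n_s-1)\cdot\delta_s + r^{\ast},\qquad r^{\ast}\in[0,k_s).
\end{equation*}

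The computational core is then a single substitution: replace $(N_s(Y)-1)\cdot k_s$ by $\ov{N}_s - \ov{N}_s\bmod k_s$ in the previous line, and expand $\ov{N}_s$ back to $N_s(X)-(n_s-1)\cdot\delta_s -1$. The $(n_s-1)\cdot\delta_s$ terms cancel and the $-1$ cancels the additive $+1$, leaving
\begin{equation*}
N_s(Z) \;=\; N_s(X) \;+\; r^{\ast} \;-\; \ov{N}_s\bmod k_s.
\end{equation*}
From this identity both implications of the biconditional are immediate: if $r^{\ast}=\ov{N}_s\bmod k_s$ then $N_s(Z)=N_s(X)$; conversely, $N_s(Z)=N_s(X)$ forces $r^{\ast}=\ov{N}_s\bmod k_s$.

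The only genuine subtlety — and the thing I would be most careful about — is legitimizing the substitution $(N_s(Y)-1)\cdot k_s = \ov{N}_s-\ov{N}_s\bmod k_s$. This rests on the equivalence in \eqref{eq:conv-ineq}, which asserts that the convolution formula in \eqref{eq:conv-res} is the unique $N_s^{out}$ satisfying both the inequality and divisibility constraints, so one must verify (componentwise) that the $\ov{N}_s^{in}$ appearing there coincides with the $\ov{N}_s$ in the statement of the theorem — they do, because both equal $N_s(X)-(n_s-1)\cdot\delta_s-1$. Aside from this bookkeeping, the argument is purely algebraic cancellation, and the uniqueness of the representative $r^{\ast}\in[0,k_s)$ on each axis is what makes the biconditional clean rather than merely an implication.
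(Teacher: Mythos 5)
Your proposal is correct and follows essentially the same route as the paper's own proof: substitute $(N_s(Y)-1)\cdot k_s = \ov{N}_s-\ov{N}_s\bmod k_s$ from the convolution constraint into the transposed-convolution resolution formula, cancel the $(n_s-1)\cdot\delta_s$ and $\pm 1$ terms to obtain $N_s(Z)=N_s(X)+r^{\ast}-\ov{N}_s\bmod k_s$, and read off the biconditional. Your extra remark about verifying that $\ov{N}_s^{in}$ in \eqref{eq:conv-ineq} coincides with the $\ov{N}_s$ of the theorem is sound bookkeeping that the paper leaves implicit.
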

\begin{proof}
By the formulas \eqref{eq:conv-res} we get
$
N_s(Z) = \left(N_s(Y)-1\right)\cdot k_s+ 1+(n_s-1)\cdot\delta_s+r^{\ast}
$
On the other hand  by the equation \eqref{eq:conv-ineq}, $\left(N_s(Y)-1\right)\cdot k_s=\ov{N}_s(X)-\ov{N}_s\bmod k_s$. Hence:
\[
\begin{array}{rcl}
N_s(Z) &=& \ov{N}_s-\ov{N}_s\bmod k_s+1+(n_s-1)\cdot\delta_s+r^{\ast} =\\
&=& N_s(X)- (n_s-1)\cdot\delta_s-1 +1+(n_s-1)\cdot\delta_s+r^{\ast} -\ov{N}_s\bmod k_s\\
&=& N_s(X) + r^{\ast} -\ov{N}_s\bmod k_s 
\end{array}
\]
Therefore $N_s(Z)=N_s(X)$ if and only if $r^{\ast}=\ov{N}_s\bmod  k_s.$
\end{proof}

\paragraph{Resolution change at (transposed) convolution -- examples\\}

The resolution of the filtered signal depends on the resolution of the unit's input signal, the kernel resolution, and the parameters $\sigma,$ and $\delta.$ For instance for 3D signal domain, assuming the zero based indexing, we get the following formulas for the signal domain resolution $N_z^{out}\times N_y^{out}\times N_x^{out}$ of the filtered tensor wrt to the resolution of the input tensor $N_z^{in}\times N_y^{in}\times N_x^{in}$  and the kernel resolution  $[n_z]\times [n_y]\times [n_x]:$
\begin{equation}
\begin{array}{l}
I_s^{in} = [N_z^{in}]\times[N_y^{in}]\times[N_x^{in}], J_s=[n_z]\times[n_y]\times[n_z] \lra
I_s^{out} = [N_z^{out}]\times[N_y^{out}]\times[N_x^{out}] \\[5pt] 
\text{where}\ 
\left.
\begin{array}{l}
N_x^{out} = 1+\left\lfloor\frac{N_x^{in}-\delta(n_x-1)-1}{k}\right\rfloor\\[5pt]
N_y^{out} = 1+\left\lfloor\frac{N_y^{in}-\delta(n_y-1)-1}{k}\right\rfloor\\[5pt]
N_z^{out} = 1+\left\lfloor\frac{N_z^{in}-\delta(n_z-1)-1}{k}\right\rfloor
\end{array}
\right\}\ \text{if }\ \sigma=k,\ \ 
\left.
\right.
\\[35pt]
\text{and}\ 
\left.
\begin{array}{l}
N_x^{out} = k\cdot (N_x^{in}-1)+\delta(n_x-1)+1+\left(N_x^{in}-1-\delta(n_x-1)\right)\bmod  k\\
N_y^{out} = k\cdot (N_y^{in}-1)+\delta(n_y-1)+1+\left(N_y^{in}-1-\delta(n_y-1)\right)\bmod  k\\
N_z^{out} = k\cdot (N_z^{in}-1)+\delta(n_z-1)+1+\left(N_z^{in}-1-\delta(n_z-1)\right)\bmod  k
\end{array}
\right\} \text{if }\ \sigma=\frac{1}{k}\ .
\end{array}
\end{equation}
The resolutions for signal axes are stacked into the resolution vectors $N_s^{in}, N_s^{out}$ which for 3D signal domain are defined as follows:
$
N_s^{in}\eqd[N_z^{in},N_y^{in},N_x^{in}],\ \ \ N_s^{out}\eqd[N_z^{out},N_y^{out},N_x^{out}]\ .
$

\paragraph{Padding to match output resolution to sampling rate\\}

If no sampling is used ($\sigma=1$) then we frequently want to keep the same resolution of convolution input and output. To this goal the input tensor is padded in the signal domain. Without dilation the padding along the signal axis with kernel resolution $n$ equals to $p=n-1,$ otherwise the formula is less obvious. Moreover, even if make sub-sampling by $\sigma=k$ or over-sampling by $\sigma=\frac{1}{k}$ then we want to scale the resolution proportionally, i.e. reduce or enlarge $k$ times the  resolution of each signal axis. Having $p$ value as even makes the symmetrical padding possible at both ends of axis samples. All this requirements can be satisfied for $n$ odd (what is a natural choice) as it is stated in the following proposition.

\begin{proposition}[padding for (transposed) convolution to match resolution and sampling rate]
Given the resolution $N$ of signal domain axis and the convolution kernel resolution $n$ at the same axis. Let the sampling rate $\sigma=k\in\bb{N}$ for sub-sampling or $\sigma=\frac{1}{k}$ for over-sampling with the dilation factor $\delta\in\bb{N}$. Then, we are looking for $p\in\bb{N}$ satisfying one of the following conditions: 
\[
\begin{array}{ll}
\left\lfloor\frac{N-1}{k}\right\rfloor = \left\lfloor\frac{p+N-\delta(n-1)-1}{k}\right\rfloor & \text{if }\ \sigma=k\\[5pt]
kN = p+kN-\delta(n-1) & \text{if }\ \sigma=\frac{1}{k}
\end{array}
\]
Let $p_0\eqd\delta(n-1).$ Then the admissible padding always exists. Moreover, 
\begin{enumerate}
\item for no sampling and over-sampling cases: the unique admissible padding $p=p_0,$
\item for sub-sampling case: the admissible padding
$p\in[p_0',p_0'+k),$ with $p_0'\eqd p_0-(N-1)\hspace*{-2mm}\mod k,$   and there exists the even padding $p$ in this interval,
\item if $n$ is odd then for no sampling and over-sampling cases there exists the even admissible padding $p$,
\item if $n$ is even then for no sampling and over-sampling cases the even admissible padding exists iff the dilation $\delta$ is even number. 
\end{enumerate}
$\Box$
\end{proposition}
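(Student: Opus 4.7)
The plan is to split the analysis by sampling mode (no sampling $\sigma=1$, sub-sampling $\sigma=k$ with $k\geq 2$, over-sampling $\sigma=\tfrac{1}{k}$), reduce the matching equation in each case to a linear or floor identity in the unknown padding $p$, and then read off existence and parity from the resulting formula. The pivotal quantity throughout is $p_0 \eqd \delta(n-1)$, the ``natural'' padding suggested by a single dilated kernel span.

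First I would handle the two easy cases. For no sampling ($k=1$), the identity $\lfloor(N-1)/1\rfloor = \lfloor(p+N-\delta(n-1)-1)/1\rfloor$ reduces to $p = \delta(n-1) = p_0$. For over-sampling, the equation $kN = p+kN-\delta(n-1)$ immediately yields $p=p_0$. Hence in both cases $p_0$ is the unique admissible padding, proving item 1 at once.

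Next I would tackle sub-sampling. Write the Euclidean decomposition $N-1 = qk + r$ with $r = (N-1)\bmod k \in [0,k)$, so $\lfloor(N-1)/k\rfloor = q$. The equation becomes $\lfloor(p+qk+r-p_0)/k\rfloor = q$, which holds iff $p + r - p_0 \in [0,k)$, i.e.\ $p \in [p_0-r,\, p_0-r+k) = [p_0',\, p_0'+k)$. Since $p_0'$ is an integer, the admissible integer paddings are the $k$ consecutive values $p_0', p_0'+1,\dots,p_0'+k-1$; for $k\geq 2$ at least one of these is even, giving item 2. A one-line side check confirms the interval is nonempty in the admissible (non-negative) regime: $p_0 = p_0'+r \in [p_0',p_0'+k)$ is always a non-negative admissible choice because $p_0\geq 0$.

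For items 3 and 4 I just analyze the parity of $p_0 = \delta(n-1)$. If $n$ is odd, then $n-1$ is even so $p_0$ is even regardless of $\delta$. If $n$ is even, then $n-1$ is odd and $p_0 \equiv \delta \pmod{2}$, hence $p_0$ is even iff $\delta$ is even. Combined with the uniqueness in item 1, this immediately yields items 3 and 4. The only mildly delicate point in the whole argument is making the half-open interval bookkeeping in the sub-sampling floor equation precise; everything else is mechanical algebra.
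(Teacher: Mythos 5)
Your proof is correct and follows essentially the same route as the paper: the no-sampling and over-sampling cases collapse to $p=p_0$ by direct algebra, and the sub-sampling case reduces the floor identity to the half-open interval condition $p\in[p_0',p_0'+k)$, from which existence of an even padding among $k\geq 2$ consecutive integers and the parity analysis of $p_0=\delta(n-1)$ follow. The only cosmetic difference is that you use the Euclidean decomposition $N-1=qk+r$ where the paper manipulates fractional parts via $\frac{N-1}{k}\bmod 1=\frac{(N-1)\bmod k}{k}$; both yield the same interval.
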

\begin{proof}
Perhaps the sub-sampling case ($k>1$) needs some guidance.
\begin{enumerate}
\item The condition to preserve the resolution   $1+\left\lfloor\frac{N-1}{k}\right\rfloor$ of sub-sampled tensor by the convolution of the original padded tensor followed by the sub-sampling:
\[
\left\lfloor\frac{N-1}{k}\right\rfloor = \left\lfloor\frac{N-1}{k}+\frac{p-p_0}{k}\right\rfloor,\quad \text{where }\ p_0\eqd (n-1)\delta\ .
\]
\item the equivalent inequalities:
$
-\frac{N-1}{k}\hspace*{-2mm}\mod 1\leq \frac{p-p_0}{k}<1-\frac{N-1}{k}\hspace*{-2mm}\mod 1
$
\item the interesting general equality:
$
\frac{N-1}{k}\hspace*{-2mm}\mod 1 = \frac{(N-1)\hspace*{-2mm}\mod k}{k}
$
\item the implications of the last two points:
\begin{enumerate}
\item
$
-\frac{(N-1)\hspace*{-2mm}\mod k}{k}\leq \frac{p-p_0}{k}<1-\frac{(N-1)\hspace*{-2mm}\mod k}{k}
$
\item 
$
p_0-(N-1)\hspace*{-2mm}\mod k\leq p<k+p_0-(N-1)\hspace*{-2mm}\mod k
$
\item 
$
p_0'\leq p<k+p_0'
$\ 
where $p_0'\eqd p_0-(N-1)\hspace*{-2mm}\mod k = \delta(n-1)-(N-1)\hspace*{-2mm}\mod k\ ,$
\end{enumerate}
\item if $p_0'\geq 0$ then either $p=p_0'$ or $p=p_0'+1$ is the even padding as $k\geq 2$,
\item if $p_0<0$ then no padding is needed as $p=0$ belongs to the interval of admissible paddings as $p_0'+k=p_0+k-(N-1)\hspace*{-2mm}\mod k\geq (n-1)\delta+1\geq 1$.
\end{enumerate}
\end{proof}
\paragraph{Efficient computing of (transposed) convolution\\}

Contrary to full connection unit the solutions for convolution unit to be efficient  are not general wrt to the signal domain. It means that we write separate code for 1D, 2D, and 3D cases. The main idea of algorithm acceleration is using of Horner's scheme to handle strides for computing addresses in the buffers. 
\begin{description}
\item[1D case:]$ $\\
For $g\in[N_a],\ x\in[N_x]$ the convolution 1D is computed as follows in tensor notation:
\[
\hspace*{-3mm}
\begin{array}{rcl}
\ds
X^{out}[g,x] &=& \ds\sum_{f\in[N_a^{in}]}\sum_{u\in[n_u]}W[g,f,u]\cdot
\left\{
\begin{array}{ll}
X^{in}[f,k\cdot x+\delta\cdot u] & \text{if }\ \sigma=k\\
X^{in}[f,(x+\delta\cdot u)/k] & \text{if }\ \sigma=\frac{1}{k},\ k|(x+\delta\cdot u)\\
0 & \text{if }\ \sigma=\frac{1}{k},\ k\not|(x+\delta\cdot u)
\end{array}
\right.
\end{array}
\]
It is implemented by changing indexes to addresses by Horner scheme. Then we refer directly to the data buffers ${_vX^{in}}$ (input signal), ${_vW}$ (kernel), ${_vX^{out}}$ (filtered signal).
\[
\begin{array}{rcl}
{_vX^{out}}[gN_x^{out}+x] &=& \ds\sum_{f\in[N_a^{in}]}\sum_{u\in[n_u]}{_vW}[(gN_a^{in}+f)n_u+u]\cdot\\[10pt]
&&
\left\{
\begin{array}{ll}
{_vX^{in}}[fN_x^{in}+k\cdot x+\delta\cdot u] & \text{if }\ \sigma=k\\
{_vX^{in}}[fN_x^{in}+(x+\delta\cdot u)/k] & \text{if }\ \sigma=\frac{1}{k},\ k|(x+\delta\cdot u)\\
0 & \text{if }\ \sigma=\frac{1}{k},\ k\not|(x+\delta\cdot u)
\end{array}
\right.
\end{array}
\]

The pseudo code for 1D convolution (sampling case) is obvious, but we need it to illustrate the construction of code for the gradient flow:
\begin{small}
\[
\begin{array}{l}
{_vX^{out}}\ass \bm{0}\\
for\ \ g\in[N_a^{out}]:\ \ for\ \ x\in[N_x^{out}]:\ \
for\ \ f\in[N_a^{in}]:\ for\ \ u\in[n_u]:\\
\hspace*{5mm}
{_vX^{out}}[g\cdot N_x^{out}+x] \xleftarrow{+}
{_vW}[(g\cdot N_a^{in}+f)\cdot n_u+u]\cdot
{_vX^{in}}[fN_x^{in}+k\cdot x+\delta\cdot u]
\end{array}
\]
\end{small}
An optimization of the above code via factoring common expressions out of the loops is obvious, too. However such code refactoring makes the code conversion to gradient flow less clear. According to the proposition \eqref{eq:single-line}, the pseudo code for gradient flow has the form for the  sub-sampling case:
\begin{small}
\[
\begin{array}{l}
{_v\ov{X}^{in}}\ass \bm{0}\\
for\ \ g\in[N_a^{out}]:\ \ for\ \ x\in[N_x^{out}]:\ \
for\ \ f\in[N_a^{in}]:\ for\ \ u\in[n_u]:\\
\hspace*{5mm}
{_v\ov{X}^{in}}[fN_x^{in}+k\cdot x+\delta\cdot u]
\xleftarrow{+}
{_vW}[(g\cdot N_a^{in}+f)\cdot n_u+u]\cdot
{_v\ov{X}^{out}}[g\cdot N_x^{out}+x]
\end{array}
\]
\end{small}

The over-sampling case add the conditional line in the innermost loop:
\begin{itemize}
\item pseudo code for 1D convolution\footnote{The function divmode$(a,b)$ returns integer quotient ant the remainder of the division of the integer $a$ by the integer $b.$}
 -- over-sampling case ($\sigma=\frac{1}{k}$):
\begin{small}
\[
\begin{array}{l}
{_vX^{out}}\ass \bm{0}\\
for\ \ g\in[N_a^{out}]:\ \ for\ \ x\in[N_x^{out}]:\ \
for\ \ f\in[N_a^{in}]:\ for\ \ u\in[n_u]:\\
\hspace*{5mm}
q,r \ass \text{divmod}(x+\delta\cdot u, k);\ if\ \ r\neq 0:\ continue\\
\hspace*{5mm}
{_vX^{out}}[g\cdot N_x^{out}+x] \xleftarrow{+}
{_vW}[(g\cdot N_a^{in}+f)\cdot n_u+u]\cdot
{_vX^{in}}[fN_x^{in}+q]
\end{array}
\]
\end{small}
\item pseudo code for the gradient flow of 1D convolution -- over-sampling case ($\sigma=\frac{1}{k}$):
\begin{small}
\begin{equation}\label{code:grad-flow-conv}
\begin{array}{l}
{_v\ov{X}^{in}}\ass \bm{0}\\
for\ \ g\in[N_a^{out}]:\ \ for\ \ x\in[N_x^{out}]:\ \
for\ \ f\in[N_a^{in}]:\ for\ \ u\in[n_u]:\\
\hspace*{5mm}
q,r \ass \text{divmod}(x+\delta\cdot u, k);\ if\ \ r\neq 0:\ continue\\
\hspace*{5mm}
{_v\ov{X}^{in}}[fN_x^{in}+q]
\xleftarrow{+}
{_vW}[(g\cdot N_a^{in}+f)\cdot n_u+u]\cdot
{_v\ov{X}^{out}}[g\cdot N_x^{out}+x]
\end{array}
\end{equation}
\end{small}

\end{itemize}

\item[2D case:]$ $\\
For $g\in[N_a^{out}],\ y\in[N_y^{out}],\ x\in[N_x^{out}]$ the (transposed) convolution 2D with sampling rate $\sigma$ and kernel dilation $\delta$ in the tensor notation has the form:
\[
\begin{array}{rcl}
\ds
X^{out}[g,y,x] &=& \sum_{f\in[N_a^{in}]}\sum_{v\in[n_v]}\sum_{u\in[n_u]}W[g,f,v,u]\cdot\\[10pt]&&
\left\{
\begin{array}{ll}
X^{in}[f, k\cdot y+\delta\cdot v, k\cdot x+\delta\cdot u] &
\text{if $\sigma=k,$}\\
X^{in}[f, (y+\delta\cdot v)/k, (x+\delta\cdot u)/k] &
\text{if $\sigma=\frac{1}{k}$ and}\\
&\quad k|(y+\delta\cdot v), k|(x+\delta\cdot u),\\
0&\text{otherwise.}
\end{array}
\right.
\end{array}
\]

In 2D case the benefit of Horner's scheme in converting indexes to addresses is more visible. 
\[
\begin{array}{l}
{_vX^{out}}[(gN_y^{out}+y)N_x^{out}+x] =  \ds
\sum_{f\in[N_a^{in}]}\sum_{v\in[n_v]}\sum_{u\in[n_u]}{_vW}[((gN_a^{in}+f)n_v+v)n_u+u]\cdot\\[15pt]
\left\{
\begin{array}{ll}  
 {_vX^{in}}[(fN_y^{in}+k\cdot y+\delta\cdot v)N_x^{in}+k\cdot x+\delta\cdot u]&\text{if $\sigma=k,$}\\
{_vX^{in}}[(fN_y^{in}+(y+\delta\cdot v)/k)N_x^{in}+(x+\delta\cdot u)/k]&
\text{if $\sigma=\frac{1}{k}$ and}\\
&\quad k|(y+\delta\cdot v), k|(x+\delta\cdot u),\\
0&\text{otherwise.} 
\end{array}
\right.
\end{array}
\]
The pseudo codes for 2D convolution and its gradient flow are built as for 1D case with adding two loops more (over $y$ and over $v$).

The 3D case is the straightforward generalization of 2D case: the loops over $z$ and $w$ are just added.

\end{description}

\paragraph{Gradient wrt to convolutional parameters\\}

We conclude this section with error $\cl{E}$ gradients formulas wrt parameters $W$ and $B$ of generalized (transposed) convolution unit. 

Looking for $\sp{\cl{E}}{W[g,f,q]}$ we should consider all equations for output signals $X^{out}[g,p]$ with contribution of this parameter. Let $P_q$ be the set of such equations at fixed $g$:
\[
P_q\eqd\{p\in I_s^{out}: \exists r\in I_s^{in}, \lfloor\sigma\cdot p+\delta\cdot q\rfloor=r\}
\]
According the definition of generalized (transposed) convolution \eqref{eq:conv-gen}, for any $q\in J_s$ all elements in $I_s^{out}$ satisfy the above condition, i.e.
\[
P_q = I_s^{out},\ \text{ for any }\ \ q\in J_s\ .
\]
Hence
\[
\begin{array}{rcl}
\sp{\cl{E}}{W[g,f,q]} &=& \ds\sum_{p\in I_s^{out}}\sp{\cl{E}}{X^{out}[g,p]}\cdot \sp{X^{out}[g,p]}{W[g,f,q]} = \sum_{p\in I_s^{out}}\ov{X}^{out}[g,p]
\cdot\\[10pt]&&
\left\{
\begin{array}{ll}
X^{in}[f,k\cdot p+\delta\cdot q] & \text{if }\ \sigma=k,\\
X^{in}[f,(p+\delta\cdot q)/k] & \text{if }\ \sigma=\frac{1}{k},\ k|(p+\delta\cdot q),\\
0 & \text{if }\ \sigma=\frac{1}{k},\ k\not|(p+\delta\cdot q).
\end{array}
\right.
\end{array}
\]
In terms of tensor operation it is a kind of scalar product of the output gradient's $g$ slice along $p$ axes with the $f$ slice of the input tensor along the mapped $r$ axes. In total we get a family of Gram's matrices indexed by $q\in J_s.$ For the gradient wrt parameters $B$ we get
\[
\sp{\cl{E}}{B[g]} = \sum_{p\in I_s^{out}}\sp{\cl{E}}{X^{out}[g,p]}\cdot \sp{X^{out}[g,p]}{B[g]} = \sum_{p\in I_s^{out}}\ov{X}^{out}[g,p]\cdot 1
\]
Now, the value is the sum of $g$ slice elements taken along $p$ axes.

\begin{proposition}[gradients of generalized (transposed) convolution wrt parameters]
Given the gradient $\od{\cl{E}}{X^{out}}$ the gradients wrt parameters $W$ and $B$ for the generalized (transposed) convolution are given by the formulas\footnote{Divisions by $k$ in the formulas are all component-wise.}:
\begin{equation}\label{eq:conv-grad-wb}
\begin{array}{rcl}
\od{\cl{E}}{W}[g,f,q] & = & \ds\sum_{p\in I_s^{out}}\ov{X}^{out}[g,p]\cdot 
\left\{
\begin{array}{ll}
X^{in}[f,k\cdot p+\delta\cdot q] & \text{if }\ \sigma=k,\\
X^{in}[f,(p+\delta\cdot q)/k] & \text{if }\ \sigma=\frac{1}{k},\ k|(p+\delta\cdot q),\\
0 & \text{if }\ \sigma=\frac{1}{k},\ k\not|(p+\delta\cdot q).
\end{array}
\right.
\\[15pt]
\od{\cl{E}}{B}[g] & = & \ds\sum_{p\in I_s^{out}}\ov{X}^{out}[g,p]
\end{array}
\end{equation}
\end{proposition}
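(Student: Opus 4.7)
The plan is to derive both gradient formulas by direct application of the multivariate chain rule to equation \eqref{eq:conv-gen}, exploiting the fact that for a fixed output-attribute index $g$ the kernel weight $W[g,f,q]$ enters the convolution linearly and independently at every spatial output position $p \in I_s^{out}$, while $W[g,f,q]$ does not affect outputs with other attribute index $g'\ne g$.

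First I would write $\od{\cl{E}}{W[g,f,q]} = \sum_{g',p} \sp{\cl{E}}{X^{out}[g',p]}\cdot \sp{X^{out}[g',p]}{W[g,f,q]}$ and observe from \eqref{eq:conv-gen} that $\sp{X^{out}[g',p]}{W[g,f,q]} = 0$ whenever $g'\ne g$. This reduces the sum to indices $p$ in the contribution set $P_q\eqd\{p\in I_s^{out}: \exists r\in I_s^{in}, \lfloor\sigma\cdot p+\delta\cdot q\rfloor = r\}$. The next step is to argue $P_q = I_s^{out}$ for every $q \in J_s$: this is exactly the domain constraint imposed in the definition \eqref{eq:conv-gen}, namely $\forall q\in J_s\,[p\in I_s^{out}\Rightarrow \text{index-in-range}]$, so the constraint holds by construction of $I_s^{out}$.

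Having reduced the summation to all of $I_s^{out}$, I would differentiate the inner linear expression in \eqref{eq:conv-gen} with respect to $W[g,f,q]$ to obtain the three-case piecewise input term appearing in the claim: for $\sigma=k$ the derivative is $X^{in}[f,k\cdot p+\delta\cdot q]$; for $\sigma=1/k$ with $k\mid (p+\delta\cdot q)$ it is $X^{in}[f,(p+\delta\cdot q)/k]$; and otherwise it vanishes. Substituting and using $\ov{X}^{out}[g,p]\eqd \sp{\cl{E}}{X^{out}[g,p]}$ yields exactly \eqref{eq:conv-grad-wb}. The bias formula follows along the same lines: $\sp{X^{out}[g',p]}{B[g]} = \delta_{gg'}$ collapses the sum onto $g'=g$ with unit weight, giving $\od{\cl{E}}{B}[g] = \sum_{p\in I_s^{out}}\ov{X}^{out}[g,p]$.

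The only delicate point is confirming the equality $P_q = I_s^{out}$ uniformly in $q$, since one might worry that for some boundary position $p$ and certain $q$, the fractional index $(p+\delta q)/k$ fails integrality in the over-sampled case. This is precisely why the $0$ branch is present in \eqref{eq:conv-gen}: the output $X^{out}[g,p]$ is defined for every $p\in I_s^{out}$ with zero contributions from non-integer displacements, so $W[g,f,q]$ still formally occurs in the equation for $X^{out}[g,p]$ (with coefficient $0$), and its partial derivative picks up exactly the piecewise value above. Once this is made explicit, both formulas follow without further calculation.
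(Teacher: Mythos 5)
Your proposal is correct and follows essentially the same route as the paper: both apply the chain rule to \eqref{eq:conv-gen}, introduce the contribution set $P_q$ and argue $P_q=I_s^{out}$ for every $q\in J_s$, then read off the piecewise derivative of the linear expression for $W$ and the constant derivative for $B$. Your extra remarks --- that outputs with attribute index $g'\neq g$ do not depend on $W[g,f,q]$, and that the zero branch of \eqref{eq:conv-gen} is what makes $P_q=I_s^{out}$ harmless in the over-sampled case --- only make explicit what the paper leaves implicit.
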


\subsubsection{Pooling unit and its gradient flow}

Pooling unit aims to reduce the tensor resolution in its signal domain. The resolution is reduced by aggregation of tensor values in  blocks of size $k$ which are selected with stride $\sigma$. Typically $k$ and $\sigma$ are equal for each of $d$ signal axis. However, there are applications where they depend on axes\footnote{For instance spatial pyramid pooling.} . Therefore, we assume here the general case\: $k\in\bb{N}^d,$ $\sigma\in\bb{N}^d.$
Moreover, we assume that pooling is performed in full blocks and padding is not applied to make the aggregated block complete. Hence for some combination of the signal domain resolution $N_s^{in}\in\bb{N}^d$ of the input tensor $X^{in}$ and the hyper-parameters $k,\sigma$  it may happen that a small margin will be ignored and in the gradient flow it will be filled by zeros.

For the above assumptions the signal domain resolution $N_s^{out}\in\bb{N}^d$ of the output tensor $X^{out}$ can be computed by indexing of blocks $Q_p\subset[N_s^{in}]$:
\begin{equation}
Q_p\eqd \{q\in[N_s^{in}]: \exists j\in[k], q=\sigma\cdot p+j\},\ p\in[N_s^{out}]
\end{equation}
In the formula for $q$, the multiplication of index vectors is performed component-wise.

Let us consider the last block $Q_p$ in the lexicographic order. Then its last element $q=\sigma\cdot p+k-1_d.$ Since $q\in[N_s^{in}]$, we get the bounds for $p$:
\[
\sigma\cdot p+k-1_d<N_s^{in} \lra 
0_d\leq p < \left\lceil\frac{N_s^{in}-k+1_d}{\sigma}\right\rceil
\]
where the division of index vectors and ceiling function are component-wise, as well.

Hence, we conclude the resolution formula in the signal domain of the output tensor for the pooling units:
\begin{equation}\label{eq:pool-res}
\begin{array}{l}
N_s^{out} = \left\lceil\frac{N_s^{in}-k+1_d}{\sigma}\right\rceil,\quad\quad
\left[\sigma=k \lra N_s^{out} = \left\lfloor\frac{N_s^{in}}{k}\right\rfloor\right]
\end{array}
\end{equation}
The above special case follows from the arithmetic identity for ceiling and floor functions: 
\[
\lceil (a+1)/b\rceil-1= \lfloor a/b\rfloor\ \text{ for any }\  a,b\in\bb{N}.
\]

 There are two types of pooling used in digital media applications: {\it max pooling} and {\it average pooling} with options {\it sliced pooling and global pooling.}

\begin{definition}[Max pooling]
If $X^{in}\in\bb{R}^{[N_a]\times [N_s^{in}]}$ then the pooled tensor $X^{out}\in\bb{R}^{[N_a]\times [N_s^{out}]}$ with signal domain resolution specified by \eqref{eq:pool-res} with content defined for any $a\in[N_a], p\in[N_s^{out}]:$
\begin{equation}\label{eq:max-pool}
\ds X^{out}[a,p]\eqd \max_{q\in Q_p} X^{in}[a,q],\ Arg[a,p]=\arg\max_{q\in Q_p}X^{in}[a,q]
\end{equation}
\end{definition}

In order to compute the gradient flow let us observe that:
\begin{itemize}
  \item the element $X^{out}[a,p]$ depends only on the element $X^{in}[a,Arg[a,p]]$,
  \item the element $X^{in}[a,q]$ could be the source for none, one or more elements $X^{out}[a,p]$ if only $Arg[a,p]=q.$
\end{itemize}
Therefore if we initially set the gradient tensor $\ov{X}^{in}$ to zero the gradient flow is implemented  by the trivial loop:
\begin{equation}\label{eq:max-pool-grad}
for\ \ a,p\in[N_a]\times[N_s^{out}]:\ \ov{X}^{in}[a,Arg[a,p]] \xleftarrow{+} \ov{X}^{out}[a,p]
\end{equation}

\begin{definition}[Average pooling]
Let $|k|$ be the number of elements in the block $Q_p$. 
If $X^{in}\in\bb{R}^{[N_a]\times [N_s^{in}]}$ then the pooled tensor $X^{out}\in\bb{R}^{[N_a]\times [N_s^{out}]}$ with signal domain resolution specified by \eqref{eq:pool-res} with content defined for any $a\in[N_a], p\in[N_s^{out}]:$
\begin{equation}\label{eq:avg-pool}
X^{out}[a,p]\eqd \frac{1}{|k|}\sum_{q\in Q_p} X^{in}[a,q]
\end{equation}
\end{definition}

Since average pooling is the linear operation we can use the rule \eqref{eq:single-line} for the gradient flow.Then after filling the tensor $\ov{X}^{in}$ by zeros, the loop has the form:
\begin{equation}\label{eq:avg-pool-grad}
\begin{array}{l}
for\ \ a,p\in[N_a]\times[N_s^{out}]:\ 
for\ \ q\in Q_p:\ 
\ov{X}^{in}[a,q] \xleftarrow{+} \ov{X}^{out}[a,p]\\[5pt]
\ov{X}^{in}\xleftarrow{\text{\tiny$/$}}|k|
\end{array}
\end{equation}

\begin{definition}[Slice pooling and global pooling]
Pooling by slices is special case of average pooling:
\begin{enumerate}
  \item {\it pooling along $x$ axes}: $k_x=N_x^{in},$ $k_y$ and remaining axes (if any) equal to one,
  \item {\it pooling along $y$ axes}: $k_y=N_y^{in},$ $k_x$ and remaining axes (if any) equal to one,
  \item {\it pooling along both $xy$ axes}: $k_x=N_x^{in},$ $k_y=N_y^{in},$  and remaining axes (if any) equal to one,
  \item {\it global pooling} along all spatial axes: $k_x=N_x^{in}$ and any other spatial axis $i$ gets $k_i=N_i^{in}.$
\end{enumerate}
In all cases strides $\sigma\eqd k$. Moreover, all axes in the tensor $X^{out}$ with resolution one, are removed from its tensor view.
\end{definition}

\subsubsection{Decimation/interpolation unit and its gradient flow}

Despite the sub-sampling and over-sampling options for (transposed) convolution and pooling units there are applications for which the primitive over-sampling via copying or simple sub-sampling via selection by regular striding, are not satisfactory\footnote{for instance in case of compression via auto-encoder models.}. We adopt here to this goal the general decimation/interpolation scheme with a continuous kernel $K:\bb{R}\times\bb{R}\lra\bb{R}$ applied in discrete neighborhoods $\cl{N}_r$ with the fixed radius wrt a norm -- typically it is the Minkowski's norm $\|\cdot\|_m$  for $m=1,2,\infty.$ Namely for $\tilde{p}\in\bb{R}$ we have:
\[
\cl{N}_r(\tilde{p}) \eqd \{q\in[N_s^{in}]:\ \|q-\tilde{p}\|<r\}
\]
Interestingly, the neighborhood $\cl{N}_2(p)$ with $p$ in the discrete grid and for the norm $\|\cdot\|_1$ is aka $4$-neighborhood while for the norm $\|\cdot\|_{\infty}$ is aka $8$-neighborhood. 

Let $\sigma\eqd N_s^{in}/N_s^{out}$ be the vector in $\bb{R}^d$ of scalings from the input resolution in the signal domain to the desired output resolution in the signal domain, as well. Let $\Sigma\inm{d}{d}$ be the diagonal matrix with the vector $\sigma$ in its diagonal. 

If $\sigma_i>1$ then we deal with decimation while for $\sigma_i<1$, the interpolation is performed. Typically for $\sigma_i>1$ we take the radius $r\eqd\lceil\sigma\rceil.$ However, in case of interpolation $r$ is independent of $\sigma$ -- for instance $r=1,2,3.$

Having the kernel $K$ and the neighborhoods $\cl{N}_r(\cdot)$ we define the interpolation/decimation formula for any $a\in[N_a]$ and $p\in[N_s^{out}].$
\begin{equation}\label{eq:interp-nk}
X^{out}[a,p] \eqd 
\sum_{q\in\cl{N}_r(\Sigma^{-1}p)}
\overbrace{
\frac{K\left(q,\Sigma^{-1}p\right)}
{\ds\sum_{q'\in\cl{N}_r(\Sigma^{-1}p)}K\left(q',\Sigma^{-1}p\right)}}^{k_{pq}}
\cdot X^{in}[a,q]
\end{equation}

The above formula is linear wrt $X^{in}$. Therefore, the gradient flow through the dual interpolation unit has the form:
\begin{equation}\label{eq:interp-grad}
for\ \ a,p\in[N_a]\times[N_s^{out}]:\ 
for\ \ q\in\cl{N}_r(\Sigma^{-1}p):\ 
\ov{X}^{in}[a,q] \xleftarrow{+} k_{pq}\ov{X}^{out}[a,p]
\end{equation}
where
\[
k_{pq} \eqd
\frac{K\left(q,\Sigma^{-1}p\right)}
{\ds\sum_{q'\in\cl{N}_r(\Sigma^{-1}p)}K\left(q',\Sigma^{-1}p\right)}
\]

Let us consider two special cases of the interpolation formula \eqref{eq:interp-nk}.
\begin{description}
\item[Multi-linear interpolation:]$ $\\
The kernel $K(q,\tilde{q})$ is defined as the product of components similarities:
\[
K(q,\tilde{q}) \eqd \prod_{i\in[d]} (1-|q_i-\tilde{q}_i|).
\]
The neighborhoods $\cl{N}_1(\tilde{q})$ are defined wrt the infinity norm:
$
\cl{N}_1(\tilde{q}) \eqd \{q:\|q-\tilde{q}\|_{\infty}<1\}.
$

Note that the neighborhood cardinality is not constant: $1\leq|\cl{N}_1(\tilde{q})|\leq 2^d.$ Moreover, in the neighborhoods, the  kernel values play the role of weights summing up to one:
\[\sum_{q\in\cl{N}_1(\tilde{q})}K(q,\tilde{q})=1\ \text{ and }\ k_{pq} = K(q,\Sigma^{-1}p)\]

\item [Gaussian decimation/interpolation:]$ $\\
The kernel $K(q,\tilde{q})$ is defined now as the product of (non normalized) Gaussian values:
\[
K(q,\tilde{q}) \eqd \prod_{i\in[d]} e^{-(q_i-\tilde{q}_i)^2} = e^{-\|q-\tilde{q}\|^2_2}.
\]
The neighborhoods $\cl{N}_r(\tilde{q}),$ $r>1,$ are defined wrt the infinity norm:
\[
\cl{N}_r(\tilde{q}) \eqd \{q:\|q-\tilde{q}\|_{\infty}<r\}.
\]

Note that the upper bound for the neighborhood cardinality is 
$|\cl{N}_r(\tilde{q})|\leq (2r)^d$. The lower bound for points $\tilde{q}$ with distance from the the signal domain boundary greater than $r$ is achieved on the discrete grid of points and equals to $(2r-1)^d.$ The bounds for neighborhoods cardinalities give an insight on time complexity of decimation/interpolation units as the function of signal domain dimensionality.
\end{description}

\subsubsection{Batch normalization and its gradient flow}

The batch normalization unit performs a kind of data post-processing   for convolution and full connection units. Mathematically it works independently for each feature component $f\in[N_a]$ of their output tensors across the $N_b\cdot|I_s|$ elements of batch and signal domain changing the statistics of such sequences to the zero mean and the unit variance by the standard formula $\tilde{x}\eqd (x-\bar{x})/\sigma_x$, and next restoring  the second order statistics by the trained affine combination: $\beta\tilde{x}+\gamma$.

There are several solid premises\footnote{The batch normalization is a heuristic procedure and the enumerated premises are a kind of justification for using this heuristic.} to do batch normalization:
\begin{enumerate}
  \item The outputs of convolution and full connection as summations of features tend to be drawn from a Gaussian distribution -- therefore preserving the second order statistics means preserving the statistical distributions,
  \item The normalization is an affine transform of batch data to make them on average close to zero (almost all data of any fixed component for the batch belongs now to the interval $[-3,+3]$).
  \item After de-normalization and ReLU operation almost all  feature points in the batch belong to the ball, in the infinity norm, of radius $3\beta/2$.
  \item When $\beta$ and $\gamma$ converges then the above conclusion is true for all data points generated for training data set at this batch normalization unit. This means that the next unit gets the data from a nearly fixed domain at each following training epoch.

\end{enumerate}

\begin{definition}[Batch normalization -- phase one]
Let $b\in[N_b]$, $a\in[N_a^{in}]$, and $p\in I_s^{in}$. Then
\begin{equation}\label{eq:bn-1}
\widetilde{X}^{in}[b,a,p]\eqd \frac{X^{in}[b,a,p]-M[a]}{D[a]} 
\eqd
\frac{X^{in}[b,a,p]-\overbrace{\frac{1}{N_{bs}^{in}}\cdot\sum_{b\in[N_b]}\sum_{p\in I_s}X^{in}[b,a,p]}^{M[a]}}
{\underbrace{\ds\sqrt{\frac{1}{N_{bs}^{in}}\cdot\sum_{b\in[N_b]}\sum_{p\in I_s}\left(X^{in}[b,a,p]\right)^2-M^2[a]+\epsilon}}_{D[a]}}
\end{equation}
where $N_{bs}^{in}\eqd N_b|I_s^{in}|$ and $\epsilon$ -- a small positive number to avoid divisions by zero, e.g. $\epsilon=10^{-7}.$
\end{definition}

\begin{definition}[Batch normalization -- phase two]
Let $b\in[N_b]$, $a\in[N_a^{in}]$, $p\in I_s^{in}$ and $\widetilde{X}^{in}[b,a,p]$ be the normalization tensor after the phase one.
Let $\beta[a]$ and $\gamma[a]$ be the trainable affine transformation coefficients. Then
\begin{equation}\label{eq:bn-2}
\begin{array}{rcl}
X^{out}[b,a,p] &\eqd&  \beta[a]+\gamma[a]\widetilde{X}^{in}[b,a,p]
\end{array}
\end{equation}
where $\epsilon$ -- a small positive number to avoid divisions by zero, e.g. $\epsilon=10^{-7}.$
\end{definition}

\begin{theorem}[Gradient flow for batch normalization]
The gradients wrt to affine coefficients $\beta,\gamma$ for the batch normalization unit are given by the formulas:
\begin{equation}\label{eq:bn-grad-1}
\sp{\cl{E}}{\beta[a]} = \sum_{b\in[N_b]}\sum_{p\in I_s}\ov{X}^{out}[b,a,p],\quad\quad 
\sp{\cl{E}}{\gamma[a]} = \sum_{b\in[N_b]}\sum_{p\in I_s}\widetilde{X}^{in}[b,a,p]\ov{X}^{out}[b,a,p].
\end{equation}
Let $b\in[N_b]$, $a\in[N_a^{in}]$, $p\in I_s^{in},$ $\gamma[a]$ be the trainable scaling coefficient, $\widetilde{X}^{in}[b,a,p]$ be the normalization tensor after phase one, and $D[a]$ be the standard deviation of batch samples in the tensor's feature component $a$ (used to compute the phase one normalization). Then
the gradient flow through the batch normalization unit can be determined as follows:
\begin{equation}\label{eq:bn-grad-2}
\ov{X}^{in}[c,a,q] = \frac{\gamma[a]}{D[a]}\cdot\ov{X}^{out}[c,a,q]
-\frac{1}{N_{bs}^{in}}\cdot\frac{\gamma[a]}{D[a]}\cdot
\left[\widetilde{X}^{in}[c,a,q]\cdot\sp{\cl{E}}{\gamma[a]}+\sp{\cl{E}}{\beta[a]}\right]
\end{equation}
\end{theorem}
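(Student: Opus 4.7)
The plan is to obtain all three formulas from a single pass of the chain rule, with the main technical care needed in tracking how every input sample contributes to the batch statistics $M[a]$ and $D[a]$. The gradients with respect to $\beta$ and $\gamma$ are immediate: since $X^{out}[b,a,p]$ depends on $\beta[a]$ additively and on $\gamma[a]$ multiplicatively via $\widetilde{X}^{in}[b,a,p]$, the chain rule over the summed indices $(b,p)$ for each fixed $a$ gives exactly the two expressions in \eqref{eq:bn-grad-1}. I would state these first, because they will later be recognized inside the expression for $\overline{X}^{in}$.

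For the main formula \eqref{eq:bn-grad-2}, I would start from
\[
\overline{X}^{in}[c,a,q]=\gamma[a]\sum_{b\in[N_b]}\sum_{p\in I_s}\overline{X}^{out}[b,a,p]\cdot\frac{\partial\widetilde{X}^{in}[b,a,p]}{\partial X^{in}[c,a,q]},
\]
noting that cross-feature derivatives vanish because phase one is applied independently per $a$. The core computation is then the Jacobian of $\widetilde{X}^{in}[b,a,p]=(X^{in}[b,a,p]-M[a])/D[a]$ with respect to $X^{in}[c,a,q]$. The first step is to differentiate the two statistics,
\[
\frac{\partial M[a]}{\partial X^{in}[c,a,q]}=\frac{1}{N_{bs}^{in}},\qquad
\frac{\partial D[a]}{\partial X^{in}[c,a,q]}=\frac{X^{in}[c,a,q]-M[a]}{N_{bs}^{in}\,D[a]}=\frac{\widetilde{X}^{in}[c,a,q]}{N_{bs}^{in}},
\]
using the definition of $D[a]^2$ and the identity $(X^{in}[c,a,q]-M[a])/D[a]=\widetilde{X}^{in}[c,a,q]$. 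Substituting and applying the quotient rule yields
\[
\frac{\partial\widetilde{X}^{in}[b,a,p]}{\partial X^{in}[c,a,q]}=\frac{1}{D[a]}\!\left(\delta_{bc}\delta_{pq}-\frac{1}{N_{bs}^{in}}\right)-\frac{\widetilde{X}^{in}[b,a,p]\,\widetilde{X}^{in}[c,a,q]}{N_{bs}^{in}\,D[a]}.
\]

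The last step is to plug this Jacobian into the sum for $\overline{X}^{in}[c,a,q]$ and split the result into three pieces: the $\delta_{bc}\delta_{pq}$ term collapses to $(\gamma[a]/D[a])\,\overline{X}^{out}[c,a,q]$; the $1/N_{bs}^{in}$ term produces $(\gamma[a]/(N_{bs}^{in}D[a]))\cdot\partial\mathcal{E}/\partial\beta[a]$; and the rank-one correction, after pulling $\widetilde{X}^{in}[c,a,q]$ out of the sum, becomes $(\gamma[a]\widetilde{X}^{in}[c,a,q]/(N_{bs}^{in}D[a]))\cdot\partial\mathcal{E}/\partial\gamma[a]$. Factoring $\gamma[a]/D[a]$ out gives exactly \eqref{eq:bn-grad-2}.

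I expect the main obstacle to be bookkeeping rather than mathematical depth: one has to remember that $M[a]$ and $D[a]$ couple every output sample $(b,p)$ to every input sample $(c,q)$ sharing the feature index $a$, so the naive per-element chain rule misses two of the three terms. The cleanest way to avoid slips is to express $\partial D[a]/\partial X^{in}[c,a,q]$ in terms of $\widetilde{X}^{in}[c,a,q]$ early on, so that the final collection step directly recognizes the previously computed gradients for $\beta[a]$ and $\gamma[a]$.
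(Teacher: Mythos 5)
Your proposal is correct and follows essentially the same route as the paper's proof: first the $\beta$ and $\gamma$ gradients by the direct chain rule, then the full Jacobian of the normalized output with respect to $X^{in}[c,a,q]$ (accounting for the dependence through $M[a]$ and $D[a]$, with $\partial D[a]/\partial X^{in}[c,a,q]=\widetilde{X}^{in}[c,a,q]/N_{bs}^{in}$), and finally the three-way split in which the previously computed parameter gradients are recognized. The only cosmetic difference is that you differentiate $\widetilde{X}^{in}$ and carry $\gamma[a]$ in front, while the paper differentiates $X^{out}$ directly; the computation is otherwise identical.
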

\begin{proof} 
We begin from gradients wrt to $\beta$ and $\gamma$ as the proofs follow directly from the chain rule and the latter gradient is used to simplify the formula for the gradient wrt $X^{in}$.
\[
\begin{array}{ll}
\ds\sp{\cl{E}}{\beta[a]} = \sum_{b\in[N_b]}\sum_{p\in I_s} \sp{\cl{E}}{X^{out}[b,a,p]}\cdot\overbrace{\sp{X^{out}[b,a,p]}{\beta[a]}}^{1} =
\sum_{b\in[N_b]}\sum_{p\in I_s}\ov{X}^{out}[b,a,p]\\
\ds\sp{\cl{E}}{\gamma[a]} = \sum_{b\in[N_b]}\sum_{p\in I_s} \sp{\cl{E}}{X^{out}[b,a,p]}\cdot\underbrace{\sp{X^{out}[b,a,p]}{\gamma[a]}}_{\widetilde{X}^{in}[b,a,p]} =
\sum_{b\in[N_b]}\sum_{p\in I_s}\widetilde{X}^{in}[b,a,p]\ov{X}^{out}[b,a,p]
\end{array}
\]
\noindent By the definitions \eqref{eq:bn-1}, \eqref{eq:bn-2}:\\
$
X^{out}[b,a,p]= \beta[a]+\gamma[a]\cdot
\frac{\ds X^{in}[b,a,p]-\overbrace{\frac{1}{N_{bs}^{in}}\cdot\sum_{b\in[N_b]}\sum_{p\in I_s}X^{in}[b,a,p]}^{M[a]}}
{\underbrace{\ds\sqrt{\frac{1}{N_{bs}^{in}}\cdot\sum_{b\in[N_b]}\sum_{p\in I_s}\left(X^{in}[b,a,p]\right)^2-M^2[i]+\epsilon}}_{D[i]}}
$

\noindent Therefore for any $c\in[N_b], q\in I_s$:
\[
\begin{array}{rcl}
\ds\sp{X^{out}[b,a,p]}{X^{in}[c,a,q]} & = &
\gamma[i]\cdot
\ds\frac{
\left(\bm{1}_{bp=cq}-\frac{1}{N_{bs}^{in}}\right)\cdot D[a]-
(X^{in}[b,a,p]-M[a])\cdot
\sp{D[a]}{X^{in}[c,a,q]}
}{D^2[a]}\\[10pt]
&=& 
\frac{\gamma[a]}{N_{bs}^{in}D[a]}\cdot
\ds\left[
N_{bs}^{in}\cdot \bm{1}_{bp=cq}-1-
\frac{\left(X^{in}[b,a,p]-M[a]\right)}{D[a]}
\cdot\frac{\left(X^{in}[c,a,q]-M[a]\right)}{D[a]}
\right]
\end{array}
\]
where $
\bm{1}_{condition} = \left\{
\begin{array}{ll}
1 & \text{if $condition$ is true}\\
0 & \text{otherwise}.
\end{array}
\right.
$

\noindent By the definition \eqref{eq:bn-1}:
\[
\ds\sp{X^{out}[b,a,p]}{X^{in}[c,a,q]}  = 
\frac{\gamma[i]}{N_{bs}^{in}\cdot D[a]}\cdot
\ds
\left[
N_{bs}^{in}\cdot\bm{1}_{bp=cq}-1-\widetilde{X}^{in}[b,a,p]\widetilde{X}^{in}[c,a,q]
\right]
\]

\noindent Hence by the chain rule:
\[
\begin{array}{rcl}
\ov{X}^{in}[c,a,q] &\eqd&
\ds \sp{\cl{E}}{X^{in}[c,a,q]} = 
\sum_{b\in[N_b]}\sum_{p\in I_s}\sp{\cl{E}}{X^{out}[b,a,p]}
\cdot\sp{X^{out}[b,a,p]}{X^{in}[c,a,q]}\\[15pt]
&=&\ds
\sum_{b\in[N_b]}\sum_{p\in I_s}\ov{X}^{out}[b,a,p]
\cdot\sp{X^{out}[b,a,p]}{X^{in}[c,a,q]}
\end{array}
\]

Substitution of the partial derivatives $\sp{X^{out}[b,a,p]}{X^{in}[c,a,q]}$ leads to:
\[
\begin{array}{rcl}
\ov{X}^{in}[c,a,q] &\eqd&
\ds\frac{\gamma[a]}{N_bD[a]}\cdot
\sum_{b\in[N_b]}\sum_{p\in I_s}\left[N_{bs}^{in}\bm{1}_{bp=cq}-1-\widetilde{X}^{in}[c,a,q]
\widetilde{X}^{in}[b,a,p]\right]
\ov{X}^{out}[b,a,p]\\[15pt]
&=&\ds\frac{\gamma[a]}{N_{bs}^{in}D[a]}\cdot
\sum_{b\in[N_b]}\sum_{p\in I_s}\left[N_{bs}^{in}\cdot \bm{1}_{bp=cq}\cdot\ov{X}^{out}[b,a,p]-\ov{X}^{out}[b,a,p]\right.\\[15pt]
&&\quad\quad\quad\quad\quad\quad\quad\quad\quad\quad
\left.-\widetilde{X}^{in}[c,a,q]\cdot
\widetilde{X}^{in}[b,a,p]\cdot\ov{X}^{out}[b,a,p]\right]
\end{array}
\]

\noindent Using the computed gradient for $\gamma$ and $\beta$, we finally get the gradient flow equation:
\[
\begin{array}{rcl}
\ov{X}^{in}[c,a,q] &\eqd& 
\ds\frac{\gamma[a]}{N_{bs}^{in}D[a]}\cdot
\left[
\sum_{b\in[N_b]}\sum_{p\in I_s}N_{bs}^{in}\bm{1}_{bp=cq}\cdot\ov{X}^{out}[b,a,p]-\sum_{b\in[N_b]}\sum_{p\in I_s}\ov{X}^{out}[b,a,p]
\right.\\[5pt]
&&\ds\quad\quad\quad\quad\quad\quad\left.
-\sum_{b\in[N_b]}\sum_{p\in I_s}\widetilde{X}^{in}[c,a,q]\cdot
\widetilde{X}^{in}[b,a,p]\cdot\ov{X}^{out}[b,a,p]
\right]
\\[20pt]
&=&\ds\frac{\gamma[a]}{D[a]}\cdot\ov{X}^{out}[c,a,q]
-\frac{1}{N_{bs}^{in}}\cdot\frac{\gamma[a]}{D[a]}\cdot
\left[\widetilde{X}^{in}[c,a,q]\cdot\sp{\cl{E}}{\gamma[a]}+\sp{\cl{E}}{\beta[a]}\right]
\end{array}
\]
\end{proof}

From the form of gradient flow equation \eqref{eq:bn-grad-2} we conclude an interesting property of the batch unit normalization in the process of optimization for the generalized perceptron: when the stochastic optimizer is close to the local minimum of perceptron's loss function, the batch unit transfers the gradient with the same attenuation or amplification for all batch elements. 

The scaling coefficient $\gamma[a]/D[a]$ depends on batch and feature index $a.$  Since $\gamma[a]$ is, independent of batch, estimation of of standard deviation for feature component $a$ then the actual scaling of gradient depends on ratio of global estimation of standard deviation to the instant standard deviation computed for the batch.

\paragraph{Batch normalization in testing stage?}

It seems that batch normalization should be performed only during the training. In practice it means that despite of compensating character of affine operation determined by $\beta$ and $\gamma$ coefficients, the performance of testing could be inferior than the validation results. On the other hand, preserving this kind of normalization during testing requires estimation of statistics $M[a], D[a]$ wrt the whole training data set. It is performed incrementally by accumulating of batch statistics in the last epoch of the training stage for the mean of batch values and for the mean of their squares. 

If the $k$-th batch (of $K$ batches) has the mean of $N_{bs}^{in}$ input values $X^{in}[b,a,p]$ equal to $M_k[a]]$ and the mean for their squares equal to $S_k[a]$ then for any $a\in[N_a]:$
\begin{equation}
\begin{array}{l}
M[a] \ass 0,\quad S[a]\ass 0\\[5pt]
\text{ for } k<K\quad\quad\left\{
\begin{array}{l}
\ds M[a] \ass \frac{k\cdot M[a]+N_{bs}^{in}\cdot M_{k+1}[a]}{k+1}\\[10pt]
\ds S[a] \ass \frac{k\cdot S[a]+N_{bs}^{in}\cdot S_{k+1}[a]}{k+1}
\end{array}\right.\\[25pt]
D[a] \ass \sqrt{S[a]-M^2[a]+\epsilon}
\end{array}
\end{equation}

Obviously, preserving the batch normalization for testing, results in the overhead of memory for vectors $\beta,\gamma,M,D$ -- together $4N_a$ float numbers.  Hence, the memory overhead for parameters is negligible. If we make the affine operations component-wise in place then there is no memory overhead for the tensor itself.  However, there is slight time overhead. The small benefits of the performance are not usually in practice worth of those overheads.

\subsubsection{Instant normalization}

It was observed that for image compression and image color transfer the better results are achieved if the batch normalization is replaced by the instance normalization. It is similar to its batch counterpart only in the aspect of using the statistical normalization concept, i.e. the data centering by its mean and next division by the data standard deviation. The basic difference is in the way how the local data is collected around the current element of the input tensor:
\begin{itemize}
  \item for the batch normalization the data is collected along the batch axis giving $N_b$ data samples,
  \item for the instance normalization the data is selected for all signal axes and for batch axis\footnote{Instant normalization frequently is used when GAN architecture is used. Then the training is performed with no batches, i.e. $N_b=1.$}.
\end{itemize}

\begin{definition}[Instant normalization]
Let $b\in[N_b]$, $a\in[N_a]$, and $p\in I_s^{in}=I_s^{out}=I_s$. Then
\begin{equation}\label{eq:in-1}
\widetilde{X}^{in}[b,a,p]\eqd \frac{X^{in}[b,a,p]-M[b,a]}{D[b,a]} 
\eqd
\frac{X^{in}[b,a,p]-\overbrace{\frac{1}{N_s}\cdot\sum_{p\in I_s}X^{in}[b,a,p]}^{M[b,a]}}
{\underbrace{\ds\sqrt{\frac{1}{N_s}\cdot\sum_{p\in I_s}\left(X^{in}[b,a,p]\right)^2-M^2[b,a]+\epsilon}}_{D[b,a]}}
\end{equation}
where $N_s\eqd |I_s|$ and $\epsilon$ -- a small positive number to avoid divisions by zero, e.g. $\epsilon=10^{-7}.$
\end{definition}

\begin{theorem}[Gradient flow for instant normalization]
Let $b\in[N_b]$, $a\in[N_a]$, $p\in I_s,$ $\widetilde{X}^{in}[b,a,p]$ be the instantly normalized input tensor, and $D[b,a]$ be the standard deviation of batch samples in the signal domain (used to compute the instant normalization). Then
the gradient flow through the instant normalization unit 
\begin{equation}\label{eq:in-grad-1}
\begin{array}{rcl}
\ov{X}^{in}[b,a,q] 
&=&\ds\frac{1}{D[b,a]}\cdot
\left[
\ov{X}^{out}[b,a,q]
-\left(\frac{1}{N_s}\cdot\sum_{p\in I_s}\ov{X}^{out}[b,a,p]\right)
\right.\\[5pt]
&&\ds\quad\quad\quad\quad\left.
-\widetilde{X}^{in}[b,a,q]\cdot\left(\frac{1}{N_s}\cdot\sum_{p\in I_s}
\widetilde{X}^{in}[b,a,p]\cdot\ov{X}^{out}[b,a,p]\right)
\right]
\end{array}
\end{equation}
\end{theorem}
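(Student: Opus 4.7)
The plan is to mimic the batch normalization proof essentially verbatim, exploiting the fact that instance normalization has exactly the same algebraic form with three notational changes: the averaging set is the signal domain $I_s$ instead of the batch$\times$signal domain, the statistics $M,D$ now carry the extra batch index $b$ (so distinct batch slices are statistically independent), and the affine $\beta,\gamma$ layer is absent, which is equivalent to setting $\gamma[a]=1$, $\beta[a]=0$ in the earlier formulas. Concretely, I will identify $X^{out}[b,a,p]=\widetilde{X}^{in}[b,a,p]$ and follow the same chain-rule derivation.

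First I would compute the Jacobian entry $\partial X^{out}[b,a,p]/\partial X^{in}[c,a,q]$. Because $M[b,a]$ and $D[b,a]$ depend only on the signal-domain slice indexed by $b$, this Jacobian vanishes whenever $b\neq c$, giving a factor $\bm{1}_{b=c}$. For $b=c$ the quotient rule applied to $(X^{in}[b,a,p]-M[b,a])/D[b,a]$, together with $\partial M[b,a]/\partial X^{in}[b,a,q]=1/N_s$ and $\partial D[b,a]/\partial X^{in}[b,a,q]=(X^{in}[b,a,q]-M[b,a])/(N_s D[b,a])$, yields
\[
\sp{X^{out}[b,a,p]}{X^{in}[b,a,q]}
=\frac{1}{N_s D[b,a]}\Bigl[N_s\,\bm{1}_{p=q}-1-\widetilde{X}^{in}[b,a,p]\,\widetilde{X}^{in}[b,a,q]\Bigr],
\]
using the definition of $\widetilde{X}^{in}$ to replace the normalised quantities exactly as in the batch normalization proof.

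Next I apply the chain rule $\ov{X}^{in}[c,a,q]=\sum_{b,p}\ov{X}^{out}[b,a,p]\,\partial X^{out}[b,a,p]/\partial X^{in}[c,a,q]$. The indicator $\bm{1}_{b=c}$ collapses the outer sum over $b$ to the single term $b=c$, after which distributing the three terms in the bracket produces respectively $\ov{X}^{out}[c,a,q]/D[c,a]$ from the Kronecker part, the mean $\frac{1}{N_s}\sum_{p}\ov{X}^{out}[c,a,p]$ from the $-1$ term, and $\widetilde{X}^{in}[c,a,q]\cdot\frac{1}{N_s}\sum_p\widetilde{X}^{in}[c,a,p]\ov{X}^{out}[c,a,p]$ from the cross term. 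Pulling out the common factor $1/D[c,a]$ and renaming $c\to b$ recovers the stated formula \eqref{eq:in-grad-1}.

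The only real obstacle, which is purely bookkeeping rather than conceptual, is making sure that all sums are re-indexed over $I_s$ (not over batch$\times I_s$) and that the dependence structure on $b$ is correctly tracked, so that the Kronecker factor $\bm{1}_{b=c}$ appears and eliminates the batch sum. Once this is done carefully, no analogue of the $\sp{\cl{E}}{\gamma[a]}$ and $\sp{\cl{E}}{\beta[a]}$ accumulators is needed, because the affine phase is absent and the corresponding sums appear explicitly as the two correction terms inside the bracket of \eqref{eq:in-grad-1}.
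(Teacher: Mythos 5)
Your proposal is correct and follows essentially the same route as the paper: quotient rule for the Jacobian $\sp{X^{out}[b,a,p]}{X^{in}[b,a,q]}=\frac{1}{N_sD[b,a]}\bigl[N_s\bm{1}_{p=q}-1-\widetilde{X}^{in}[b,a,p]\widetilde{X}^{in}[b,a,q]\bigr]$, then the chain rule summed over $I_s$ and regrouping into the three terms. The only cosmetic difference is that you carry an explicit $\bm{1}_{b=c}$ factor to kill the cross-batch terms, whereas the paper simply fixes $b$ from the outset; the substance is identical.
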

\begin{proof} 
\noindent By the definition \eqref{eq:in-1}:\quad
$
X^{out}[b,a,p]= 
\frac{\ds X^{in}[b,a,p]-\overbrace{\frac{1}{N_s}\cdot\sum_{p\in[N_s]}X^{in}[b,a,p]}^{M[b,a]}}
{\underbrace{\ds\sqrt{\frac{1}{N_s}\cdot\sum_{p\in I_s}\left(X^{in}[b,a,p]\right)^2-M^2[b,a]+\epsilon}}_{D[b,a]}}
$

\noindent Therefore for any $q\in I_s$:
\[
\begin{array}{l}
\ds\sp{X^{out}[b,a,p]}{X^{in}[b,a,q]} = 
\ds\frac{
\left(\bm{1}_{p=q}-\frac{1}{N_s}\right)\cdot D[b,a]-
(X^{in}[b,a,p]-M[b,a])\cdot
\sp{D[b,a]}{X^{in}[b,a,q]}
}{D^2[b,a]}\\[10pt]
= 
\frac{1}{N_sD[b,a]}\cdot
\ds\left[
N_s\cdot \bm{1}_{p=q}-1-
\frac{\left(X^{in}[b,a,p]-M[b,a]\right)}{D[b,a]}
\cdot\frac{\left(X^{in}[b,a,q]-M[b,a]\right)}{D[b,a]}
\right]
\end{array}
\]

Substitution of the normalized forms  which are computed in the forward signal propagation leads to:
\[
\ds\sp{X^{out}[b,a,p]}{X^{in}[b,a,q]}  = 
\frac{1}{N_s\cdot D[b,a]}\cdot
\left[
N_s\cdot\bm{1}_{p=q}-1-\widetilde{X}^{in}[b,a,p]\widetilde{X}^{in}[b,a,q]
\right]
\]

\noindent Hence again by the chain rule:
\[
\begin{array}{rcl}
\ov{X}^{in}[b,a,q] &\eqd&
\ds \sp{\cl{E}}{X^{in}[b,a,q]} = 
\sum_{p\in I_s}\sp{\cl{E}}{X^{out}[b,a,p]}
\cdot\sp{X^{out}[b,a,p]}{X^{in}[b,a,q]}\\[15pt]
&=&\ds
\sum_{p\in I_s}\ov{X}^{out}[b,a,p]
\cdot\sp{X^{out}[b,a,p]}{X^{in}[b,a,q]}
\end{array}
\]

Substitution of the partial derivatives $\sp{X^{out}[b,a,p]}{X^{in}[b,a,q]}$ gives the formula:
\[
\begin{array}{rcl}
\ov{X}^{in}[b,a,q] &\eqd&
\ds\frac{1}{N_sD[b,a]}\cdot
\sum_{p\in I_s}\left[N_s\bm{1}_{p=q}-1-\widetilde{X}^{in}[b,a,q]
\widetilde{X}^{in}[b,a,p]\right]
\ov{X}^{out}[b,a,p]\\[15pt]
&=&\ds\frac{1}{N_sD[b,a]}\cdot
\sum_{p\in I_s}\left[N_s\cdot \bm{1}_{p=q}\cdot\ov{X}^{out}[b,a,p]-\ov{X}^{out}[b,a,p]\right.\\[15pt]
&&\quad\quad\quad\quad\quad\quad\quad\quad
\left.-\widetilde{X}^{in}[b,a,q]\cdot
\widetilde{X}^{in}[b,a,p]\cdot\ov{X}^{out}[b,a,p]\right]
\end{array}
\]

\noindent  Grouping the terms into separate sums gives interesting statistics of normalized input and the gradient to be transferred back through the normalization unit:
\[
\begin{array}{rcl}
\ov{X}^{in}[b,a,q] &=& 
\ds\frac{1}{N_sD[b,a]}\cdot
\left[
\sum_{p\in I_s}N_s\bm{1}_{p=q}\cdot\ov{X}^{out}[b,a,p]
-\sum_{p\in I_s}\ov{X}^{out}[b,a,p]
\right.\\[5pt]
&&\ds\quad\quad\quad\quad\left.
-\sum_{p\in I_s}\widetilde{X}^{in}[b,a,q]\cdot
\widetilde{X}^{in}[b,a,p]\cdot\ov{X}^{out}[b,a,p]
\right]
\end{array}
\]

The first summation reduces to the term $p=q$ what implies the centering of the gradient data in the signal domain. Last term is related to the correlation between the normalized input signal and the transferred gradient.
\[
\begin{array}{rcl}
\ov{X}^{in}[b,a,q] 
&=&\ds\frac{1}{D[b,a]}\cdot
\left[
\ov{X}^{out}[b,a,q]
-\left(\frac{1}{N_s}\cdot\sum_{p\in I_s}\ov{X}^{out}[b,a,p]\right)
\right.\\[5pt]
&&\ds\quad\quad\quad\quad\left.
-\widetilde{X}^{in}[b,a,q]\cdot\left(\frac{1}{N_s}\cdot\sum_{p\in I_s}
\widetilde{X}^{in}[b,a,p]\cdot\ov{X}^{out}[b,a,p]\right)
\right]
\end{array}
\]
\end{proof}

From the form of gradient flow equation \eqref{eq:in-grad-1} we conclude that the instant normalization significantly changes the gradient $\ov{X}^{out}[b,a,q]$. Firstly it is centered wrt to the signal domain average, next reduced by the normalized input $\widetilde{X}^{in}[b,a,q]$ scaled with the correlation coefficient between the normalized input and the gradient  to be transferred. Finally the result is divided by the standard deviation of the input tensor wrt the signal domain.

\paragraph{Instant normalization in testing stage?}

Contrary to the batch normalization, the presence of instant normalization in the testing stage is mandatory. The unit has no learned parameters, and the instantly normalized tensor is fully determined from the original input tensor.

\subsubsection{ReLU, leaky ReLU, sigmoid, hyperbolic tangent, and their gradient flow equations}

There are few digital media applications (e.g. steganography) using additionally to ReLU other nonlinearities in their neural modules. For completeness their definitions and gradient flow equations are joined below:
{\small
\begin{equation}\label{eq:lst}
\begin{array}{l|l|l}
name & definition & gradient\ flow\ equation\\
\hline
ReLU & 
f(x)\eqd\left\{
\begin{array}{ll}
x & \text{if }\ x>0\\
0 & \text{otherwise}
\end{array}
\right.
&
\ov{X}^{in}[b,i] = 
\ov{X}^{out}[b,i]\cdot
\left\{
\begin{array}{ll}
1 & \text{if }\ X^{in}[b,i]>0\\
0 & \text{if}\ X^{in}[b,i]<0\\
0.5 & \text{otherwise}
\end{array}
\right.
\\[20pt]
\hline
leaky\ ReLU_k & 
f(x)\eqd\left\{
\begin{array}{ll}
x & \text{if }\ x>0\\
\frac{k\cdot x}{100} & \text{otherwise}
\end{array}
\right.
&
\begin{array}{l}
\ov{X}^{in}[b,i] =\\[5pt] 
\ov{X}^{out}[b,i]\cdot
\left\{
\begin{array}{ll}
1 & \text{if }\ X^{in}[b,i]>0\\
k\cdot 0.01 & \text{if}\ X^{in}[b,i]<0\\
0.5+k\cdot 0.005 & \text{otherwise}
\end{array}
\right.
\end{array}
\\[20pt]
\hline
sigmoid & 
f(x)\eqd \frac{1}{1+e^{-x}}
&
\ov{X}^{in}[b,i] =
\ov{X}^{out}[b,i]\cdot X^{out}[b,i]\cdot(1-X^{out}[b,i])
\\[5pt]
\hline
hyperbolic\ tangent & 
f(x)\eqd \frac{e^x-e^{-x}}{e^x+e^{-x}}
&
\ov{X}^{in}[b,i] =
\ov{X}^{out}[b,i]\cdot\left[1-\left(X^{out}[b,i]\right)^2\right]
\end{array}
\end{equation}
}
\section{Symbolic tensor neural network (STNN)}\label{sec:stnn}

\subsection{Symbolic representation: decorated symbols}

\begin{description}
\item[Decorated symbol:]$ $\\
Each symbol represents a processing unit which can be parametrized by filling all or some of five parametric fields, e.g.:\\[5pt]
\framebox{\xgen{1}{2}{3}{4}{5}}
\item[Fields of symbol:]$ $\\
Field's content depends on type of processing unit the symbol represents. In the latter example the symbol $\bb{Q}$ represents an abstract unit, so we can outline the intended use of each field:\\[5pt]
\framebox{\xgen{slicing}{featuring}{optioning\!}{sharing\!}{$o$}}

The compact, and imprecise meaning of the symbol fields can be outlined:
\begin{enumerate}
  \item {\it slicing:}  specifies how the data is selected from the input tensor for the operation the unit is designed for,
  \item {\it featuring:} defines the number of features computed -- it is the depth of output data tensor,
  \item {\it optioning:} specifies options of the operation, e.g. average or max pooling is to be performed,
  \item {\it sharing:} if parameter data is to be shared with other units, a label  for shared tensor is displayed,
  \item {\it o:}  a symbol of  activation \un{o}perations joined to the unit.
\end{enumerate}
\item[Coding kernel parameters:] The convolution unit is based on   the kernel concept. Therefore encoding shape of kernel, its striding, and dilation into the featuring field is important:
\begin{enumerate}
\item Kernel shape in signal domain\footnote{The depth of kernels, i.e. the size of the attribute axis, is not encoded as it is always equal to the depth of the input tensor.}:
\begin{itemize}
  \item by literals -- examples for $width\times height$:
\begin{itemize}
  \item   $3\times 3 \mapsto$ $\boxed{3_k^x3_k^y}$ or $\boxed{3_k^y3_k^x}$ or the compact form $\boxed{3_k}$
  \item   $3\times 5 \mapsto$ $\boxed{3_k^x5_k^y}$ or $\boxed{5_k^y3_k^x},$
  \item for literals the index $k$ can be dropped, i.e. the equivalent forms for the above examples are: $\boxed{3}$, $\boxed{3^x5^y}$, and $\boxed{5^y3^x}.$
\end{itemize}
  \item by parameter $k$ for user defined units\footnote{The value of the parameter is assigned in the expressions defined within user defined unit. If the value is a list then zero based indexing is used.}:
\begin{itemize}
  \item   $a\times b \mapsto$ $\boxed{k^xk^y}$ or $\boxed{k^yk^x}$ if the expression is, e.g. $k^x = 1_{\$}; k^y = 2_{\$}$ with right hand values implied by the arguments of an instance call for the unit instance call $\bb{U}(a,b).$
  \item   $a\times a \mapsto$ $\boxed{k}$ if expression is, e.g. $k = 1_{\$}$ for the unit instance call $\bb{U}(a).$
  \item   $a\times a \mapsto$ $\boxed{k_1}$ if expression is, e.g. $k = 1_{\$}$ for the unit instance call $\bb{U}([?,a]).$
  \item   $a\times b \mapsto$ $\boxed{k_0^xk_1^y}$ if the expression is, e.g. $k = [a,b]$ or $k = 1_{\$}$ for the unit instance call $\bb{U}([a,b]).$
\end{itemize}
\end{itemize}
\item Dilation of kernel signal domain:
\begin{itemize}
  \item by literal: e.g. dilation by 2 is encoded as $\boxed{2_{\delta}},$
  \item by parameter $\delta$: e.g. dilation by 3 is encoded as $\boxed{\delta}$ for $\delta = 1_{\$}$ with $\bb{U}(3).$ 
  \end{itemize}
\item Striding of kernel signal domain:
\begin{itemize}
  \item by literal: e.g. striding by 2 is encoded as $\boxed{2_{\sigma}}$ while fractionally strided (transposed) convolution by $\frac{1}{2}$ is written as $\boxed{2_{\sigma}}$ with the symbol $t$ in the option field,
  \item by parameter $\sigma$: e.g. striding by 3 and by $1/3$ both are encoded as $\boxed{\sigma}$ for $\sigma = 2_{\$}$ with $\bb{U}(?,3)$ -- the fractionally strided (transposed) is distinguished by the symbol $t$ in the option field.
  \end{itemize}
\end{enumerate}
\item[Coding the number of convolutions]: Since the number of convolutions the unit performs equals to the number of features computed by the unit it fills the field of featuring. Therefore, the encoding is direct without using the feature axis label:
\begin{itemize}
  \item by literals: e.g. $\boxed{100}$ if $100$ output features is produced,
  \item by parameter $f$ of scalar value: e.g. $\boxed{f}$ if an expression, e.g. $f = 1_{\$}$ defines the value for $f$ via instance call $\bb{U}(100).$
  \item by parameter $f$ of list value: e.g. $\boxed{f_1}$ if an expression, e.g. $f = 1_{\$}$ defines the value for $f$ via instance call $\bb{U}([?,100]).$
\end{itemize}
\item[Coding pooling window parameters:] The pooling unit is based on pooling  window concept. Therefore encoding shape of pooling window, and its striding is important, as well:
\begin{enumerate}
\item Pooling window shape in the signal domain is encoded exactly in the same way as encoding of the kernel shape.
\item Pooling window striding is an extension of kernel striding (however, without fractional option):
\begin{itemize}
  \item by literals -- examples for $stride\ horizontally\times stride\ vertically$:
\begin{itemize}
  \item   $3\times 3 \mapsto$ $\boxed{3_{\sigma}^x3_{\sigma}^y}$ or $\boxed{3_{\sigma}^y3_k^{\sigma}}$ or the recommended form $\boxed{3_{\sigma}}$
  \item   $3\times 5 \mapsto$ $\boxed{3_{\sigma}^x5_{\sigma}^y}$ or $\boxed{5_{\sigma}^y3_{\sigma}^x}$
\end{itemize}
  \item by parameter $\sigma$ for user defined units:
\begin{itemize}
  \item   $u\times v \mapsto$ $\boxed{\sigma^x\sigma^y}$ or $\boxed{\sigma^y\sigma^x}$ if the expression is, e.g. $\sigma^x = 1_{\$}; \sigma^y = 2_{\$}$ with right hand values implied by the arguments of an instance call for the unit instance call $\bb{U}(u,v).$
  \item   $u\times u \mapsto$ $\boxed{\sigma}$ if expression is, e.g. $\sigma = 1_{\$}$ for the unit instance call $\bb{U}(u).$
  \item   $u\times v \mapsto$ $\boxed{\sigma_0^x\sigma_1^y}$ if the expression is, e.g. $\sigma = [u,v]$ or $\sigma = 1_{\$}$ for the unit instance call $\bb{U}([u,v]).$
\end{itemize}
\end{itemize}
\end{enumerate}
\item[List of symbols:]$ $\\
In the above FP68 DNN algorithms there is limited set of DNN processing units. However, they are representative for the contemporary architectures used in adaptive image recognition and generally in multimedia data processing.
\begin{enumerate}
\item $\bb{C}$: \un{C}onvolution -- linear processing unit,
\item $\bb{F}$: \un{F}ull \un{c}onnection -- linear processing unit,
\item $\bb{P}$: \un{P}ooling -- linear/nonlinear data subsampling unit (options: \un{a}verage, \un{g}lobal average, and \un{m}ax),
\item $\bb{I}$: \un{I}nstance normalization (with none decoration field) -- normalization of data within feature slices (with decorations it is also used  for \un{I}nterpolation),
\item $\bb{Q}$: universal symbol for any processing unit, 
\item $\bb{U}$: \un{U}ser defined processing unit -- an operation specified by the network reusable fragment,
\item $\cl{R}$: \un{R}eLU (Rectified Linear Unit) -- nonlinear element-wise processing unit (denoted by symbol $r$ when included)\footnote{The slope of the leaky ReLU$_k$ defined in \eqref{eq:lst}, is denoted by index $k$: $\cl{R}_k$ and $r_k$ if included.}, 
\item $\cl{B}$: \un{B}atch normalization -- data normalization in data batches (denoted by symbol {\it b} when included),
\item $\cl{S}$: \un{S}igmoid -- nonlinear element-wise processing unit \eqref{eq:lst} (denoted by symbol {\it s} when included)
\item $\cl{H}$: \un{H}yperbolic tangent -- element-wise unit \eqref{eq:lst}  (denoted by symbol {\it h} when included).
\end{enumerate}
Note that in the above schemes element-wise processing units are usually included into the preceding units -- it is not only for compactness of drawing, but for time and memory complexity optimization.
\end{description}

The full picture of data flow will be complete if we know how the processing unit transforms its input tensor into the output one. Each category of units works according its specific procedure. In the presentation of unit "internals" we separate element-wise processing units.

\subsubsection{Convolution unit}

\begin{description}
  \item [Symbolic fields:] By examples: 
\begin{itemize}
\item Example A: \xconv{5^x3^y2_{\sigma}^x1_{\sigma}^y}{512}{p}{}{}
$\left(\text{same as \xconv{5_k^x3_k^y2_{\sigma}^x1_{\sigma}^y}{512}{p}{}{}}\right)$
\begin{enumerate}
\item Fields one and two:
 The input tensor convolution with $N_a'\eqd N_a^{out}=512$ kernels of shape $N_a\times 5\times 3$, where $N_a\eqd N_a^{in}$ follows from the input tensor shape $X^{in}$. Moreover, the convolution is sub-sampled by two in $x$ axis and preserved in $y$ axis. The padding is applied -- by default two pixels in $x$ axis and one pixel in $y$ axis.
\item Field three: The padding is applied and therefore the spatial shape is not changed.
\item Field four: No parameter sharing.
\item Field five: No inclusion of other units.
\end{enumerate}
\item  Example B: \xconv{2_{\sigma}3}{128}{}{\alpha}{br} 
$\left(\text{same  as \xconv{3_k2_{\sigma}}{128}{}{\alpha}{br}}\right)$
$\left(\text{same  as \xconv{2_{\sigma}}{128}{}{\alpha}{br}}\right)$
\begin{enumerate}
\item Fields one and two:
 The input tensor convolution with $N_a'=128$ kernels of shape $N_a\times 3\times \dots\times 3$, where $N_a$ follows from the input tensor shape\footnote{The dimensionality of spatial domain for the input tensor cannot be deduced directly. It will be known when the symbolic inputs of the module are bounded.}. Moreover the convolution is sub-sampled by two for all axes. The last form of the decoration follows from adopting the default kernel size to $3.$ 
\item Field three: The padding is not applied -- therefore the spatial resolution of the output tensor will be reduced slightly.
\item Field four: Parameter sharing with all units having the same sharing label $\alpha$.
\item Field five: The batch normalization (symbol $b$) and ReLU (symbol $r$) are included to be performed just after convolution.
\end{enumerate}
\item Example C: \xconv{3^y3^x3^d}{24}{cp_r}{}{r}
\begin{enumerate}
\item Field one: Each kernel is 3D. 
\item Field two: There are $24$ such kernels.
\item Field three: Padding is reflective ($p_r$) wrt to the  input tensor boundary, the mask is trimmed to be \un{c}asual, i.e. at the lexicographic order the weights with the location pointing above the current element in the input tensor are all equal to zero.
\item Field four: None.
\item Field five: ReLU (symbol $r$) is included to be performed just after convolution.
\end{enumerate}
\item Example D: \xconv{3}{128}{ps_d}{}{b}
\begin{enumerate}
\item Field one: Each kernel is of size $3$ in signal domain. 
\item Field two: There are $128$ such kernels.
\item Field three: The kernel is separable\footnote{The separable kernel $W[d,p]$ wrt to depth axis $d$ satisfies the equation: $W[d,p]=W_1[d]\cdot W_2[p]$, where $p$ belongs to the signal domain.} wrt depth axis and padding is applied.
\item Field four: None.
\item Field five: Batch normalization (symbol $b$) is included to be performed just after convolution.
\end{enumerate}
\item Parametric examples: \xconv{1}{f_0}{}{}{br}, \xconv{k_2}{f_4}{p}{}{br}, \xconv{1}{f_5}{}{}{br}\quad
interpretable for expression $\boxed{\text{\xexpression{k = 1_{\$};\  f = 2_{\$}}}}$ 
\quad 
with the instance call of user defined unit {\it incept:}\quad
\xunitinstance{incept}{1}{\underbrace{[1,3,5,3]}_{1_{\$}}, \underbrace{[8,12,12,8,8,4]}_{2_{\$}}}
\end{itemize}
  \item [Tensor actions:]  If the input tensor is $X^{in}$ then the output tensor $X^{out}$ is defined according the general formula \eqref{eq:conv-gen}. The principle for gradient flow computation by the convolution unit is illustrated in \eqref{code:grad-flow-conv} while the gradient for parameters are given by the explicit equations \eqref{eq:conv-grad-wb}.
\end{description}

\subsubsection{Full connection unit}

\begin{description}
  \item [Symbolic fields:] By example:\ 
\begin{itemize}
\item Example A: \xdense{}{136}{}{\beta}{br}
\begin{enumerate}
\item Field one,three: None
\item Field two: $N_a^{out}=136$ -- the number of output features
\item Field four: Sharing via label $\beta$
\item Field five: Including actions of batch normalization and ReLU
\end{enumerate} 
\item Example B: \xdense{y}{2048}{}{}{}
\begin{enumerate}
\item Field one: The slicing field is used here to define slices along which the full connection is performed. Here $y$ means that along the axis labeled by $y$ there is full connecting with all features, at fixed other signal domain coordinates.
\item Field two: $N_a^{out}=2048$ -- the number of output features
\item Fields three, four, five: None
\end{enumerate} 
\item Parametric example: \xdense{}{1_{\$}}{}{}{br} represents the example A if the instance call has the form $\bb{U}(136).$ 
\end{itemize}
  \item [Tensor actions:] If the input tensor is $X^{in}$ then the output tensor $X^{out}$ is defined according the general formula \eqref{eq:fc-gen}. The gradient flow through the unit is given by the formula \eqref{eq:fc-grad-x} while the gradient for parameters by \eqref{eq:fc-grad-w}, and \eqref{eq:fc-grad-b}.
\end{description}

\subsubsection{Pooling unit}
\begin{description}
  \item [Symbolic fields:] By examples:
  \begin{itemize}
\item Example {\it maximum}: \xpool{3}{}{m}{}{}  
\begin{enumerate}
\item Field one: The {\it max} pooling is performed in signal regular windows of width three. 
\item Field two: None as no change for attribute number in pooling.
\item Field three: {\it m} to indicate the type of pooling
\item Fields four,five: None
\end{enumerate}  
\item Example {\it average}: \xpool{3^x2^y}{}{a}{}{}
\begin{enumerate}
\item Field one: The {\it average} pooling is performed in signal irregular windows of width three and height two.
\item Fields two, four, five: None
\item Field three:  {\it a} to indicate the type of pooling
\end{enumerate} 
\item Example {\it global average}: \xpool{g}{}{a}{}{}  
\begin{enumerate}
\item Field one: {\it g} to indicate global operation in attribute slices.
\item Fields two, four, five: None
\item Field three: {\it a} to indicate type of global operation.
\end{enumerate}
\item Parametric examples:
\begin{itemize}
\item Example A:
\xpool{k_3}{}{m}{}{}\quad
interpretable for expression $\boxed{\text{\xexpression{k = 1_{\$};\  f = 2_{\$}}}}$ 
\quad 
with instance call of user defined unit {\it incept:}\quad
\xunitinstance{incept}{1}{\underbrace{[1,3,5,3]}_{1_{\$}}, \underbrace{[8,12,12,8,8,4]}_{2_{\$}}}
\item Example B:
\xpool{k^x_2k^y_2\sigma^x_2\sigma^y_2}{}{m}{}{}
interpretable for expression\\[5pt]
$
\boxed{\text{
\xexpression{
g = 1_{\$};\ \sigma^x = \lfloor n_x/ g\rfloor;\ \sigma^y = \lfloor n_y/ g\rfloor};\ 
\xexpression{k^x = \sigma^x+n_x\bmod g;\ k^y = \sigma^y+n_y\bmod g}}}
$ 
\quad 
with instance call of user defined unit 
\xunitinstance{spp}{1}{[5,3,1]}, where $n_y\times n_x$ is the spatial resolution of the input for this max pooling unit.
\end{itemize}  
 \end{itemize}
  \item [Tensor actions:] In pooling window defined in the spatial domain, separately in attribute slices of input tensor the aggregation is performed, effectively working as sub-sampling. For global pooling the window is extreme -- it covers all spatial domain and the aggregation computes the single value per input attribute. The drawback of max pooling during training is the significant memory overhead for the maximum locations in each sampling window.
If the input tensor is $X^{in}$ then the output tensor $X^{out}$ is defined according the general formula for max pooling \eqref{eq:max-pool} and for average pooling \eqref{eq:avg-pool}. The gradient flow through the unit is given by the formula \eqref{eq:max-pool-grad} for max pooling and by \eqref{eq:avg-pool-grad} for average pooling.
\end{description}

\subsubsection{Interpolation unit}

\begin{description}
  \item [Symbolic fields:] By example:\ 
\begin{itemize}
\item Example A: \xinterp{125}{b}
\begin{enumerate}
\item Field one: The resolution $n_s^{in}$ of signal axes is changed to $n_s^{out}\eqd \lfloor 1.25\cdot n_s^{in}\rfloor.$ 
\item Field two: The interpolation is \un{b}ilinear.
\item Field three, four, five: None.
\end{enumerate} 
\item Example B: \xinterp{75}{3_rg}
\begin{enumerate}
\item Field one: The resolution $n_s^{in}$ of signal axes is changed to $n_s^{out}\eqd \lfloor 0.75\cdot n_s^{in}\rfloor.$ 
\item Field two: The decimation is \un{g}aussian in the neighborhood of radius $r=3.$
\item Field three, four, five: None.
\end{enumerate} 
\end{itemize}
  \item [Tensor actions:] If the input tensor is $X^{in}$ then the output tensor $X^{out}$ is defined according to the general formula \eqref{eq:interp-nk}. The gradient flow through the unit is given by the formula \eqref{eq:interp-grad}.
\end{description}

\subsubsection{Batch normalization unit}
\begin{description}
  \item [Symbolic fields:] By example:\ \xbatch
\begin{enumerate}
\item Fields one, two, three, four, five: None
\end{enumerate}  
  \item [Tensor actions:] Component-wise operation for this unit and its gradient flow equation are given by the formulas
  \eqref{eq:bn-1}, \eqref{eq:bn-2}, \eqref{eq:bn-grad-1},
  \eqref{eq:bn-grad-2}.
\end{description}

\subsubsection{Instance normalization unit}

\begin{description}
\item [Symbolic fields:]\ \xinstant
\begin{enumerate}
\item Fields one, two, three, four, five: None.
\item If included in another symbol it is represented by $i.$
\end{enumerate}  
  \item [Tensor actions:] The instance normalization is defined by the formula \eqref{eq:in-1} while its gradient flow by \eqref{eq:in-grad-1}.
\end{description}

\subsubsection{ReLU/leaky and ReLU unit}
\begin{description}
  \item [Symbolic fields:] By example:\ \xrelu,\quad \xrelu$_k$
\begin{enumerate}
\item Index $k$ denotes the slope $k\cdot0.01$ for the negative input.
\item Fields one, two, three, four, five: None.
\end{enumerate}  
  \item [Tensor actions:] Component-wise operation for this unit and its gradient flow equation are given in the equations \eqref{eq:lst}.
\end{description}


\subsubsection{Sigmoid and hyperbolic tangent units}
\begin{description}
  \item [Symbolic fields:] By example:\ \xsigmo,\quad \xtanh
\begin{enumerate}
\item Fields one, two, three, four, five: None.
\item If the operations are included into another one then the letters {\it s,h} are used, respectively.
\end{enumerate}  
  \item [Tensor actions:] Component-wise operation for these units and their gradient flow equations are given by the formulas
  \eqref{eq:lst}.
\end{description}


\subsection{Symbolic representation: DAG network of symbols}

STNN symbolic representation includes few simple composition rules for assembling decorated symbols into DAG (Directed Acyclic Graph) of symbols:
\begin{enumerate}
\item Juxtaposition  -- concatenation of decorated symbols into the string of symbols, e.g.:

\xgen{1}{2}{3}{4}{5}
\xgen{1}{2}{3}{4}{5}
$\dots$
\xgen{1}{2}{3}{4}{5}


The symbols represent either processing units or input units. 

\item Concatenated symbols create components being preceded and followed by labels which play the role of links between units, e.g.:

\xfromlabel{\beta_1}
\xgen{1}{2}{3}{4}{5}
$\dots$
\xgen{1}{2}{3}{4}{5} 
\xtolabel{\beta_2}

In case of input unit the label identifies the input tensor, otherwise the label before component identifies a unit from where the data is requested while the label after the component is used to match such requests. 

\item Any symbol representing a processing unit can be followed by a label which can be requested by any other component, e.g.:

\xgen{1}{2}{3}{4}{5}
\xtolabelto{\beta_1}
\xgen{1}{2}{3}{4}{5}
$\dots$
\xgen{1}{2}{3}{4}{5}
\xtolabel{\beta_3}

The labeling should avoid looping of data requests like in:

\xfromlabel{\beta_1}
\xgen{1}{2}{3}{4}{5}
\xtolabelto{\beta_2}
\xgen{1}{2}{3}{4}{5}
\xtolabel{\beta_1}

\item The {\it adder link} is an exception since its output label, say $\beta$, is also the request for data from all units having the same output label $\beta$:

\xfromlabel{\beta_3}
\xgen{1}{2}{3}{4}{5}
\xtolabelto{\beta_4}
\xgen{1}{2}{3}{4}{5}
\dots
\xgen{1}{2}{3}{4}{5}
\xtolabeltoadd{\beta_4}
\xgen{1}{2}{3}{4}{5}
\dots
\xgen{1}{2}{3}{4}{5}
\xtolabel{\beta_5}

In the notation\ 
\xgen{1}{2}{3}{4}{5}
\xtolabeltoadd{\beta_4} \quad the operation $+$ requests the data not only from the preceding unit but also from any unit labeled by $\beta_4.$

\item There is {\it shortcut} notation simplifying the above adder construction by hiding the labels:

\xfromlabel{\beta_3}
\xgen{1}{2}{3}{4}{5}
\Big\langle
\xgen{1}{2}{3}{4}{5}
\dots
\xgen{1}{2}{3}{4}{5}
\Big\rangle
\xgen{1}{2}{3}{4}{5}
\dots
\xgen{1}{2}{3}{4}{5}
\xtolabel{\beta_5}

\item Interconnecting (of blocks and/or symbolic inputs) can be also achieved by merging of inputs and splitting of outputs, e.g.:
\begin{itemize}
\item Merging of tensors labeled by names  of units where they are produced or names for split output branches:

\xmerge{merger\ input\ labels}{y}
\xgen{1}{2}{3}{4}{5}
\xgen{1}{2}{3}{4}{5}
$\dots$
\xgen{1}{2}{3}{4}{5}

where $y$ stands for stacking input tensors along their $y$ axis.
The inputs can be stacked along spatial and feature (depth) axes: $x,\dots,d$. 
\item splitting of output tensor using the specified operator -- then the block label is replaced by the sequence of labels (one for each output of splitter), e.g.:

\xgen{1}{2}{3}{4}{5}
\xgen{1}{2}{3}{4}{5}
$\dots$
\xgen{1}{2}{3}{4}{5}
\xsplit{x}{splitter\ outputs\ names}

where $x$ stands for unstacking output tensor along the $x$ axis.
\end{itemize}


\item Encapsulating components into reusable units by definition of new unit:

\xunitdef{rc}{
\xresid{
\xconv{3}{64}{p}{}{~r}
\xconv{2_{\sigma}3}{64}{p}{}{br}
}{}}

\item Bounding of the input source labels by tensor shapes. It actually gives the symbolic instance of the designed network. The network instance is labeled to be used for the training different options of input tensors like (RGB versus Y images, small resolution versus high resolution images, different batch sizes, etc.) 

\xbound{A}{2}{\alpha_1:=shape, \dots}

Then the consistency of any assignment for input shapes with the defined symbolic DAG can be checked before the actual signal flow takes place.
\end{enumerate}

The STNN (symbolic tensor neural network) can be verified on axes compatibility, however to propagate data signals from inputs to outputs, all symbolic inputs must be bound to the data tensors with real data in allocated buffers. Then in topological order the processing units are visited, since then we are sure that all inputs of the unit are ready to be processed. 

\subsection{BNF grammar for symbolic neural networks}

STNN DAG is defined from units by composition rules. In the similar formal way, the programming languages (like {\tt Javascript}) or file formats (like {\tt JSON}) are described to make parsing and code generation possible.
However, the composition rules use primitive tokens and grammar rules which are augmented by semantic rules.  Moreover, there is a subtle difference between a formal language and the set of all DNN DAGs -- the former is the set of symbol strings while the latter is the set of annotated graphs which are not sequential objects, in general. 

Despite this basic difference, the grammar rules for DNN DAGS are possible as the Backus-Naur Form  (BNF) notation  can be used via the concept of labels assigned to DNN decorated unit symbols. 

BNF definition is the sequence of rule definitions (cf. Knuth's paper \cite{KnuthD64a}):
\[ SyntacticElement ::= CompositionRule\]
The composition rule is an algebraic expression, i.e. the sequence of syntactic operators applied to syntactic elements. We use the following syntactic operators, listed according their decreasing priorities:
\begin{enumerate}
  \item $x^+$ -- composition  by the repetition of elements of type $x$, 
  \item $x^{+\sigma}$ -- composition  by the repetition of elements of type $x$ separated by a symbol $\sigma$ (e.g. $\sigma$ equals to the comma $,$, the dot $.$, or OR symbol $|$),   
  \item $x^{\uplus}$ --  collection of elements of type $x$, i.e. the repetition order is not important, 
\item $xy$ -- juxtaposition, i.e. placing the element $y$ after the element $y$,
  \item $[x]$ -- the element $x$ can be joined optionally in the sequence being defined, 
  \item $\left<x\right>$ -- shortcut brackets which define the residual element\footnote{The {\it shortcut brackets} is the only operator which refers directly to the nonlinear graph structure of STNN. Generally links in the graph being defined, are produced via labeling mechanism of unit blocks, components, and input units.},
  \item $x|y$ -- alternative between elements $x$ and $y$,
  \item $\dots$ -- a list continuation operator for atomic symbols (to be filled by more symbols).
\end{enumerate}

Any formal grammar defining one final syntactic element, like STNN, can be presented in bottom-up or top-down way. We prefer the bottom-up presentation. However, then a bird's eye view for STNN formal element is of merit:
\begin{enumerate}
  \item The instance of STNN is the collection of instances of STNN components augmented by definitions of user units. 
  \item The units defined by user are the special reusable network fragments
 which beside the explicit instancing can be implicitly instanced (like custom units) just by the juxtaposition operator. 
  \item The implicit instancing for the units is possible since their default parameters can be used. 
  \item In case of non default settings of parameters the instance id field is defined instead of custom decorating fields. 
  \item The whole picture is closed by shape bounding operator which assigns the specific input tensor(s) shape(s).
  \item There is possibility to make more than one shape bounding\footnote{It is just a {\it syntactic/semantic sugar}.} for the net in order to test the impact of different resolution and signal channels on the network performance.
\end{enumerate}

We divide the BNF grammar for STNN into the following parts: 
units, inputs, labels, links, blocks, merge/split elements, components, user defined units, and symbolic nets.

\begin{description}
  \item [Units:]$ $\\
\begin{description}
\item [Custom units:]  
\[
\begin{array}{rcl}
CustomUnit          & ::= & 
\decor{\bb{C}}
\left|\ \decor{\bb{F}}\ \right.
\left|\ \decor{\bb{P}}\right.
\left|\ \decor{\bb{I}}\right.
\left|\ \decor{\bb{Q}}\right. 
\dots \left|\ \cl{R} \right.
\left|\ \cl{B}\ \right.
\left|\ \cl{S}\ \right.
\left|\ \cl{H}\ \right.
\left|\ \dots \right.
\end{array}
\]
\item [User defined units:]  
\[
\begin{array}{rcl}
UserUnit          & ::= & \xunit{unit\ name}{[instance\ id]}{}\end{array}
\]
\item [All units:]  
\[
\begin{array}{rcl}
Unit          & ::= & CustomUnit\ |\ UserUnit
\end{array}
\]
\end{description}
The following syntactic/semantic constraints are imposed:
\begin{itemize}
  \item The custom units list of symbols consists of two sublists --
   the list of symbols for computing units which process tensor in element-wise way, and the list for other computing units,
  \item The symbols $\bb{U}$ 
  cannot be used as custom symbol -- it is reserved for user defined units and network fragments.
  \item Each symbol (also calligraphic one) can be decorated by fields -- their existence in the given position and the syntax for their values dependent on units type.
  \item The user unit is defined by the user which joins its definition to the current definition of STNN instance.
  \item The user unit type is identified by its name which is located in the center-up field. The type name is the part of the unit definition.
  \item The user unit instance is optional (center-down field) and it is used only if non default parameters are to be set what requires the explicit instance of the unit.
  \item The explicit instance of any unit can be used in more than one place  -- so we distinguish between the instance creation and its use which actually is instancing of the unit instance\footnote{It is different approach comparing to the objective programming languages where the use of class instance always refers to the same object -- here we get the different part of STNN being defined.}.
  \item All decorating fields get the limited place for their values. Therefore the unit type names are rather acronyms than real names -- for instance \xunit{au}{5}{} or just \xunit{bu}{}{}{}.
\end{itemize}

  \item [Inputs:]
\[
\begin{array}{rcl}
InputUnit &::=& \text{\xin{a}{b}{id\ label}}
\end{array}
\]
  

The symbol $\cl{I}$ is used to define the network's input nodes. Contrary to the network's output nodes which are identified by output tensor labels, the input symbol provides not only the id label of a tensor but also the symbolic information on the geometric axes (field $a$) and the attribute axis (field $b$) with features or channels of the signal tensor. The actual resolution of each axis is assigned along the hyper parameters are defined for a STNN instance. During the forward signal flow the memory block assigned for input tensor is identified via {\it id} of this symbol.

If two different computing units take the data from the same input tensor labeled by $id$ then we precede both units by the input symbol with the same label $id,$ i.e. the input tensor is shared by different units of the network.

Despite the symmetry appeal we do not define the output symbol for the following reasons:
while the input tensor initiates the forward signal propagation, the output data references can be defined in any place of the network, for instance two output nodes can be on the same signal propagation path,
some data tensors are needed only for model training, a subset of them for model use, and some for analysis/visualization tasks -- the three categories which can be handled by labels categories without additional overhead for output symbol $\cl{O}$.

The fields used get the following constraints:
\begin{itemize}
  \item right-up: attribute axis description, e.g.: $3_a, 3_d, bgr$ denote the same \un{a}ttribute (\un{d}epth) axis with three features, the order $bgr$ means that the pixel storage begins form red component,
  \item right-down: signal axes description, e.g.: $xyz$ denotes $3D$ signal domain without resolution specification; $256_y128_x$ denotes the $2D$ signal domain with axis labeled by $yx$, the data is stored in $256$ rows with $128$ pixels each; $200_{xy}$ denotes the image with $200$ columns each of $200$ adjacent pixels; $200_{yx}$ denotes the image with $200$ rows each of $200$ adjacent pixels:
 \xin{xyz}{}{\alpha}\quad \xin{256_y128_x}{3}{rgb}\quad 
 \xin{200_{xy}}{3_a}{\alpha}\quad \xin{200_{yx}}{3_d}{\alpha}
\end{itemize}
  \item [Labels:]$ $\\
\begin{description}
\item [to use in links:] 
\[
\begin{array}{rcl}
Label           & ::= & UniqueName
\end{array}
\]
\item [to enumerate user unit inputs:] 
\[
\begin{array}{rcl}
UnitInputLabel  & ::= & \alpha\ |\ {id}_{\alpha}
\end{array}
\]
\item [to enumerate user unit outputs:]
\[
\begin{array}{rcl}
UnitOutputLabel & ::= & \omega\ |\ {id}_{\omega}
\end{array}
\]
\end{description}
 
 When two units are chained in a STNN component then the number of outputs in the first one should be equal to the number of inputs in the second one. The matching of inputs to outputs is based on their labels, e.g. the input $3_{\alpha}$ requests the data from the output $3_{\omega}$.

\item [Links:]$ $\\
\begin{description}
\item [to multi-cast unit output tensor:]
\[
\begin{array}{rcl|rcl}
ToLabel & ::= & \xtolabel{Label} \quad&\quad
ToLabelTo & ::= & \quad\xtolabelto{Label}\\
ToLabelUnique & ::= &  \xtolabel{Label}
\end{array}
\]
\item [to uni-cast unit outputs:]
\[
\begin{array}{rcl|rcl}
ToLabelAdd & ::= & \quad\xtolabeladd{Label}
 \quad&\quad
ToLabelToAdd & ::= & \quad\xtolabeltoadd{Label}
\end{array}
\]
\item [to request unit output:]
\[
\begin{array}{rcl}
FromLabel    & ::= & \xxfromlabel{Label}
\end{array}
\]
\item [to refer to unit output:]
\[
\begin{array}{rcl}
ToRefLabelTo      & ::= & \quad\xtoreflabelto{Label}
\end{array}
\]
\end{description}
\item [Blocks:]$ $\\
\begin{description}
\item [Residual blocks:]
\[
\begin{array}{rcl}
ShortCutBlock & ::= & 
\left.\xresid{$Unit^+$}{} \quad\right|\quad\xxresid{$Unit^+$}
\end{array}
\]
\item [Cast adders:]
\[
\begin{array}{rcl}
CastAdder & ::= & 
\left.\xresid{$Unit^{+|}$}{} \quad\right|\quad\xxresid{$Unit^{+|}$}
\end{array}
\]

\item [All blocks:]
\[
\begin{array}{rcl}
Block & ::= & Unit^+\ |\ ShortCutBlock\ |\ CastAdder
\end{array}
\]
\end{description}
The shortcut brackets for a sequence of units creates the {\it residual} element which computes the function $Y=G(X)=F(X)+X$. Then the original sequence of units actually computes the function $F(X)=Y-X$ (the claimed residuum). In the gradient dual network the dual residual element transforms the gradient $\od{\cl{E}}{Y}$ to $\od{\cl{E}}{X}$ by the following equation (cf. \eqref{jacoby}):
\begin{equation}
Y=G(X)=F(X)+X \lra \left[\od{\cl{E}}{X} = J_G(X)\od{\cl{E}}{Y} = J_F(X)\od{\cl{E}}{Y}+\od{\cl{E}}{Y}\right]
\end{equation}
It means that the original gradient when the unit sequences are getting  shortcuts, is increased by the input gradient. This amplification of the gradient avoids the usual attenuation by the consecutive layers of the dual network. This mathematical trick enables the very long signal propagation length, i.e. many layers.

The definition of the shortcut block in terms of STNN symbols needs an extra label, say $\beta$, to label the input tensor $X$: 
\begin{center}
\xshortcut{$\ \ Units\ \ $}{\beta}
\end{center}
The incompatibility of input/output tensors and the need for the projection we denote by the label with the exclamation mark:
\begin{center}
\xshortcut{$\ \ Units\ \ $}{!\beta}
\end{center}

The only constraint for the sequence of computing units to be bracketed is the {\it tensor shape compatibility of input and output}: the shape of tensor $X$ equals to the shape of the resulting tensor $F(X).$ 
If this constraint is not respected (like in ResNet-50 architecture) the residual equation is modified by a linear projection operator $P$. The exclamation mark just after the left bracket means that the projection $P$ is implicitly used.

If $P$ is constant then, according to the equation \eqref{eq:grad-flow}, the gradient flow equation for the residual element is modified as follows:
\begin{equation}
Y=G(X)=F(X)+PX \lra \left[\od{\cl{E}}{X} = J_F(X)\od{\cl{E}}{Y}+\tp{P}\od{\cl{E}}{Y}\right]
\end{equation}

The {\it cast adder} is a natural extension of the residual block, i.e. instead of two branches with one branch having no processing or with processing for only shape conforming to the other branch, we have two or more branches with the specified sequence of processing units.

For instance the cast adder block \xresid{
\xconv{2_{\sigma}1}{128}{p}{}{~r}\ $\Big|$
\xconv{3}{128}{ps_d}{}{br}
\xconv{3}{128}{ps_d}{}{b}
\xpool{2_{\sigma}3}{}{m}{}{}
}{} 
defines the block with two branches. If $X$ is the input tensor of this block which processed in branch one to the tensor $Y_1$ and in the branch two to the compatible tensor $Y_2$ then the output tensor $Y=Y_1+Y_2.$

If the exclamation mark $!$ is used within a cast adder then it means that the input tensor is incompatible with branch outputs and it is conformed in its shortcut branch.

\item [Merge/Split:]$ $\\
\begin{description}
\item [to merge unit inputs:]
\[
\begin{array}{rcl}
MergeUnit & ::= & \xmerge{Label\ Label^+}{\Box}\ Unit
\end{array}
\]
\item [to split unit output:]
\[
\begin{array}{rcl}
UnitSplit & ::= & Unit\ \xsplit{\Box}{Label\ Label^+}
\end{array}
\]
\end{description}

  \item [Components:]$ $
\begin{description}
\item [to begin component:]  
\[
\begin{array}{rcl}
BeginComponent & ::= & InputUnit\ |\ FromLabel\ \  Block\\[5pt]&&|\ MergeUnit
\end{array}
\]
\item [to end component:]  
\[
\begin{array}{rcl}
EndComponent   & ::= & ToLabel\ |\ ToLabelAdd\ |\ UnitSplit
\end{array}
\]
\item [to fill component:]  
\[
\begin{array}{rcl}
ComponentElement & ::= & Block\ |\ ToLabelTo\ \ Block\ |\ ToLabelToAdd\ \ Block\\[5pt]
&&|\ ToRefLabelTo\ \ Block
\end{array}
\]
\item [to assemble components:]  
\[
\begin{array}{rcl}
Component    & ::= & BeginComponent\ \ [ComponentElement^+]\ \ EndComponent
\end{array}
\]
\end{description}  

{\it Remark:} The component has always its input tensor and output tensor(s) defined, i.e. the unit labels provide the consistent links between network inputs and outputs.

\item [User defined units:]$ $\\
\begin{description}
\item [to instance unit:]
\[
\begin{array}{rcl}
UserUnitInstance      & ::= &  
\xunitinstance{unit\ name}{instance\ id}{free\ labels\ bounding}
\end{array}
\]
\item [to define unit:]
\[
\begin{array}{rcl}
UserUnitDefinition    & ::= &  \xunitdef{unit\ name}{$Component^{\uplus}$}\quad[UserUnitInstance^+]
\end{array}
\]
\item [to collect unit definitions:]
\[
\begin{array}{rcl}
UserUnitDefinitions &::=& \boxed{UserUnitDefinition^+}
\end{array}
\]
\end{description}  
  \item [Symbolic net:]
\[
\begin{array}{rcl}
SyblicNetInstance & ::= & \xbound{net\ name}{net\ id}{NetTrainParams}\\[5pt]
SyblicNet   & ::= & 
[UserUnitDefinitions]\ Component^{\uplus}\ SyblicNetInstance^{\uplus}
\end{array}
\]
  
\end{description}

\paragraph{Network training parameters\\}

We assume the following syntax for {\it network training parameters}:
\[
\begin{array}{rcl}
InputDef & ::= & InputLabel\ :=\ InputShape;\\
GoalType & ::= & loss\ |\ gain\\
OptimaDef & ::= & LeftBracket\ GoalType, OptimaAcronym,\\&& EquationReference\ RightBracket;\\
NetTrainParams & ::= & InputDef^+\ \ OptimaDef
\end{array}
\]
In order to train the network, beside the input/output labels, somehow the lossy function and its gradient should be specified together with the optimization method, its options and its own parameters, as well. 

Except the gradient computation for lossy function which can be attempted to be described by STNN methodology (cf. Paszke et al. \cite{Paszke17a} -- the paper on automatic differentiation in PyTorch), the optimization can be "symbolized" only partially.

\noindent Acronyms for the selected loss/gain functions:{\it SoftMax, definedGoal}.


\noindent Acronyms for the stochastic gradient descent/ascent (SGD/SGA) methods (cf. Ruder's tutorial \cite{Ruder17a}): {\it MomentumSGD, NesterovSGD, AdagradSGD, AdadeltaSGD, RMSpropSGD, AdamSGD, AdamaxSGD, NadamSGD.}


\subsection{Few notes on loss/gain functions and stochastic optimization}

\subsubsection{Note on SoftMax\label{sec:soft-max}}

Let $y\eqd F(x)\inv{K}$ be the vector of scores for the input $x$. Assume that $x$ belongs to the class with the index $t(x)\in[K].$ Then:
\begin{enumerate}
  \item   $y[k]$ -- the score of class $k$, $k\in[K]$,
  \item   $z[k]\eqd e^{y[k]}$ -- the transformed score which results in logarithmic scale for $y$ components,
  \item   $\ds p_x[k]\eqd \ds\frac{z[k]}{\sum_{i\in[K]}z[i]}$ -- the probability that $x$ belongs to class $k$,
  \item    $\cl{E}(x) \eqd -\ln p_x[t(x)]$ -- the loss function\footnote{Since $-\ln p_k=(-\log p_k)\cdot(1/\log e)$ is proportional to the Shannon's information measure, we should rather use the name {\t Shannon information} of target class for SoftMax distribution. There are two  other names used for the same loss function: (a) KL divergence or (b) cross entropy of $\bm{1}_{k}$ distribution with SoftMax distribution.} computed as the Kullback-Leibler divergence measure $D_{KL}(q_x||p_x)$ between the desired probability distribution $q_x$ for the class $t(x)$ ($q_x[k]\eqd \bm{1}_{k=t(x)}$) and the defined above probability distribution $p_x$,
 \item 
$\boxed{\od{\cl{E}}{y} = p_x-q_x}$ -- the gradient for {\it SoftMax} loss function wrt $y$, where $y=F(x).$
\end{enumerate}

\subsubsection{Note on SGD\label{sec:note-sgd}}

In order to show where is the main loop of each training algorithm using stochastic gradient optimization, the pseudo-codes for the classical {\it Momentum} algorithm \cite{RumelhartD88a} and for the recent {\it AdaM} algorithm \cite{Kingma18a} are given below. For both algorithms their accelerated modifications are also shown (Nesterov \cite{Nesterov83a} and AdaM \cite{Kingma18a}).

\noindent$
\begin{array}{l}
Momentum(\od{f}{\theta},\theta_0,\alpha)\ \ \{\\[2pt]
\bm{in:\ } \od{f}{\theta}: \text{batch gradient of stochastic goal function}\\[3pt] 
\bm{in:\ } \theta_0\inv{n}: \text{initial model}\\[3pt] 
\bm{in:\ } \alpha>0: \text{learning rate}\\[3pt] 
\bm{in:\ } \beta\in[0,1): \text{momentum coefficient}\\[3pt] 
\hspace*{5mm} m_0\ass 0_n\ \text{(*gradient mean*)};\ 
t\ass 0\ \text{(*discrete time*)}
\end{array}
$ 

\noindent$
\begin{array}{l}
\hspace*{5mm} \bm{while}\ (\theta_t \text{ is not stabilized})\ \{\\
\hspace*{10mm} 
\begin{array}{l}
t\ass t+1\\
\left.g_t\ass \od{f}{\theta}(\theta_{t-1},x_{(t-1)N_b:tN_b}) \right\}\xrightarrow{Nesterov}\left\{
g_t\ass \od{f}{\theta}(\theta_{t-1}-\beta m_{t-1},x_{(t-1)N_b:tN_b})
\right.\\
m_t\ass \beta m_{t-1}+\alpha g_t;\ \theta_t\ass \theta_{t-1}-m_t
\end{array}\\
\hspace*{5mm}\}\\
\bm{out:\ } \theta_t\\
\}\\
\text{-----------------}
\end{array}
$ 

Algorithm {\em AdaM} (Adaptive Moments) modifies the model $\theta_t$ at the moment $t$ by the exponentially weighted gradient realizations $g_1,\dots,g_t$ being normalized component-wise by the squared roots of exponentially weighted gradient component squares.

In the algorithm below the following notation is used: (a) the square of the vector $g\inv{R}:$ $g^2 \eqd [g(1)^2,\dots,g(n)^2],$ is implemented as the square of each component for $g,$ (b) the component-wise there are also the squared root of the vector  and the division of  two vectors, (c) addition of scalar $\epsilon$ to the vector  $v$ denotes its addition  to each component of $v$, (d) $x_{a:b}$ denotes the sequence of data elements  $x_a,x_{a+1},\dots,x_{b-1}$.

\noindent$
\begin{array}{l}
AdaM(\od{f}{\theta},\theta_0,\alpha,\beta_1,\beta_2,\epsilon)\ \ \{\\[2pt]
\bm{in:\ } \od{f}{\theta}: \text{batch gradient of stochastic goal function}\\ 
\bm{in:\ } \theta_0\inv{n}: \text{initial model}\\ 
\bm{in:\ } \alpha>0: \text{learning rate}\\ 
\bm{in:\ } \beta_1\in[0,1): \text{rate of decay for gradient weight}\\ 
\bm{in:\ } \beta_2\in\big[0,1\big): \text{rate of decay for squared gradient weight}\\ 
\bm{in:\ } \epsilon>0: \text{constant to avoid zero division}\\ 
\hspace*{5mm} m_0\ass 0_n\ \text{(*gradient mean*)};\ 
v_0\ass 0_n\ \text{(*squared gradient mean*)};\ 
t\ass 0\ \text{(*discrete time*)}
\end{array}
$ 

\noindent$
\begin{array}{l}
\hspace*{5mm} \bm{while}\ (\theta_t \text{ is not stabilized})\ \{\\
\hspace*{10mm} 
\begin{array}{l}
\ \ t\ass t+1;\ g_t\ass \od{f}{\theta}(\theta_{t-1},x_{(t-1)N_b:tN_b})\\
\ \ m_t\ass \beta_1m_{t-1}+(1-\beta_1)g_t;\ v_t\ass \beta_2v_{t-1}+(1-\beta_2)g_t^2\\[2pt]
\left.\begin{array}{l}
\hat{m}_t\ass m_t/(1-\beta_1^t);\ \hat{v}_t\ass v_t/(1-\beta_2^t)\\
\theta_t\ass \theta_{t-1}-\alpha\hat{m}_t/(\sqrt{\hat{v}_t}+\epsilon)
\end{array}\right\}
\xrightarrow{\hat{\epsilon}\eqd \epsilon\sqrt{1-\beta_2}}
\left\{\begin{array}{l}
\alpha_t\ass\alpha\sqrt{1-\beta_2^t}/(1-\beta_1^t)\\
\theta_t\ass \theta_{t-1}-\alpha_tm_t/(\sqrt{v_t}+\hat{\epsilon})
\end{array}\right.
\end{array}\\
\hspace*{5mm}\}\\
\bm{out:\ } \theta_t\\
\}\\
\text{-----------------}
\end{array}
$ 

\noindent Recommended default setting for AdaM are: $\alpha=0.001, \beta_1=0.9, \beta_2=0.999, \epsilon = 10^{-8}.$ However the selection is application dependent.

\paragraph{Quality measures for stochastic online optimization\\}

The algorithm {\it AdaM} is convergent in the stochastic sense, i.e. if the stochastic gradient is bounded to a certain ball then the optimizer {\em AdaM} is {\it accurate in the infinity with degree} $k=\frac{1}{2}$. The mathematical meaning of the concept of accuracy and degree of convergence are defined as follows:

Let the stochastic optimizing problem be defined by the stochastic process of goal function $f_t(\theta)$ with parameters $\theta$ which belong to the certain set of admissible parameters $\cl{A}\subseteq\bb{R}^n.$ Any optimizing algorithm defines the stochastic process of parameters $\theta_t\in\cl{A}$ and implicitly the process of the best parameter $\theta^{\ast}_t$ in time interval $\big[1,t\big]:$ 
\begin{equation}\label{eq:regret-1}
\ds\theta^{\ast}_t \eqd \arg\min_{\theta\in\cl{A}}\frac{\sum_{\tau=1}^tf_{\tau}(\theta)}{t}
\end{equation}
The quality of the optimizing algorithm working in online mode can be measured by the stochastic process of inaccuracy and its limit with probability one:
\begin{enumerate}
  \item {\em stochastic process of inaccuracy} $R_t$ (fancy called regret) is defined by the expected inaccuracy in the time interval $\big[1,t\big]:$
  \begin{equation}\label{eq:regret-2}
  \begin{array}{rcl}
R_0 & \eqd & 0\\[5pt]  
R_t & \eqd & \frac{(t-1)R_{t-1}+f_t(\theta_t)-f_t(\theta_t^{\ast})}{t}\\
& & t=1,\dots
\end{array}
  \end{equation}
  \item {\em limit in the infinity of the inaccuracy process}:\\ $\ds R_{\infty} = \lim_{t\ra\infty}R_t,$ where the convergence occurs with the probability one
\item {\em degree of accuracy:}
{\em the stochastic optimizer is accurate in the infinity with degree $k$, $k\in\bb{R}_+$} iff $R_{\infty}=0$ and $R_t = O(t^{-k})$, i.e. the inaccuracy sequence tends to zero with probability one at least as fast as the sequence $\frac{1}{t^k}.$
\end{enumerate}

\subsubsection{Note on GAN\label{sec:note-gan}}
By the {\it mutatedGAN} lossy function and the optimization algorithm we mean any proposal described in the literature which is based on the probabilistic concept originally described by Goodfellow et al. in their seminal paper \cite{Goodfellow14a}: {\it Generative Adversarial Networks}. GAN consists of two parts: generator $G$ which solves an image transformation problem (e.g. image lossy compression, image high-resolution synthesis, media embedding etc.), discriminator $D$ which computes the probability $D(x)$ that $x$ is coming from the original data source, i.e. it is not generated by $G$. Then $D(x)$ should be maximized for any data $x$ and $D(G(z))$ should be statistically minimized for some latent random data source $z$  with uniform distribution.
 Then the optimal combination of $(G,D)$ creates the Nash equilibrium point:
\begin{equation}
(G^{\ast},D^{\ast}) = \arg\min_G\arg\max_D\left[
\bb{E}_x[\log D(\tilde{x})]+\bb{E}_z[\log(1-D(G(\tilde{z}))]
\right]
\end{equation}
The authors \cite{Goodfellow14a} proved the probabilistic convergence of a generic algorithm which interleaves in batches SGA (stochastic gradient ascent) for the gain function $\log D(x)+\log(1-D(G(z))$ and SGD ((stochastic gradient descent) for the loss function $\bb{E}_z[\log(1-D(G(z)))].$

The mutations of GAN will be referred at the digital media compression (section \ref{sec:compress}) and embedding  (section \ref{sec:embed}) applications. The proposals mainly extend the original probabilistic goal function for GAN by application specific terms generalizing the concept of the generator while keeping the idea of the discriminator. Moreover, since the original optimization procedure appeared to be unstable for digital media applications, the authors provide various heuristic solutions to avoid the problem: from design of special units to modification of interaction between SGD and SGA phases in the GAN's original concept. 

For digital media compression and embedding the ideas coming from applications on image color transfer (cf. Ulyanov \cite{Ulyanov16a}) where the instance normalization replaces the batch normalization, appeared most effective. It means also that the batch gradient is replaced by the instant gradient and in the optimizer, the SGA phase is interleaved with the SGD phase on the basis of single elements drawn randomly from the training data base.

\subsection{Example: VGG-16 architecture in SyblicNet notation\label{sec:vgg}}

K. Simonyan and  A. Zisserman described in their seminal paper \cite{SimonyanZ14a},
{\it Very Deep Convolutional Networks for Large-Scale Image Recognition}, presented the network which now serves for the community as the universal image feature extractor. Those deep features are computed by the VGG-16 network in the unit tensors identified in STNN notation by labels $vgg_i$ with indexes $i=1,\dots,5.$ The original training algorithm for VGG-16 is interested only in the output denoted by the label $out.$ From the size of the last full connection layer we see that VGG-16 can recognize image objects from (up to) $1000$ classes.

\begin{description}
\item[VGG-16 as unstructured STNN:]$ $\\[5pt] 
\hspace*{-12mm}
\doublebox{
\begin{tabular}{l}
\xin{yx}{3}{rgb}
\xconv{3}{64}{}{}{r}
\xconv{3}{64}{}{}{r}
\xpool{2}{}{m}{}{}
\xtoreflabelto{vgg_1}
\xconv{3}{128}{}{}{r}
\xconv{3}{128}{}{}{r}
\xpool{2}{}{m}{}{}
\xtoreflabelto{vgg_2}\\[5pt]
\xconv{3}{256}{}{}{r}
\xconv{3}{256}{}{}{r}
\xconv{3}{256}{}{}{r}
\xpool{2}{}{m}{}{}
\xtoreflabelto{vgg_3}
\xconv{3}{512}{}{}{r}
\xconv{3}{512}{}{}{r}
\xconv{3}{512}{}{}{r}
\xpool{2}{}{m}{}{}
\xtoreflabelto{vgg_4}\\[5pt]
\xconv{3}{512}{}{}{r}
\xconv{3}{512}{}{}{r}
\xconv{3}{512}{}{}{r}
\xpool{2}{}{m}{}{}
\xtoreflabelto{vgg_5}
\xdense{}{4096}{}{}{}
\xdense{}{4096}{}{}{}
\xdense{}{1000}{}{}{s}
\xtolabel{score}\\[5pt]
\hline
\xbound{vgg}{1}{
\begin{array}{l}
rgb := 112_{xy}3_c;\ 
optima := [loss, MomentumSGD, SoftMax\ \eqref{eq:soft-max-loss}]
\end{array}
}\\
\xbound{vgg}{2}{
\begin{array}{l}
rgb := 224_{xy}3_c;\ optima := [loss, MomentumSGD, SoftMax\  \eqref{eq:soft-max-loss}]
\end{array}
}
\end{tabular}
}
\item[VGG-16 as structured STNN:\label{item:vgg-16}] User units definitions and their instances:\\[5pt] 
\hspace*{-10mm}
\framebox{
\begin{tabular}{l}
\xunitdef{c2}{
\begin{tabular}{l}
\xexpression{f = 64\cdot 1_{\$}}\\
\hline
\xconv{3}{f}{}{}{r}
\xconv{3}{f}{}{}{r}
\xpool{2}{}{m}{}{}
\end{tabular}
}\ \ 
\xunitinstance{c2}{1}{1}\ \ 
\xunitinstance{c2}{2}{2}
\\[15pt]
\xunitdef{c3}{
\begin{tabular}{l}
\xexpression{f = 256\cdot 1_{\$}}\\
\hline
\xconv{3}{f}{}{}{r}
\xconv{3}{f}{}{}{r}
\xconv{3}{f}{}{}{r}
\xpool{2}{}{m}{}{}
\end{tabular}
}\ \ 
\xunitinstance{c3}{1}{1}\ \ 
\xunitinstance{c3}{2}{2}
\\[15pt]
\xunitdef{fc}{
\xdense{}{1_{\$}}{}{}{}
}\ \ 
\xunitinstance{fc}{1}{4096}\ \ 
\xunitinstance{fc}{2}{1000}
\end{tabular}
}
\item[VGG-16 as structured STNN:] The main architecture -- 
the regular output unit used for training and testing is $O_{out}$ while labels $vgg_i,$ $i=1,\dots,5$, are added to be used for training of other CNN architectures:\\[5pt]
\hspace*{-10mm}
\doublebox{
\begin{tabular}{l}
\xin{yx}{3}{rgb}
\xunit{c2}{1}{}
\xtoreflabelto{vgg_1}
\xunit{c2}{2}{}
\xtoreflabelto{vgg_2}
\xunit{c3}{1}{}
\xtoreflabelto{vgg_3}
\xunit{c3}{2}{}
\xtoreflabelto{vgg_4}
\xunit{c3}{2}{}
\xtoreflabelto{vgg_5}\\[5pt]
\xunit{fc}{1}{}
\xunit{fc}{1}{}
\xunit{fc}{2}{s}
\xtolabel{out}
\\[5pt]
\hline
\xbound{vgg}{1}{
\begin{array}{l}
rgb := 112_{xy}3_c;\ optima := [loss, MomentumSGD, SoftMax \eqref{eq:soft-max-loss}]
\end{array}
}\\[15pt]
\xbound{vgg}{2}{
\begin{array}{l}
rgb := 224_{xy}3_c;\ optima := [loss, MomentumSGD, SoftMax \eqref{eq:soft-max-loss}]
\end{array}
}
\end{tabular}
}
\end{description}

\subsection{Example: dual symbolic neural network and its BNF grammar}

\begin{description}
\item[Dual VGG-16 as unstructured STNN:]$ $\\[5pt] 
\hspace*{-10mm}
\doublebox{
\begin{tabular}{l}
\tabxin{yx}{3}{rgb}
\tabxconv{3}{64}{}{}{r}
\tabxconv{3}{64}{}{}{r}
\tabxpool{2}{}{m}{}{}
\tabxtoreflabelto{vgg_1}
\tabxconv{3}{128}{}{}{r}
\tabxconv{3}{128}{}{}{r}
\tabxpool{2}{}{m}{}{}
\tabxtoreflabelto{vgg_2}\\[25pt]\hline
\tabxconv{3}{256}{}{}{r}
\tabxconv{3}{256}{}{}{r}
\tabxconv{3}{256}{}{}{r}
\tabxpool{2}{}{m}{}{}
\tabxtoreflabelto{vgg_3}
\tabxconv{3}{512}{}{}{r}
\tabxconv{3}{512}{}{}{r}
\tabxconv{3}{512}{}{}{r}
\tabxpool{2}{}{m}{}{}\\[25pt]\hline
\tabxtoreflabelto{vgg_4}
\tabxconv{3}{512}{}{}{r}
\tabxconv{3}{512}{}{}{r}
\tabxconv{3}{512}{}{}{r}
\tabxpool{2}{}{m}{}{}
\tabxtoreflabelto{vgg_5}\\[25pt]\hline
\tabxdense{}{4096}{}{}{}
\tabxdense{}{4096}{}{}{}
\tabxdense{}{1000}{}{}{s}
\tabxtolabel{score}\\[25pt]\hline\hline
\xbound{vgg}{1}{
\begin{array}{l}
rgb := 112_{xy}3_c;\ optima := [loss, MomentumSGD, SoftMax \eqref{eq:soft-max-loss}]
\end{array}
}\\
\xbound{vgg}{2}{
\begin{array}{l}
rgb := 224_{xy}3_c;\ optima := [loss, MomentumSGD, SoftMax \eqref{eq:soft-max-loss}]
\end{array}
}
\end{tabular}
}
\item[Dual VGG-16 as structured STNN:] User units definitions and their instances:\\[5pt] 
\hspace*{-10mm}
\framebox{
\begin{tabular}{l}
\xhybridunitdef{c2}{
\begin{tabular}{l}
\xexpression{f = 64\cdot 1_{\$}}\\
\hline
\lrtabbeg
\tabxconv{3}{f}{}{}{r}{1}
\lrtab
\tabxconv{3}{f}{}{}{r}{2}
\lrtab
\tabxpool{2}{}{m}{}{}{3}
\lrtabend
\end{tabular}
}
\quad\xhybridunitinstance{c2}{1}{1}\ \ \xhybridunitinstance{c2}{2}{2}
\\[30pt]
\xhybridunitdef{c3}{
\begin{tabular}{l}
\xexpression{f = 256\cdot 1_{\$}}\\
\hline
\lrtabbeg
\tabxconv{3}{f}{}{}{r}{1}
\lrtab
\tabxconv{3}{f}{}{}{r}{2}
\lrtab
\tabxconv{3}{f}{}{}{r}{3}
\lrtab
\tabxpool{2}{}{m}{}{}{4}
\lrtabend
\end{tabular}
}
\quad\xhybridunitinstance{c3}{1}{1}
\ \ \xhybridunitinstance{c3}{2}{2}
\\[30pt]
\xhybridunitdef{fc}{
\lrtabbeg
\tabxdense{}{1_{\$}}{}{}{}{1}
\lrtabend
}
\quad\xhybridunitinstance{fc}{1}{4096}\ \ \xhybridunitinstance{fc}{2}{1000}
\end{tabular}
}

\item[Dual VGG-16 as structured STNN:] The main architecture -- 
the regular output unit used for training and testing is $O_{out}$ while labels $vgg_i,$ $i=1,\dots,5$, are added to be used for training of other CNN architectures:\\[5pt]
\hspace*{-10mm}
\doublebox{
\begin{tabular}{l}
\tabxin{yx}{3}{rgb}
\tabxunit{c2}{1}{}
\tabxtoreflabelto{vgg_1}
\tabxunit{c2}{2}{}
\tabxtoreflabelto{vgg_2}
\tabxunit{c3}{1}{}
\tabxtoreflabelto{vgg_3}\\[30pt]
\hline
\tabxunit{c3}{2}{}
\tabxtoreflabelto{vgg_4}
\tabxunit{c3}{2}{}
\tabxtoreflabelto{vgg_5}
\tabxunit{fc}{1}{}
\tabxunit{fc}{1}{}
\tabxunit{fc}{2}{s}
\tabxtolabel{score}
\\[5pt]
\hline\hline
\xbound{vgg}{1}{
\begin{array}{l}
rgb := 112_{xy}3_c;\ optima := [loss, MomentumSGD, SoftMax \eqref{eq:soft-max-loss}]
\end{array}
}\\
\xbound{vgg}{2}{
\begin{array}{l}
rgb := 224_{xy}3_c;\ optima := [loss, MomentumSGD, SoftMax \eqref{eq:soft-max-loss}]
\end{array}
}
\end{tabular}
}
\end{description}

\subsection{BNF grammar for joined STNN and DSTNN}

The elements of dual symbolic tensor neural network in general depend strongly on their original counterparts. The action of dual unit processing is defined if we get not only the dual input data, the parameters of the unit, but usually the input data of the original unit. Therefore making the separate rules though it is possible, brings little benefit for understanding the error backward flow. The joined (hybrid) definition of BNF for STNN and DSTNN is more relevant to this goal.

Joined BNF definition is the sequence of rule definitions for the original elements together with their dual elements:
\[
\begin{array}{rcl}
\otabx{SyntacticElement}{\ov{Syntactic Element}}{14pt}{10mm}
& ::= &
\otabxx{CompositionRule}{Referenced}{Data}{\ov{CompositionRule}}{10mm}
\end{array}
\]

The composition rules and and the dual composition rule are again algebraic expressions, i.e. the sequence of syntactic operators applied to syntactic elements and their duals. We use the same following syntactic operators as for STNN.


\begin{description}
  \item [Units:]$ $\\
\begin{description}
\item [Custom units:]  
\[
\begin{array}{l}
\begin{array}{rcl}
\otabx{CustomUnit}{\ov{CustomUnit}}{12pt}{10mm}        & ::= & 
\left.
\mymkkh{\hspace*{8mm}\tabxconv{a}{b}{c}{d}{e}{}}{12mm}
\quad\right|\quad 
\left.
\mymkkh{\hspace*{8mm}\tabxdense{a}{b}{c}{d}{e}{}}{12mm}
\quad\right|\quad 
\left.
\mymkkh{\hspace*{8mm}\tabxpool{a}{b}{c}{d}{e}{}}{12mm}
\quad\right|\quad \dots \\[35pt]
&&
\left.
\mymkkh{\hspace*{8mm}\tabxrelu}{12mm}
\quad\right|\quad 
\left.
\mymkkh{\hspace*{8mm}\tabxrelup{k}}{12mm}
\quad\right|\quad 
\left.
\mymkkh{\hspace*{8mm}\tabxbatch}{12mm}
\quad\right|\quad 
\dots \\[35pt]
&&
\left.
\mymkkh{\hspace*{8mm}\tabxsigmo}{12mm}
\quad\right|\quad 
\left.
\mymkkh{\hspace*{8mm}\tabxtanh}{12mm}
\quad\right|\quad 
\dots
\end{array}
\\[70pt]
\text{\scriptsize where\ 
$\left\{
\begin{array}{l}
(abcd):\ unit\ parametrization\ fields\ (used\ if\ defaults\ are\ to\ be\ changed),\\
(e):\ field\ for\ nesting\ unit\ symbols\ (e.g.\ r\ for\ \cl{R}),\\
(X^{in}, Arg):\ data\ referenced\ from\ unit\ resources,\\
(W, B, \Sigma):\ parametric\ data\ referenced\ from\ unit\ resources.
\end{array}
\right.$}
\end{array}
\]

\item [User defined units:]  
\[
\begin{array}{rcl}
\otabx{UserUnit}{\ov{UserUnit}}{12pt}{10mm}  & ::= &
\mymkkh{\hspace*{8mm}\tabxunit{a}{b}{c}}{12mm}
\quad \text{\scriptsize where\ 
$\left\{
\begin{array}{l}
(a)\ unit\ class\ name\ (mandatory),\\
(b)\ instance\ id\ (optional),\\
(c)\ nested\ symbol\ (optional).
\end{array}
\right.
$
}
\end{array}
\]
\item [All units:]  
\[
\begin{array}{rcl}
\otabx{Unit}{\ov{Unit}}{12pt}{10mm}
& ::= & 
\left.\otabx{CustomUnit}{\ov{CustomUnit}}{12pt}{10mm}
\quad \right|\quad
\otabx{UserUnit}{\ov{UserUnit}}{12pt}{10mm}
\end{array}
\]
\end{description}
  \item [Inputs:]
\[
\begin{array}{rcl}
\otabx{InputUnit}{\ov{InputUnit}}{12pt}{10mm}
 &::=& 
 \mymkkh{\hspace*{8mm}\tabxin{a}{b}{label}}{12mm}\\[30pt]&&
 \quad \text{\scriptsize where\ 
$\left\{
\begin{array}{l}
(a)\ signal\ \ domain\ signature\ (mandatory),\\
(b)\ number\ of\ signal\ channels\ (optional).
\end{array}
\right.
$
}
\end{array}
\]
  
  \item [Labels:]$ $\\
\begin{description}
\item [to use in links:]  
\[
\begin{array}{rcl}
\otabx{Label}{\ov{Label}}{12pt}{10mm}
& ::= & 
\otabx{UniqueName}{\ov{UniqueName}}{12pt}{10mm}
\end{array}
\]
\item [to enumerate user unit inputs:]  
\[
\begin{array}{rcl}
\otabx{UnitInputLabel}{\ov{UnitInputLabel}}{12pt}{10mm}
& ::= & 
\left.\otabx{\alpha}{\ov{\alpha}}{12pt}{10mm}
\quad\right|\quad
\otabx{id_{\alpha}}{id_{\ov{\alpha}}}{12pt}{10mm}
\end{array}
\]
\item [to enumerate user unit outputs:]  
\[
\begin{array}{rcl}
\otabx{UnitOutputLabel}{\ov{UnitOutputLabel}}{12pt}{10mm}
& ::= & 
\left.\otabx{\omega}{\ov{\omega}}{12pt}{10mm}
\quad\right|\quad
\otabx{id_{\omega}}{id_{\ov{\omega}}}{12pt}{10mm}
\quad\text{\scriptsize where:\ $id=1,2,3,\dots$}
\end{array}
\]
\end{description}

\item [Links:]$ $\\

\begin{description}
\item [to multi-cast unit output tensor:]
\[
\hspace*{-10mm}
\begin{array}{rcl||rcl}
\otabx{ToLabel}{\ov{ToLabel}}{12pt}{10mm}
 & ::= & 
 \mymkkh{\hspace*{8mm}\tabxtolabel{Label}}{12mm}
  \quad&\quad
\otabx{ToLabelTo}{\ov{ToLabelTo}}{12pt}{10mm}
 & ::= & 
 \mymkkh{\hspace*{8mm}\tabxtolabelto{Label}}{12mm}
 \\[25pt]
\otabx{ToLabelUnique}{\ov{ToLabelUnique}}{12pt}{10mm}
      & ::= & 
 \mymkkh{\hspace*{8mm}\tabxtolabelunique{Label}}{12mm}
&& 
\end{array}
\]

\item [to uni-cast unit outputs:]
\[
\hspace*{-10mm}
\begin{array}{rcl||rcl}
\otabx{ToLabelAdd}{\ov{ToLabelAdd}}{12pt}{10mm}
 & ::= & \mymkkh{\hspace*{8mm}\tabxtolabeladd{Label}}{12mm}
 \quad&\quad
\otabx{ToLabelToAdd}{\ov{ToLabelToAdd}}{12pt}{10mm}
 & ::= & 
 \mymkkh{\hspace*{8mm}\tabxtolabeltoadd{Label}}{12mm}
\end{array}
\]

\item [to request unit output:]
\[
\begin{array}{rcl}
\otabx{FromLabel}{\ov{FromLabel}}{12pt}{10mm}
    & ::= & 
\mymkkh{\hspace*{8mm}\tabxfromlabel{Label}}{12mm}
\end{array}
\]
\item [to refer to unit output:]
\[
\begin{array}{rcl}
\otabx{ToRefLabelTo}{\ov{ToRefLabelTo}}{12pt}{10mm}
      & ::= & 
 \mymkkh{\hspace*{8mm}\tabxtoreflabelto{Label}}{12mm}
\end{array}
\]
\end{description}
\item [Blocks:]$ $\\
\begin{description}
\item [Residual blocks:]
\[
\begin{array}{rcl}
\otabx{ShortCutBlock}{\ov{ShortCutBlock}}{12pt}{10mm}
 & ::= & 
 \mymkkh{\hspace*{8mm}\tabxshortcut{$\ Unit^+\ $}
 {$\ \ov{Unit^+}\ $}{\beta}}{12mm} 
 \\[25pt]&&
 \left|\quad
 \mymkkh{\hspace*{8mm}\tabxshortcut{$\ Unit^+\ $}
 {$\ \ov{Unit^+}\ $}{!\beta}}{12mm} 
 \right.
 \end{array}
\]
\item [All blocks:]
\[
\begin{array}{rcl} 
\otabx{Block}{\ov{Block}}{12pt}{10mm}
 & ::= & 
\left. 
\otabx{Unit^+}{\ov{Unit^+}}{12pt}{10mm}  
 \quad\right|\quad 
\otabx{ShortCutBlock}{\ov{ShortCutBlock}}{12pt}{10mm} 
 \end{array}
\]
\end{description}

\item [Merge/Split:]$ $\\
\begin{description}
\item [to merge unit inputs:]
\[
\begin{array}{rcl}
\otabx{MergeUnit}{\ov{MergeUnit}}{12pt}{10mm}
    & ::= & 
\mymkkh{\hspace*{8mm}\tabxmergeunit{Label\ Label^+}{\Box}{Unit}}{12mm}
\end{array}
\]
\item [to split unit output:]
\[
\begin{array}{rcl}
\otabx{UnitSplit}{\ov{UnitSplit}}{12pt}{10mm}
    & ::= & 
\mymkkh{\hspace*{8mm}\tabxunitsplit{\Box}{Label\ Label^+}{Unit}}{12mm}
\end{array}
\]
  \end{description}

  \item [Components:]$ $\\
\begin{description}
\item [to begin component:]  
\[
\begin{array}{rcl}
\otabx{BeginComponent}{\ov{BeginComponent}}{12pt}{10mm}
 & ::= & 
 \left.
\otabx{InputUnit}{\ov{InputUnit}}{12pt}{10mm} 
\quad\right|\quad
\otabx{MergeUnit}{\ov{MergeUnit}}{12pt}{10mm} 
\\[25pt]
&&\left|\quad
\otabx{FromLabel}{\ov{FromLabel}}{12pt}{10mm}\quad
\otabx{Block}{\ov{Block}}{12pt}{10mm}
\right.
\end{array}
\]

\item [to end component:]  
\[
\begin{array}{rcl}\otabx{EndComponent}{\ov{EndComponent}}{12pt}{10mm}
   & ::= & 
\left.
\otabx{ToLabel}{\ov{ToLabel}}{12pt}{10mm}    
\quad\right|\quad
\otabx{UnitSplit}{\ov{UnitSplit}}{12pt}{10mm} 
\\[25pt]
&&\left|\quad
\otabx{ToLabelAdd}{\ov{ToLabelAdd}}{12pt}{10mm} 
\right.
\end{array}
\]

\item [to fill component:]  
\[
\begin{array}{rcl}
\otabx{ComponentElement}{\ov{ComponentElement}}{12pt}{10mm}
 & ::= &
 \left.
\otabx{Block}{\ov{Block}}{12pt}{10mm} 
\quad\right|\quad
\otabx{ToLabelTo}{\ov{ToLabelTo}}{12pt}{10mm}\quad 
\otabx{Block}{\ov{Block}}{12pt}{10mm} 
\\[25pt]
&&\left|\quad
\otabx{ToLabelToAdd}{\ov{ToLabelToAdd}}{12pt}{10mm}\quad 
\otabx{Block}{\ov{Block}}{12pt}{10mm} 
\right.
\\[25pt]
&&\left|\quad
\otabx{ToRefLabelTo}{\ov{ToRefLabelTo}}{12pt}{10mm}\quad
\otabx{Block}{\ov{Block}}{12pt}{10mm} 
\right. 
\end{array}
\]
\item [to assemble components:]
\[
\begin{array}{rcl}
\otabx{Component}{\ov{Component}}{12pt}{10mm}
    & ::= & 
\otabx{BeginComponent}{\ov{BeginComponent}}{12pt}{10mm} 
\left[
\otabx{ComponentElement^+}{\ov{ComponentElement^+}}{12pt}{10mm}
\right]
\\[25pt]
&&\otabx{EndComponent}{\ov{EndComponent}}{12pt}{10mm}
\end{array}
\]
\end{description}
  
\item [User defined hybrid units:]$ $\\
\begin{description}
\item [to instance hybrid unit:]
\[
\begin{array}{rcl}
UserHybridUnitInstance      & ::= &  
\xhybridunitinstance{unit\ name}{instance\ id}{free\ labels\ bounding}\\[10pt]
\end{array}
\]

\item [to define hybrid unit:]
\[
\begin{array}{rcl}
UserHybridUnitDefinition    & ::= &  \xhybridunitdef{unit\ name}{$\mymkkh{
\begin{array}{l}
Component\\[5pt]
\ov{Component}
\end{array}
}{7mm}^{\uplus}$}\\[15pt]
&&\quad[UserHybridUnitInstance^+]\\[10pt]
UserHybridUnitDefinitions &::=& \boxed{UserHybridUnitDefinition^+}
\end{array}
\]

\item [to see parts of hybrid unit:]
\[
\begin{array}{rcl}
HybridUnitEquivalence    & ::= & 
\overset{xxx}{\text{\Large $\bb{H}$}}\ \equiv
\quad
\mymkkh{
\begin{array}{l}
\overset{xxx}{\text{\Large $\bb{U}$}}\\[1pt]
\overset{xxx}{\text{\Large $\ov{\bb{U}}$}}
\end{array}
}{10mm}
\\[25pt]
HybridUnitInstanceEquivalence    & ::= &
\underset{i}{\overset{xxx}{\text{\Large $\bb{H}$}}}\ \equiv
\quad
\mymkkh{
\begin{array}{l}
\underset{i}{\overset{xxx}{\text{\Large $\bb{U}$}}}\\[7pt]
\underset{i}{\overset{xxx}{\text{\Large $\ov{\bb{U}}$}}}
\end{array}
}{12mm}
\end{array}
\]
\end{description}
  
  \item [Symbolic hybrid net:]
\[
\begin{array}{rcl}
SyblicNetInstance & ::= & \xbound{net\ name}{net\ id}{network\ hyper\ parameters}\\[5pt]
SyblicHybridNet   & ::= & 
[UserHybridUnitDefinitions]\ 
\mymkkh{
\begin{array}{l}
Component\\[5pt]
\ov{Component}
\end{array}
}{7mm}^{\uplus}\\[5pt]
&& SyblicNetInstance^{\uplus}
\end{array}
\]
  
\end{description}

\subsection{Neural modules in symbolic notation -- examples}

\subsubsection{Inception module\label{sec:incept}}

The {\it inception module} combines results of several convolution layers with the fixed kernel sizes (e.g. $1\times 1,$ $3\times 3$, $5\times 5$) and one max pooling layer with small window. It was introduced by Szegedy et al. in the paper \cite{SzegedyC14a}: {\it Going Deeper with Convolutions}.

In order to keep the time complexity reasonable, the convolution with higher sizes are preceded by trivial convolutions with masks of size $1\times 1$ in the reduced amount of filters. For the pooling layer the reduction of filters is performed after the pooling. The inception module is a good opportunity to show how the merging unit and its dual (the splitting unit) are used to create reusable symbolic definition in STNN. For all convolutions the batch normalization and the ReLU activation are added.  

\paragraph{Inception module in STNN\\}

Assuming square domains for convolution kernels and square pooling window, let the hyper-parameter vector $w$ of their widths in signal domain (with pooling window as the last one) represents the inception structure. Typically number of branches $n_b=4$ and $w=[1,3,5,3].$ 

Let $f$ be the hyper-parameter vector specifying number of filters (aka depth) for each convolution layer. The arrangement of elements in $f$ corresponds to the arrangement of elements $w_i$, $i=0,\dots,n_b-1$ respecting the rules with the initial $j\ass 0$:
\begin{enumerate}
  \item if $w_i=1$ or $i=n_b-1$ then the single convolution layer occurs in the branch $i$ and $f_j$ is the number of filters for this layer, $j\ass j+1,$
  \item otherwise $f_j$ is the number of filters  for convolution with $1\times 1$ kernel and $f_{j+1}$ is the depth of the convolution layer which follows this depth reduction layer, afterwards $j\ass j+2.$
\end{enumerate}
For example for the above $w=[1,3,5,3]$, the filters hyper-parameters $f=[8,12,12,8,8,4]$ are interpreted as follows:
\begin{enumerate}
  \item the branch $i=0$ includes one convolution layer of kernel $1\times 1$ with $f_0=8$,
  \item the branch $i=1$ includes one convolution layer of kernel $1\times 1$ with $f_1=12$ filters followed by one convolution layer of kernel $3\times 3$ with $f_2=12$ filters,
  \item the branch $i=2$ includes one convolution layer of kernel $1\times 1$ with $f_3=8$ filters followed by one convolution layer of kernel $5\times 5$ with $f_4=12$ filters,
  \item the branch $i=3$ includes one pooling layer with window $3\times 3$ followed by convolution layer of kernel $1\times 1$ with $f_5=4$ filters.
\end{enumerate}
Let us note note that for the above example the inception module does not change the spatial resolution and produces all together 
$(f_0+f_2+f_4+f_5)$ features.

The user defined unit for the inception could be defined as follows:

\xunitdef{incept}{
\begin{tabular}{l|l}
\xexpression{k = 1_{\$};\  f = 2_{\$}} 
&\\[2pt]
\hline\\[-8pt]
\xfromlabel{\alpha}\xconv{1}{f_0}{}{}{br}\xtolabel{\beta_1} \quad&\quad
\xfromlabel{\alpha}\xconv{1}{f_1}{}{}{r}\xconv{k_1}{f_2}{p}{}{br}\xtolabel{\beta_2}\\[10pt]
\hline\\[-5pt]
\xfromlabel{\alpha}\xconv{1}{f_3}{}{}{r}\xconv{k_2}{f_4}{p}{}{br}\xtolabel{\beta_3} \quad&\quad
\xfromlabel{\alpha}\xpool{k_3}{}{m}{}{}\xconv{1}{f_5}{}{}{br}\xtolabel{\beta_4}\\[10pt]
\hline\\[-5pt]
\quad&\quad\xmerge{\beta_1,\beta_2,\beta_3,\beta_4}{a}\xtolabel{\omega}
\end{tabular}
}\\

\noindent For example we could get a instance of the above inception as follows:\\ 
\hspace*{5mm}\xunitinstance{incept}{1}{[1,3,5,3], [8,12,12,8,8,4]}

\noindent The hybrid unit of inception has the form:

\xhybridunitdef{incept}{
\begin{tabular}{l|l}
\xexpression{w = 1_{\$};\  f = 2_{\$}} &\\[2pt]
\hline\\[-8pt]
\tabxfromlabel{\alpha}\tabxconv{1}{f_0}{}{}{br}
\tabxtolabelunique{\beta_1} \quad&\quad
\tabxfromlabel{\alpha}\tabxconv{1}{f_1}{}{}{r}
\tabxconv{k_1}{f_2}{p}{}{br}\tabxtolabelunique{\beta_2}\\[15pt]
\hline\\[-10pt]
\tabxfromlabel{\alpha}\tabxconv{1}{f_3}{}{}{r}
\tabxconv{k_2}{f_4}{p}{}{br}\tabxtolabelunique{\beta_3} \quad&\quad
\tabxfromlabel{\alpha}\tabxpool{w_3}{}{m}{}{}
\tabxconv{1}{f_5}{}{}{br}\tabxtolabelunique{\beta_4}\\[15pt]
\hline\\[-10pt]
&\quad \tabxmerge{\beta_1,\beta_2,\beta_3,\beta_4}{a}\tabxtolabel{\omega}
\end{tabular}
}

\subsubsection{SPP module\label{sec:spp}}

{\it Spatial Pyramid Pooling} (SPP) is the neural module which combines the outputs of the fixed number of pooling operations. It was proposed by He et al. in the paper \cite{HeK14a}: {\it Spatial Pyramid Pooling in Deep Convolutional Networks for Visual Recognition}.

The pooling spatial windows create the fixed grids (e.g. $3\times 3,$ $2\times 2$, $1\times 1$), organized hierarchically  producing a representation for pyramid of features. SPP is usually an intermediate layer between the feature extraction layers and the classification layers, i.e. after the cascade of CNN-filters and before the final sequence of FC layers.

Let the grid of pooling windows has the resolution $g\times g$ for the tensor spatial domain of resolution $n_x\times n_y$. Then the pooling  strides $s^x,s^y$ and window size $w\times h$ are defined by the formulas:
\begin{equation}
\left[s^x\eqd\left\lfloor\frac{n_x}{g}\right\rfloor,\ s^y\eqd\left\lfloor\frac{n_y}{g}\right\rfloor\right] \lra \left[w = s^x+n_x\bmod g,\ h=s^y+n_y\bmod g\right]
\end{equation} 

Let us notice that in the original paper [He, Zhang et al.]
the authors define the pooling stride $s\eqd\lfloor n/g\rfloor,$ and the pooling window width $w\eqd\lceil n/g\rceil.$ Since 
\[
\lfloor n/g\rfloor\cdot(g-1)+\lceil n/g\rceil = n\ \text{ iff }
n\bmod g<2
\]
then the ignored by the authors, margin $\delta_n$ in the signal domain of input tensors equals to 
\[
\delta_n = 
\left\{
\begin{array}{ll}
0 & \text{if }\ n\bmod g\leq 1,\\
(n\bmod g)-1 & \text{otherwise.}
\end{array}
\right.
\]

\xunitdef{spp}{
\begin{tabular}{l|l}
\multicolumn{2}{l}{$
\begin{array}{l}
\xexpression{g = 1_{\$};\ \sigma^x = \lfloor n_x/ g\rfloor;\ \sigma^y = \lfloor n_y/ g\rfloor}\\
\xexpression{k^x = \sigma^x+n_x\bmod g;\ k^y = \sigma^y+n_y\bmod g}
\end{array}
$}\\[2pt]\hline\\[-8pt]
\xfromlabel{\alpha}
\xpool{k^x_0k^y_0\sigma^x_0\sigma^y_0}{}{m}{}{}\ \xtolabel{\beta_1} 
\quad&\quad
\xfromlabel{\alpha}
\xpool{k^x_1k^y_1\sigma^x_1\sigma^y_1}{}{m}{}{}\ \xtolabel{\beta_2} 
\\[10pt]
\hline\\[-5pt]
\xfromlabel{\alpha}
\xpool{k^x_2k^y_2\sigma^x_2\sigma^y_2}{}{m}{}{}\ \xtolabel{\beta_3} 
\quad&\quad
\xmerge{\beta_1,\beta_2,\beta_3}{\ast}\xtolabel{\omega}
\end{tabular}
}\\

\xunitinstance{spp}{1}{[5,3,1]}

\xhybridunitdef{spp}{
\begin{tabular}{l|l}
\multicolumn{2}{l}{$
\begin{array}{l}
\xexpression{g = 1_{\$};\ s^x = \lfloor n_x/ g\rfloor;\ s^y = \lfloor n_y/ g\rfloor}\\
\xexpression{w = s^x+n_x\bmod g;\ h = s^y+n_y\bmod g}
\end{array}
$}\\[2pt]\hline\\[-8pt]
\tabxfromlabel{\alpha}
\tabxpool{k^x_0k^y_0\sigma^x_0\sigma^y_0}{}{m}{}{}\ \tabxtolabelunique{\beta_1} 
\quad&\quad
\tabxfromlabel{\alpha}
\tabxpool{k^x_1k^y_1\sigma^x_1\sigma^y_1}{}{m}{}{}\ \tabxtolabelunique{\beta_2} 
\\[10pt]
\hline\\[-5pt]
\tabxfromlabel{\alpha}
\tabxpool{k^x_2k^y_2\sigma^x_2\sigma^y_2}{}{m}{}{}\ \tabxtolabelunique{\beta_3} 
\quad&\quad
\tabxmerge{\beta_1,\beta_2,\beta_3}{\ast}\tabxtolabel{\omega}
\end{tabular}
}\\

\subsection{STNN/DSTNN via \LaTeX\ commands}

\subsubsection{\LaTeX\ commands}

\begin{enumerate}
\item Units:
\begin{enumerate}
\item custom units
\begin{enumerate}
\item convolution:
$\Bigg\{$
{\small
\begin{tabular}{ll}
regular: & \verb!\xconv{}{}{}{}{}!\\
dual: & \verb!\dxconv{}!\\
hybrid: & \verb!\tabxconv{}{}{}{}{}!
\end{tabular}
}
\item full connection:\\
$\Bigg\{$
{\small
\begin{tabular}{ll}
regular: & \verb!\xdense{}{}{}{}{}!\\
dual: & \verb!\dxdense{}!\\
hybrid: & \verb!\tabxdense{}{}{}{}{}{}!
\end{tabular}
}
\item pooling:
$\Bigg\{$
{\small
\begin{tabular}{ll}
regular: & \verb!\xpool{}{}{}{}{}!\\
dual: & \verb!\dxpool{}!\\
hybrid: & \verb!\tabxpool{}{}{}{}{}!
\end{tabular}
}
\item interpolating:
$\Bigg\{$
{\small
\begin{tabular}{ll}
regular: & \verb!\xinterp{}{}!\\
dual: & \verb!\dxinterp!\\
hybrid: & \verb!\tabxinterp{}{}!
\end{tabular}
}
\item rectified linear unit:
$\Bigg\{$
{\small
\begin{tabular}{ll}
regular: & \verb!\xrelu!\\
dual: & \verb!\dxrelu!\\
hybrid: & \verb!\tabxrelu!
\end{tabular}
}
\item leaky rectified linear unit:
$\Bigg\{$
{\small
\begin{tabular}{ll}
regular: & \verb!\xrelup{}!\\
dual: & \verb!\dxrelup{}!\\
hybrid: & \verb!\tabxrelup{}!
\end{tabular}
}
\item batch normalization:
$\Bigg\{$
{\small
\begin{tabular}{ll}
regular: & \verb!\xbatch!\\
dual: & \verb!\dxbatch!\\
hybrid: & \verb!\tabxbatch!
\end{tabular}
}
\item instance normalization:
$\Bigg\{$
{\small
\begin{tabular}{ll}
regular: & \verb!\xinstant!\\
dual: & \verb!\dxinstant!\\
hybrid: & \verb!\tabxinstant!
\end{tabular}
}
\item sigmoid unit:
$\Bigg\{$
{\small
\begin{tabular}{ll}
regular: & \verb!\xsigmo!\\
dual: & \verb!\dxsigmo!\\
hybrid: & \verb!\tabxsigmo!
\end{tabular}
}
\item hyperbolic tangent unit:
$\Bigg\{$
{\small
\begin{tabular}{ll}
regular: & \verb!\xtanh!\\
dual: & \verb!\dxtanh!\\
hybrid: & \verb!\tabxtanh!
\end{tabular}
}
\end{enumerate}
\item user defined units:
$\Bigg\{$
{\small
\begin{tabular}{ll}
regular: & \verb!\xunit{}{}{}!\\
dual: & \verb!\dxunit{}{}{}!\\
hybrid: & \verb!\tabxunit{}{}{}!
\end{tabular}
}
\end{enumerate}

\item Inputs:
$\Bigg\{$
{\small
\begin{tabular}{ll}
regular: & \verb!\xin{}{}{}!\\
dual: & \verb!\dxin{}!\\
hybrid: & \verb!\tabxin{}{}{}!
\end{tabular}
}

\item Links:
\begin{enumerate}
\item to multi-cast unit output tensor:
\begin{itemize}
\item at the end of component:
$\Bigg\{$
{\small
\begin{tabular}{ll}
regular: & \verb!\xtolabel{}!\\
dual: & \verb!\dxtolabel{}!\\
hybrid: & \verb!\tabxtolabel{}!
\end{tabular}
}
\item in the middle of the component:
$\Bigg\{$
{\small
\begin{tabular}{ll}
regular: & \verb!\xtolabelto{}!\\
dual: & \verb!\dxtolabelto{}!\\
hybrid: & \verb!\tabxtolabelto{}!
\end{tabular}
}
\item with unique target:
$\Bigg\{$
{\small
\begin{tabular}{ll}
regular: & \verb!\xtolabelunique{}!\\
dual: & \verb!\dxtolabelunique{}!\\
hybrid: & \verb!\tabxtolabelunique{}!
\end{tabular}
}
\end{itemize}
\item to uni-cast unit outputs:
\begin{itemize}
\item at th end of component:
$\Bigg\{$
{\small
\begin{tabular}{ll}
regular: & \verb!\xtolabeladd{}!\\
dual: & \verb!\dxtolabeladd{}!\\
hybrid: & \verb!\tabxtolabeladd{}!
\end{tabular}
}
\item in the middle of the component:
$\Bigg\{$
{\small
\begin{tabular}{ll}
regular: & \verb!\xtolabeltoadd{}!\\
dual: & \verb!\dxtolabelatodd{}!\\
hybrid: & \verb!\tabxtolabeltoadd{}!
\end{tabular}
}
\end{itemize}
\item to request unit output\footnote{The request is issued by the units located in the beginning of components.}:
$\Bigg\{$
{\small
\begin{tabular}{ll}
regular: & \verb!\xfromlabel{}!\\
dual: & \verb!\dxfromlabel{}!\\
hybrid: & \verb!\tabxfromlabel{}!
\end{tabular}
}
\item to refer to unit output:
$\Bigg\{$
{\small
\begin{tabular}{ll}
regular: & \verb!\xtoreflabelto{}!\\
dual: & \verb!\dxtoreflabelto{}!\\
hybrid: & \verb!\tabxtoreflabelto{}!
\end{tabular}
}
\end{enumerate}

\item Residual blocks:
$\Bigg\{$
{\small
\begin{tabular}{ll}
regular (short)\footnote{Without and with exclamation mark.}: & \verb!\xresid{}{}, \xxresid{}!\\
regular: &\verb!\xshortcut{}{}!\\
dual: & \verb!\dxshortcut{}{}!\\
hybrid: & \verb!\tabxshortcut{}{}{}!
\end{tabular}
}
\item Merge/Split:
\begin{enumerate}
\item to merge unit inputs:
$\Bigg\{$
{\small
\begin{tabular}{ll}
regular: & \verb!\xmerge{}{}!\\
dual: & \verb!\dxmerge{}{}!\\
hybrid: & \verb!\tabxmerge{}{}!
\end{tabular}
}

\item to split unit output:
$\Bigg\{$
{\small
\begin{tabular}{ll}
regular: & \verb!\xsplit{}{}!\\
dual: & \verb!\dxsplit{}{}!\\
hybrid: & \verb!\tabxsplit{}{}!
\end{tabular}
}
\end{enumerate}

\item User defined units:
\begin{enumerate}
\item to instance unit:
$\bigg\{$
{\small
\begin{tabular}{ll}
regular: & \verb!\xunitinstance}{}{}{}!\\
hybrid: & \verb!\xhybridunitinstance}{}{}{}!
\end{tabular}
}
\item to define unit:
$\bigg\{$
{\small
\begin{tabular}{ll}
regular: & \verb!\xunitdef{}{}!\\
hybrid: & \verb!\xhybridunitdef{}{}!
\end{tabular}
}
\end{enumerate}

\item Symbolic net instance: $\big\{$\verb!xbound{}{}{}!
\item Flow arrows: 
$\Bigg\{$
{\small
\begin{tabular}{ll}
flow start: \verb!\lrtabbeg!\\
flow middle: \verb!\lrtab!\\
flow end: \verb!\lrtabend!
\end{tabular}
}
\end{enumerate}

\subsubsection{\LaTeX script for VGG-16}

\begin{enumerate}
\item VGG-16 as unstructured STNN:
\begin{scriptsize}
\begin{verbatim}
\xin{yx}{3}{rgb}
\xconv{3}{64}{}{}{r} \xconv{3}{64}{}{}{r} \xpool{2}{}{m}{}{} 
\xtoreflabelto{vgg_1} \xconv{3}{128}{}{}{r} \xconv{3}{128}{}{}{r}
\xpool{2}{}{m}{}{} \xtoreflabelto{vgg_2} \xconv{3}{256}{}{}{r}
\xconv{3}{256}{}{}{r} \xconv{3}{256}{}{}{r} \xpool{2}{}{m}{}{}
\xtoreflabelto{vgg_3} \xconv{3}{512}{}{}{r} \xconv{3}{512}{}{}{r}
\xconv{3}{512}{}{}{r} \xpool{2}{}{m}{}{} \xtoreflabelto{vgg_4}
\xconv{3}{512}{}{}{r} \xconv{3}{512}{}{}{r} \xconv{3}{512}{}{}{r}
\xpool{2}{}{m}{}{} \xtoreflabelto{vgg_5} \xdense{}{4096}{}{}{}
\xdense{}{4096}{}{}{} \xdense{}{1000}{}{}{s}
\xtolabel{score}
\xbound{vgg}{1}{rgb := 112_{xy}3_c;\ 
                optima := [loss, MomentumSGD, SoftMax \eqref{eq:soft-max-loss}]}
\xbound{vgg}{2}{rgb := 224_{xy}3_c;
                optima := [loss, MomentumSGD, SoftMax \eqref{eq:soft-max-loss}]}
\end{verbatim}
\end{scriptsize}
\item VGG-16 as unstructured STNN/DSTNN:
\begin{scriptsize}
\begin{verbatim}
\tabxin{yx}{3}{rgb}
\tabxconv{3}{64}{}{}{r} \tabxconv{3}{64}{}{}{r} \tabxpool{2}{}{m}{}{}
\tabxtoreflabelto{vgg_1} \tabxconv{3}{128}{}{}{r} \tabxconv{3}{128}{}{}{r}
\tabxpool{2}{}{m}{}{} \tabxtoreflabelto{vgg_2} \tabxconv{3}{256}{}{}{r}
\tabxconv{3}{256}{}{}{r} \tabxconv{3}{256}{}{}{r} \tabxpool{2}{}{m}{}{}
\tabxtoreflabelto{vgg_3} \tabxconv{3}{512}{}{}{r} \tabxconv{3}{512}{}{}{r}
\tabxconv{3}{512}{}{}{r} \tabxpool{2}{}{m}{}{} \tabxtoreflabelto{vgg_4}
\tabxconv{3}{512}{}{}{r} \tabxconv{3}{512}{}{}{r} \tabxconv{3}{512}{}{}{r}
\tabxpool{2}{}{m}{}{} \tabxtoreflabelto{vgg_5} \tabxdense{}{4096}{}{}{}
\tabxdense{}{4096}{}{}{} \tabxdense{}{1000}{}{}{s}
\tabxtolabel{score}
\xbound{vgg}{1}{rgb := 112_{xy}3_c;\ 
               optima := [loss, MomentumSGD, SoftMax \eqref{eq:soft-max-loss}]}
\end{verbatim}
\end{scriptsize}
\end{enumerate}

\section{CNN formulas for digital media applications}\label{sec:creams}

\subsection{STNN for image compression\label{sec:compress}} 

Agustsson et al. from ETH in Zurich in their recent paper \cite{Agustsson18a}, {\it Generative Adversarial Networks for Extreme Learned Image Compression}, describe the novel CNN solution for lossy compression of images which outperforms BPG ({\it Better Portable Graphics}) method not only in bitrate (more than 50\% savings) but in the the visual quality, as well. The proposed method elaborates the classical auto-encoder scheme which was so far inferior (even in CNN architectures) than the contemporary compression standards. 

There are many factors considered in this novel neural algorithm: 
\begin{itemize}
\item learning by mixing the traditional rate-distortion (RD) quality measure with GAN (Generative Adversary Network) which looks for discriminant function to maximize the probability of distinguishing between the input and the reconstructed image and in the same time to minimize the lossy function wrt encoder and decoder combination,
  \item learning using mean squared error (MSE) not only for the reconstructed image but for deep image features delivered by VGG-16 network from tensors labeled by $vgg_i$, $i=1,\dots 5,$ and by GAN module, as well,
  \item deep composition of convolutions with ReLU preceded by instant normalization (not batch normalization) in order to get the nonlinear image transformation into a latent representation having wide attribute axis,
  \item high quality approximation of the discrete non-differentiable quantization process by the continuous, differentiable one.
\end{itemize}

The authors in the same paper describe  yet another compression scheme which uses the regions of interests (ROI) to allocate bit budget for them while non-interesting areas are generated with highly realistic content. In this tutorial the STNN notation for ETH CODEC  is presented only global (non ROI) case.

\paragraph{Nash equation for GAN\\}

Let the encoder implements the function $E$, the decoder -- function $G$ (the GAN's generator), the quantizer -- the function $q$, the entropy coder -- the function $H$,  and the GAN's discriminator -- the function  $D$. Let also $d$ denotes the distortion measure. Then for the input image $x$, we get the quantized image $\widehat{w},$ the reconstructed image $\widehat{x}$ and the number of bits $n$ from the arithmetic coder using the probabilities estimated:
\[
\widehat{x} = G(\widehat{w},v),\ \widehat{w} = q(E(x)),\ n = H(\widehat{w})\ .
\]
Then we have the following Nash equation for the optimized functions $E,G,D$ at some weighting coefficients $\lambda,\beta$ and the Least Squares divergence (contrary to the KL divergence) for the probability distributions, and MSE for the image distortion measure $d$ taken between $x$ and the reconstructed image increased by the MSE between $vgg_i$ features, $i=1,\dots,5$:
\begin{equation}\label{eq:eth-codec-gan}
\begin{array}{l}
z\eqd[\widehat{w},v],\ \cl{L}_{GAN}(G,D)\eqd \bb{E}_x[(D(x)-1)^2]+\bb{E}_z[D^2(G(z)] \lra\\[1pt]
(E,G,D) = \ds \arg\min_{E,G}\arg\max_D\left[\cl{L}_{GAN}(G,D)+
\lambda\bb{E}_{x,v}[d(x,G(z)]+\beta\bb{E}_xH(\widehat{w})\right]\\[1pt]
\end{array}
\end{equation}
Authors noticed that the entropy term $H(\widehat{w})$ is always bounded by the number of elements in $\widehat{w}$ times the entropy of the uniform source with $L$ symbols, i.e. times $\log_2L$, where $L$ equal to number of quantization levels and therefore the bitrate can be controlled by the hyper-parameters of the proposed architecture of $E$. According to the authors it justifies dropping of the entropy term from the goal function by setting $\beta=0.$ For the quality term the coefficient $\lambda=10.$

\subsubsection{STNN diagrams for ETH GAN image codec while testing}

\paragraph{Encoder -- modeling for training and testing\\[5pt]}
\xunitdef{model}{
\xconv{7}{60}{p_r}{}{ir}
\xconv{2_{\sigma}3}{120}{p_r}{}{ir}
\xconv{2_{\sigma}3}{240}{p_r}{}{ir}
\xconv{2_{\sigma}3}{480}{p_r}{}{ir}
\xconv{2_{\sigma}3}{960}{p_r}{}{ir}
\xconv{3}{C}{p_r}{}{ir}
}\\[10pt]
where: 
\begin{itemize}
  \item $C$ -- the tuning parameter for RD (Rate-Distortion) trade-off,
  \item $p_r$ -- padding by reflection of tensor's boundary values,
  \item $ir$ -- convolution is followed by instant normalization and ReLU activation, 
  \item $N_b=1$ -- (mini) batch size is equal to $1$.
\end{itemize}
\paragraph{Decoder -- modeling  for testing and training\\[5pt]}
\xunitdef{decmod}{
\xconv{3}{960}{p_r}{}{ir}
\xresid{\xconv{3}{960}{p_r}{}{}\xconv{3}{960}{p_r}{}{}}{9}
\xconv{2_{\sigma}3}{480}{tp_r}{}{ir}
\xconv{2_{\sigma}3}{240}{tp_r}{}{ir}
\xconv{2_{\sigma}3}{120}{tp_r}{}{ir}
\xconv{2_{\sigma}3}{60}{tp_r}{}{ir}
\xconv{7}{3}{p_r}{}{ir}
}\\[10pt]
where the transposed convolutions fit to the encoders convolutions according to the auto-encoder scheme while the preceding sequence of nine residual blocks is experimentally tuned to enhance learning stage and the reconstructed images.
\paragraph{Quantizer while testing\\[5pt]}
\xunitdef{tquant}{
$\ds 
\begin{array}{l}
\text{Unit computing quantization levels for the fixed representatives $\cl{C}\eqd\{c_0,\dots,c_{L-1}\}$:}\\[1pt]
Q := \bb{U}X \lra \left[
q_i := \arg\min_{c\in[C]}\|x_i-c\|_2,\ i\in I_X\right]
\end{array}
$}
where:
\begin{itemize}
\item the notation $\bb{U}X$ represents transforming aspect of the quantization layer,
  \item the quantization process is scalar one,
  \item the quantization intervals are implicitly defined by their representatives,
  \item the representatives are fixed -- they belong to the set of hyper parameters of CNN solution,
  \item the authors considered uniform scalar quantizers, 
  e.g.:\\ for $L=5$ and $\cl{C}\eqd [-2,-1,0,+1,+2]$,
  \item for the above representatives, the quantization $q(x)$ is simply the integer rounding operation of real values, clamping them to the interval $[-2,+2]$: 
\[
\text{clamp}_a^b(i) \eqd \max(a,\min(i,b)) \lra q(x)\eqd \text{clamp}_{-2}^{+2}\left(\left\lfloor x+\frac{1}{2}\right\rfloor\right)
\]
\end{itemize}

\paragraph{Encoder while testing\\[5pt]}
\frame{
\begin{tabular}{l}
\xin{yx}{3}{rgb}
\xunit{model}{}{}
\xunit{tquant}{}{}
\framebox{Arithmetic encoder}
{\large\em Bit Stream}
\end{tabular}
}\\[5pt]
where:
\begin{itemize}
  \item arithmetic encoder is implemented by an external code and it converts the 3D volume of the image features into the bit stream using conditional probabilities of features,
  \item arithmetic encoder either elaborates adaptively the conditional (casual) probability distributions\footnote{Then the quantization representatives must be changed to their indexes in the table $[c_0,\dots,c_{L-1}]$ since the standard arithmetic codecs work with symbols. In the author's example $[-2,-1,0,+1,+2]$ this means adding $2$ and converting to integer value.}or
  \item arithmetic encoder exploits the learned CNN module which computes the conditional probability distribution of $L$ symbols for each feature in 3D volume separately (cf. the modules of encoder and the probability estimator, presented below in the training scheme).
\end{itemize}


\paragraph{Decoder while testing\\[5pt]}
\frame{
\begin{tabular}{l}
{\large\em Bit Stream}
\framebox{Arithmetic decoder}
\xunit{decmod}{}{}
\xtolabel{out}
\end{tabular}
}\\[5pt]
where:
\begin{itemize}
  \item arithmetic decoder is implemented by an external code and it converts the encoded bit stream into 3D volume of the image features using conditional probabilities,
  \item arithmetic decoder elaborates\footnote{In case of CNN probability estimator, the convolutions compute all probabilities before using them to encoding/decoding.} adaptively the conditional probability distributions matching the steps of the arithmetic encoder -- it is the reason of using the casual probability context by the encoder.
\end{itemize}

\subsubsection{STNN diagrams for ETH GAN image codec while training}

\paragraph{Quantizer while learning\\[5pt]}

As described in the previous work by Mentzer et al. in \cite{Mentzer18a}: {\it Conditional Probability Models for Deep Image Compression.}
  {\it Conditional Probability Models for Deep Image Compression,}
the unit is computing the soft quantization  for the fixed representatives $\cl{C}\eqd\{c_0,\dots,c_{L-1}\}$\\[5pt]
\xunitdef{lquant}{
$
\begin{array}{l}
\tilde{Q} := \bb{U}X \lra 
\tilde{q}_i := 
\frac
{\ds\sum_{k=1}^Le^{-\sigma(x_i-c_k)^2}\cdot c_k}
{\ds\sum_{l=1}^Le^{-\sigma(x_i-c_l)^2}},\quad i\in I_X
\end{array}
$
}\\[5pt]

The soft quantization approximates in differentiable way the crisp quantization function which is discrete in nature.

\paragraph{Casual probability estimator\\[5pt]}

The authors of \cite{Mentzer18a} present also the CNN solution to estimate the conditional (casual) probability of quantizing symbols which is necessary for arithmetic coding.

\xunitdef{condprob}{
\xconv{3^y3^x3^d}{24}{cp_r}{}{r}
\xresid{
\xconv{3^y3^x3^d}{24}{cp_r}{}{r}
\xconv{3^y3^x3^d}{24}{cp_r}{}{}
}{}
\xconv{3^{xy}3^d}{96}{cp_r}{}{r}
}

The final feature extractor of the probability estimator computes for each of $24$ features the probability distribution wrt to four symbols\footnote{Note that in the final architecture proposed in \cite{Agustsson18a} the probability estimator was removed from the optimization stage.}. Therefore there are $96=4\cdot 24$ features in total.

\paragraph{GAN architecture: generator\\}

The architecture for the encoder $E$ and the decoder $G$ is based on 
Wang et al. \cite{Wang17a}: {\it High resolution image synthesis and semantic manipulation with conditional GANs.} In turn it is improved architecture of the baseline architecture proposed by 
Isola et. al. in \cite{Isola16a} as {\it pix2pix} network: {\it Image-to-image translation with conditional adversarial networks.} The contribution of Wang's et al. architecture is in the generator $G$ which consists of global part $G_1$ (based on Johnson et. al \cite{Johnson16a}) and the local enhancer $G_2.$ It seems that in this GAN's relay, the torch of ideas has been carried for a while by the team if Radford et.al \cite{Radford15a} who noticed that GAN can give for images an unsupervised representation what appears so important for compression and embedding tasks in digital research area. 

Interestingly, the key paper in the above GAN's relay \cite{Agustsson18a} describes its architecture using symbolic names for units and some combinations of them. The notation apparently was introduced in the appendix of Zhu's et al. paper \cite{Zhu17a}: {\it Unpaired image-to-image translation using cycle-consistent adversarial networks.} For instance the string {\tt 7s1-k} denotes our \xconv{7}{k}{}{}{ir} convolution unit combined with the instance normalization and ReLU.

\begin{itemize}
\item Unit encoder-decoder $encDec$ is defined separately as it is also included into the discriminator of GAN's architecture. Since its architecture is static, i.e. there is no special hyper parameters to be set, this kind of instancing is not applied.

\doublebox{
\xunitdef{encDec}{
\xfromlabel{\alpha}
\xunit{model}{}{}\ \ 
\xunit{lquant}{}{}
\xtolabelto{\beta}
\xunit{decmod}{}{}
\xtolabel{\omega}
}\\
}


\item The codec network is optionally supplemented by the casual probability modeling to control the entropy of the output bit stream. However, in the recent solution authors skipped in the training the probability estimator. It is left here to show alternatives in the design of DNN image codec which plays the role of the generator in GAN's architecture of ETH solution. Therefore, the name of this network is {\it ethEncDec.} 

\doublebox{
\begin{tabular}{l}
\begin{tabular}{l|l}
\xin{yx}{3}{rgb}
\xunit{encDec}{}{}
\xtolabel{encdec} \quad & \quad
\xfromlabel{encdec.\beta}
\xunit{condprob}{}{}
\xtolabel{entropy}
\end{tabular}\\[10pt]
\hline
\xbound{ethEncDec}{}{
\begin{array}{l}
rgb := 3_a112_{yx};\ optima := [loss,AdamSGD,eq.\eqref{eq:eth-enc-dec}]
\end{array}}
\end{tabular}
}

Since, the loss function {\it ethG} depends on features to be computed  by the discriminator {\it nviD} its description is postponed till its definition.
\end{itemize}

\paragraph{GAN architecture: NVI discriminator\\[5pt]}

Since except the one, all authors of the described discriminator (Wang et al. \cite{Wang17a}) were affiliated with NVIDIA corporation, the name of this core part of GAN's solution is {\it NVI discriminator.}
\begin{itemize}
\item Unit $gan$ is typical CNN module which applies $4$ times the convolution operation with $4\times 4$ mask, reducing each time the spatial resolution by $2$ while increasing attribute resolution  $2$ times. The final full connection layer with the sigmoid nonlinearity computes the probability that the input is generated by the codec, i.e. that it is not the original image.

\frame{
\begin{tabular}{l}
\\[-5pt]
\xunitdef{gan}{
\xconv{4_k2_{\sigma}}{64}{p}{a}{r_{20}}
\xtolabelto{1_{\varphi}}
\xconv{4_k2_{\sigma}}{128}{p}{b}{ir_{20}}
\xtolabelto{2_{\varphi}}
\xconv{4_k2_{\sigma}}{256}{p}{c}{ir_{20}}
\xtolabelto{3_{\varphi}}
\xconv{4_k2_{\sigma}}{512}{p}{d}{ir_{20}}
\xtolabelto{4_{\varphi}}
\xdense{}{1}{}{e}{s}
\xtolabel{\omega}
}\\$ $
\end{tabular}
}

The discriminator exploits the three {\it gan} units, each for different resolution of input image:

\frame{
\begin{tabular}{l}
\\[-5pt]
\xunitdef{gan3}{
\begin{tabular}{l}
\xfromlabel{\alpha}\xunit{gan}{}{}\xtolabel{1_{\omega}}\\[5pt]
\xfromlabel{\alpha}\xinterp{50}{}\xtolabelto{reduced_{50}}
\xunit{gan}{}{}\xtolabel{2_{\omega}}\\[5pt]
\xfromlabel{reduced_{50}}\xinterp{50}{}\xunit{gan}{}{}\xtolabel{3_{\omega}}
\end{tabular}
}\\$ $
\end{tabular}
}

The above unit offers $4$ features (labeled by $1_{\varphi},\dots,4_{\varphi}$) computed for three different resolutions of  the input image. Those $12$ tensors will be used by the loss function of {\it ethG}. 

\item The network $ethD$ is to discriminate the original and the reconstructed (decoded) images. To this goal the discrimination function $D^2(x)+(1-D(G(z))^2$ is computed for the original resolution $100\%$, reduced two times giving $50\%$ of the original resolution, and finally $25\%$ resolution is processed:

\doublebox{
\begin{tabular}{l}
\begin{tabular}{l|l}
\xin{yx}{3}{rgb}\xunit{gan3}{}{}
\xtolabel{drgb} \quad&\quad
\xin{yx}{3}{rgb}\xunitcall{encDec}{}{}\xtolabelto{rec}
\xunit{gan3}{}{}\xtolabel{drec}\\[5pt]
\end{tabular}\\[3pt]
\hline
\xbound{nviDiscrim}{}{
\begin{array}{rcl}
rgb &:=& 3_a112_{yx};\\
optima &:=& [gain,AdamSGA,eq.\eqref{eq:nvi-discrim-gain}]
\end{array}}
\end{tabular}
}

\end{itemize}


\paragraph{Gain function for {\it nvi} discriminator\\[5pt]}

ETH authors replaced in the discriminator the usual $\log$  function by the square function. There are two groups for discriminant values produced by {\it nviDiscrim} net -- one coming from two image pyramids: $d^{rgb}\inv{3}$ for the original rgb image and $d^{rec}\inv{3}$ for the recovered one. We combine them by components to get the final discriminant value:
\begin{equation}\label{eq:nvi-discrim-gain}
gain(d^{rgb},d^{rec}) \eqd 
\sum_{i=0}^2\left[\left(d_i^{rgb}\right)^2+\left(1-d_i^{rec}\right)^2\right] = \tp{\bm{1}}_3
\overbrace{
\left[
\begin{array}{l}
(d_0^{rgb})^2+\left(1-d_0^{rec}\right)^2\\[3pt]
(d_1^{rgb})^2+\left(1-d_1^{rec}\right)^2\\[3pt]
(d_2^{rgb})^2+\left(1-d_2^{rec}\right)^2
\end{array}
\right]}^{x+y}
\end{equation}

Hence, using the algebraic units and linear algebra expressions we get the symbolic notation for the gain function of {\it nviDiscrim} tensor neural network:\\

\xgaindef{nviDiscrim}{}{
\begin{tabular}{l}
\xfromlabel{net.drgb}
\xmath{\|x\|^2}
\xtolabel{squares_{rgb}}\\[5pt]
\xfromlabel{net.drec}
\xmath{\|1-x\|^2}
\xtolabel{squares_{rec}}\\[5pt]
\xfromlabel{squares_{rgb},squares_{rec}}
\xmath{\tp{1}_3(x+y)}
\xtolabel{\omega}
\end{tabular}
}

\paragraph{VGG feature extractor\\[5pt]}

The {\it deep features} of VGG-16 net are obtained from five tensors of this standard neural architecture. Namely, the output of instances for the user defined units {\it c2, c3} are  considered as important attributes of natural scene images. 

\begin{itemize}
\item Unit {\it c2}:
\xunitdef{c2}{
\begin{tabular}{l}
\xexpression{f = 64\cdot 1_{\$}}\\
\hline
\xconv{3}{f}{}{}{r}
\xconv{3}{f}{}{}{r}
\xpool{2}{}{m}{}{}
\end{tabular}
}\ \ 
\xunitinstance{c2}{1}{1}\ \ 
\xunitinstance{c2}{2}{2}

\item Unit {\it c3}:
\xunitdef{c3}{
\begin{tabular}{l}
\xexpression{f = 256\cdot 1_{\$}}\\
\hline
\xconv{3}{f}{}{}{r}
\xconv{3}{f}{}{}{r}
\xconv{3}{f}{}{}{r}
\xpool{2}{}{m}{}{}
\end{tabular}
}\ \ 
\xunitinstance{c3}{1}{1}\ \ 
\xunitinstance{c3}{2}{2}

\item Module {\it vgg}:
\xunitdef{vgg}{
\begin{tabular}{l}
\xfromlabel{1_{\alpha}}
\xunit{c2}{1}{}
\xtolabelto{1_{\varphi}}
\xunit{c2}{2}{}
\xtolabelto{2_{\varphi}}
\xunit{c3}{1}{}
\xtolabelto{3_{\varphi}}
\xunit{c3}{2}{}
\xtolabelto{4_{\varphi}}
\xunit{c3}{2}{}
\xtolabel{5_{\varphi}}\\[5pt]
\end{tabular}
}
\end{itemize}

\paragraph{Loss function for {\it ethEncDec}\\[5pt]}

Let 
\begin{itemize}
  \item $k$ be the index of VGG-16 feature, i.e. $vgg_k^{rgb}$ is the $j$-th feature tensor accessed via  $vrgb.\varphi$ group of tensors, and $vgg_k^{rec}$ is the $j$-th feature tensor accessed via $vrec.\varphi,$ 
  \item $i$ be the level index of image pyramid $i=0,1,2$ in the discriminator net, i.e. $d^{rec}_i$ , are $nvi^{rgb}_{ij}$ ($nvi^{rec}_{ij}$) is the $i$-th discrimination value feature tensor accessed via $net.drgb.\omega$ group and the $j$-th feature tensor accessed via $net.drgb.\omega.\varphi$ group. The loss function for {\it ethEncDec} net joins feature distortion terms $\|f^{rgb}-f^{rec}\|$ weighted by $\lambda=10$ and the discrimination term $(1-d)^2$.
\end{itemize}

\begin{equation}\label{eq:eth-enc-dec}
\begin{array}{rcl}
loss(rgb,rec,vgg,nvi,d) 
&\eqd&
\ds\sum_{i=0}^{2}\left(1-d^{rec}_i\right)^2+\|rgb-rec\|^2\\[15pt]
&&
\ds+\lambda\cdot\left(\ds\sum_{k=0}^{4}\left\|vgg_k^{rgb}-vgg_k^{rec}\right\|^2+\ds\sum_{i=0}^{2}\sum_{j=0}^{3}\left\|nvi_{ij}^{rgb}-nvi_{ij}^{rec}\right\|^2\right)
\end{array}
\end{equation}

Access paths -- explanation:
\begin{itemize}
  \item $d^{rec}_{ij}\ \mapsto\ net.drgb.(i+1)_{\omega},$
  \item $rgb\ \mapsto\ net.rgb$,\quad $rec\ \mapsto\ net.rgb$,
  \item $vgg^{rgb}_k\ \mapsto\ vrgb.(k+1)_{\varphi}$,\quad
  $vgg^{rec}_k\ \mapsto\ vrec.(k+1)_{\varphi}$,
  \item $nvi^{rgb}_{ij}\ \mapsto\ net.drgb.(i+1)_{\omega}.(j+1)_{\varphi},$\quad
  $nvi^{rec}_{ij}\ \mapsto\ net.drec.(i+1)_{\omega}.(j+1)_{\varphi}.$
\end{itemize}


\xlossdef{ethEncDec}{}{
\begin{tabular}{l}
\xfromlabel{net.rgb}\xunit{vgg}{}{}\xtolabel{vrgb}\quad\quad
\xfromlabel{net.rec}\xunit{vgg}{}{}\xtolabel{vrec}\\[5pt]
\xfromlabel{vrgb.\varphi,vrec.\varphi}\xmath{\|x-y\|^2}\xtolabel{vssq}\\[5pt]
\xfromlabel{net.drgb.\omega.\varphi,net.drec.\omega.\varphi}\xmath{\|x-y\|^2}\xtolabel{dssq}\\[5pt]
\xfromlabel{net.drgb.\omega}\xmath{\|1-x\|^2}\xtolabel{gdisc}\\[5pt]
\xfromlabel{vssq,dssq,gdisc}\xmath{10\cdot(x+y)+z}\xtolabel{\omega}
\end{tabular}
}

\paragraph{Optimization details\\}

As the SGD optimizer the {\it AdaM} method is applied  with the learning rate of $0.0002$ and the batch size equal to one (remember that the instance normalization is used instead of the batch normalization).

As we know, the entropy term was dropped from the optimization, and the quality together with GAN's goal function were optimized. However, the hyper-parameters wrt bitrate lead to impressive upper bound. Namely, $L=5$ -- the number of quantization levels with representatives $\{-2,1,0,1,2\}$,\ $C=2$ -- the number of features for the encoder output, imply the bitrate:
\[
\frac{H(\widehat{w})}{n_xn_y}=
\frac{\log_2(5)\cdot\frac{n_x}{16}\frac{n_y}{16}\cdot 2}{n_xn_y}\approx 0.018[bpp]
\]
The actual bitrate when the probability estimator of \cite{Mentzer18a} and the arithmetic codec is used is even less.

\subsection{STNN for object recognition in images}

We consider here the speaker recognition neural architecture VGGVox/res50 developed recently by VGG group \cite{ChungJ18a}: {\it VoxCeleb2: Deep Speaker Recognition}. 

Since the speech is represented by spectrograms (magnitude part) we can still claim that speakers are recognized in images. 
Spectrograms were computed using the Hamming window of width
$25ms$ and step $10ms$. For 3 seconds of speech it results in images of resolution $512_y300_x$. Moreover, the mean and variance normalization is performed on every frequency bin of the spectrogram.

The feature extraction is totally based on ResNet-50 presented by He et. al in \cite{HeK15a}: {\it Deep Residual Learning for Image Recognition}. However, the classification part of the  architecture and two loss functions defined by the authors make this solution unique. The authors achieved EER (Equal Error Rate) below $4\%$ for the large and demanding {\it VoxCeleb2} dataset.

\begin{description}
\item [VGGVox/res50 in STNN:] Unit definitions and instancing:\\[5pt]
\framebox{
\begin{tabular}{l}
\xunitdef{conv2}{
\xresid{\xconv{1}{64}{}{}{br}\xconv{3}{64}{p}{}{br}
\xconv{1}{256}{}{}{br}}{3}
}
\\[10pt]
\xunitdef{convx}{
\begin{tabular}{l}
\xexpression{f = 1_{\$} \cdot [128,512]}\\
\hline
\xxresid{
\xconv{1_k2_{\sigma}}{f_0}{p}{}{br}
\xconv{3}{f_0}{p}{}{br}
\xconv{1}{f_1}{}{}{br}
}
$
\text{
\xresid{
\xconv{1}{f_0}{}{}{br}
\xconv{3}{f_0}{p}{}{br}
\xconv{1}{f_1}{}{}{br}
}{2_{\$}}}
$
\end{tabular}
}\\[10pt]
\xunitinstance{convx}{3}{1,3}\ \ \ \ 
\xunitinstance{convx}{4}{2,5}\ \ \ \
\xunitinstance{convx}{5}{4,2}
\end{tabular}
}


\item [VGGVox/res50 in STNN:] The main architecture for $5994$ speakers:\\[5pt]
\doublebox{
\begin{tabular}{l}
\xin{yx}{1}{image}
\xconv{7_k2_{\sigma}}{64}{p}{}{br}
\xpool{2}{}{m}{}{}
\xunit{conv2}{}{}
\xunit{convx}{3}{}
\xunit{convx}{4}{}
\xunit{convx}{5}{}
\xdense{y}{2048}{}{}{}
\xpool{g}{}{a}{}{}
\xdense{}{5994}{}{}{}
\xtolabel{score}\\[5pt]
\hline
\xbound{vox50}{}{
\begin{array}{l}
image := 512_y300_x;\ optima := [loss, MomentumSGD, eq. \eqref{eq:soft-max-loss-stable}]
\end{array}
}
\end{tabular}
}
\end{description}

Like the most of contemporary classifiers in digital media applications the loss function is based on {\it SoftMax} unit as defined in \ref{sec:soft-max}. Just to make the description complete the symbolic loss diagram is implemented using the algebraic notation:
\begin{equation}\label{eq:soft-max-loss}
\text{
\xlossdef{vox50}{byDef}{
\begin{tabular}{l}
\xfromlabel{net.score}\xmath{e^x}\xtolabelto{expScore}\xmath{\tp{\bm{1}}x}\xtolabel{totalExpScore}\\[5pt]
\xfromlabel{expScore,net.target}\xmath{x_y}\xtolabel{targetExpScore}\\[5pt]
\xfromlabel{targetExpScore,totalExpScore}\xmath{x/y}\xmath{-\ln x}\xtolabel{\omega}
\end{tabular}
}}
\end{equation}
In the above diagram the label {\it net.target} denotes the ground truth class index (equals to $t$ in the notation of section \ref{sec:soft-max}) of the current input which is always provided while the classification model is trained. Referring to the gradient flow through the above diagram we observe the standard algebraic function clearly differentiable with the exception of $z\eqd x_y$ with the index $y$ where the flow stops. However, the flow through $x$ is possible only through the component $i=y$. Namely:
\[
(\od{\cl{E}}{x})[i] = \sp{\cl{E}}{x[i]} = \sp{\cl{E}}{z}\sp{z}{x[i]} = 
\sp{\cl{E}}{z}\sp{x[y]}{x[i]}=\od{\cl{E}}{z}\cdot\bm{1}_{i=y},\quad i\in I_x\ .
\]

For very big scores the floating point overflow may happen. We can avoid problem of "big numbers", if the scores are conditioned by subtracting from them the value equal to their maximum:
\begin{equation}\label{eq:soft-max-loss-stable}
\text{
\xlossdef{vox50}{stable}{
\begin{tabular}{l}
\xfromlabel{net.score}\xmath{x-\max(x)}\xmath{e^x}\xtolabelto{expScore}\xmath{\tp{\bm{1}}x}\xtolabel{totalExpScore}\\[5pt]
\xfromlabel{expScore,net.target}\xmath{x[y]}\xtolabel{targetExpScore}\\[5pt]
\xfromlabel{targetExpScore,totalExpScore}\xmath{x/y}\xmath{-\ln x}\xtolabel{\omega}
\end{tabular}
}}
\end{equation}

The conditioning of scores introduced the $x-\max(x)$ operation which changes the gradient flow. It seems that the change could work for better convergence as:
\begin{itemize}
  \item without the change the gradient at this point $p-q$ is negative for each component (cf. section \ref{sec:soft-max}),
  \item after the change the gradient  $p-q$ is not changed at any component except the one with\\ $i=\arg\max(x):$
\end{itemize}
\[
\begin{array}{rcl}
y=x-\max(x),\ i\in I_x &\lra&\\
(\od{\cl{E}}{x})[i] &=& \ds\sp{\cl{E}}{x[i]} = \sp{\cl{E}}{y[i]}\sp{y[i]}{x[i]} = \sp{\cl{E}}{y[i]}\sp{[x[i](1-\bm{1}_{i=\arg\max(x)})]}{x[i]}=\\[10pt]
&&(1-\bm{1}_{i=\arg\max{x}})\cdot(\od{\cl{E}}{y})[i]\ .
\end{array}
\]
Hence, if the target class $t$ is confirmed by the maximum probability $p[i]$, $i=t$ then the gradient component equals to zero while others are not changed, potentially increasing the loss function wrt them. Then as expected the optimal Bayesian classifier is followed by the SoftMax layer.

\paragraph{Optimization details\\}

The authors of VGGvox/res50 net report in  \cite{ChungJ18a} the following parameters for optimization:
\begin{itemize}
  \item batch size: $N_b=64$,
  \item optimization method: {\it MomentumSGD},
  \item momentum coefficient: $\beta=0.09,$ with decay $5\cdot 10^{-4},$
  \item learning rate: $\alpha=10^{-2}$ with logarithmic decay to $10^{-8}
  $,
  \item the termination condition is: the number of epochs is equal to $30$ or the validation error stops to decrease.
\end{itemize}

\subsection{STNN for data embedding in images\label{sec:embed}}


Digital media embedding in other digital media objects like images has recently made the significant progress, Dong et al. present in \cite{DongS18a}, the complete (end-to-end) image hiding deep neural algorithm with excellent performance: {\it Invisible Steganography via Generative Adversarial Network}.

The authors proposed a steganographic system for hiding the gray scale image {\it Ysecret} into RGBhost image of the same resolution with unique properties:
\begin{enumerate}
  \item the system consists of three modules:
\begin{enumerate}   
  \item the encoder to embed the secret image into the host image,
  \item the decoder decoder to extract the secret image from the host image,
  \item the stego-analyzer to be used as the discriminator in GAN model,
\end{enumerate}
  \item the hiding capacity is one 8-bit gray scale pixel per 24-bit RGB pixel,
  \item the fidelity of the host noised by hiding secret image measured as structure similarity index SSIM is about $0.985$,
  \item the fidelity of the extracted secret image is on average $0.97,$
  \item the visual subjective evaluation of color constancy is high as actually nothing is hidden in chroma -- the embedding refers only the luminance component,
  \item the visual inspection of the residual image $|host-stegoHost|$ confirms its invisibility,
  \item the steganographic accuracy measured by the probability of correct discrimination\footnote{The detector of hiding events is built as the separate CNN-based binary classifier.} of the host image from its stego-host counterpart is about $73\%$.
\end{enumerate}

\paragraph{Encoder while testing\\}

\begin{enumerate}
\item Definition of {\it inception} unit:

\noindent
\framebox{
\xunitdef{incept}{
\begin{tabular}{l|l}
$f = 1_{\$}\cdot[8,12,8,4]$ &\\[2pt]
\hline\\[-8pt]
\xfromlabel{\alpha}\xconv{1}{f_0}{}{}{br}\xtolabel{\beta_1} \quad&\quad
\xfromlabel{\alpha}\xconv{1}{f_1}{}{}{r}\xconv{3}{f_1}{p}{}{br}\xtolabel{\beta_2}\\[10pt]
\hline\\[-5pt]
\xfromlabel{\alpha}\xconv{1}{f_2}{}{}{r}\xconv{5}{f_2}{p}{}{br}\xtolabel{\beta_3} \quad&\quad
\xfromlabel{\alpha}\xpool{3}{}{m}{}{}\xconv{1}{f_3}{}{}{br}\xtolabel{\beta_4}\\[10pt]
\hline\\[-5pt]
\quad&\quad\xmerge{\beta_1,\beta_2,\beta_3,\beta_4}{a}\xtolabel{\omega}
\end{tabular}
}}
\item Definition of {\it inception unit with shortcut}:

\noindent
\framebox{
\xunitdef{!incept}{\xxresid{
\xunitinstance{incept}{0_{\$}}{1_{\$}}
}}}
\item Instances of inception units with shortcuts:

\noindent
\framebox{
\xunitinstance{!incept}{1}{1}
\xunitinstance{!incept}{2}{2}
\xunitinstance{!incept}{3}{4}
\xunitinstance{!incept}{4}{8}
}
\item Definition of encoding module for invisible steganography:

\noindent
\framebox{
\xunitdef{isEnc}{
\xconv{3}{16}{p}{}{br_1}
\xunit{!incept}{1}{}
\xunit{!incept}{2}{}
\xunit{!incept}{3}{}
\xunit{!incept}{4}{}
\xunit{!incept}{3}{}
\xunit{!incept}{2}{}
\xunit{!incept}{1}{}\\[7pt]
\xconv{3}{16}{p}{}{br_1}
\xconv{1}{1}{}{}{h}
}}

\item Invisible steganography encoder: main module

\noindent
\doublebox{
\begin{tabular}{l}
Definition of {\it inception} unit
\\[3pt]\hline
Definition of {\it inception unit with shortcut}
\\[3pt]\hline
{\it Instances} of inception units with shortcuts
\\[3pt]\hline
Definition of {\it encoding module} for invisible steganography
\\[3pt]\hline
\xin{}{}{hostY}\quad\quad \xin{}{}{secretY}\\
\xmerge{hosty,secretY}{a}
\xunit{isEnc}{}{}
\xtolabel{stegoY}\\[5pt]
\end{tabular}
}
\end{enumerate}

\paragraph{Decoder while testing\\}

\doublebox{
\begin{tabular}{l}
\xunitdef{isDec}{
\xconv{3}{32}{p}{}{br_1}
\xconv{3}{64}{p}{}{br_1}
\xconv{3}{128}{p}{}{br_1}
\xconv{3}{64}{p}{}{br_1}
\xconv{3}{32}{p}{}{br_1}
\xconv{1}{1}{}{}{h}
}\\[5pt]
\xin{yx}{1}{stegoY}
\xunit{isDec}{}{}
\xtolabel{secret\widetilde{Y}}\\[5pt]
\end{tabular}
}

\paragraph{Baseline steganographic system while training\\}

\doublebox{
\begin{tabular}{l}
\xin{yx}{3}{hostYUV}
\xsplit{a}{hostY,hostU,hostV}
\quad\quad
\xin{yx}{}{secretY}
\\[5pt]
\xmerge{hostY,secretY}{a}
\xunit{isEnc}{}{}
\xtolabelto{stegoY}
\xunit{isDec}{}{}
\xtolabel{secret\widetilde{Y}}\\[5pt]
\hline
\xbound{isEncDec}{}{
\begin{array}{l}
hostYUV := 3_a256_{yx};\  secretY := 256_{yx};\\
optima := [loss, AdamSGD, eq. \eqref{eq:stego-ssim-gan-loss}]
\end{array}
}
\end{tabular}
}\\[5pt]

The loss function for training {\it isEncDec} is a weight sum of several terms with one related to GAN, one to MSE and few others to image structural similarity defined by expressions wrt second order statistics. The statistics are computed locally as convolutions with the fixed (non learned) Gaussian coefficients.

\noindent Let
$hostY\mapsto\ Y,\  hostU\mapsto\ U,\ hostV\ \mapsto\ V,\ stegoY\mapsto\ \widetilde{Y},\ secretY\ \mapsto\ S,\ and\  
secret\widetilde{Y}\ \mapsto\ \widetilde{S}\ .
$ 
Then
\begin{equation}\label{eq:stego-ssim-gan-loss}
\begin{array}{rcl}
loss(Y,U,V,\widetilde{Y},S,\widetilde{S}) &\eqd&
 \gamma_{gan}\cdot loss_{gan}(\widetilde{Y},U,V)+\\[5pt]
 &&\gamma_{mse}\cdot\left[loss_{mse}(Y,\widetilde{Y})+loss_{mse}(S,\widetilde{S})\right]+\\[5pt]
 &&\gamma_{ssim}\cdot\left[2-loss_{ssim}(Y,\widetilde{Y})-loss_{ssim}(S,\widetilde{S})\right]+\\[5pt]
 &&\gamma_{mssim}\cdot\left[2-loss_{mssim}(Y,\widetilde{Y})-loss_{mssim}(S,\widetilde{S})\right]
\end{array}
\end{equation}

According to the GAN theory the generator minimizes recognition probability for stego image to be  a genuine image with hidden secret image:
\begin{equation}
\begin{array}{l}
s=ganEnc(\widetilde{Y}UV)) \lra loss_{gan}(\widetilde{Y},U,V) =\ds \log\left(\frac{e^{s_1}}{e^{s_0}+e^{s_1}}\right)\\[20pt]
\text{
\xunitdef{gloss}{
\begin{tabular}{l}
\xmerge{1_{\alpha},2_{\alpha},3_{\alpha}}{a}
\xunit{isgan}{}{}\xmath{e^x}\xtolabel{stegoExp}\\[5pt]
\xfromlabel{stegoExp}\xmath{\tp{1}_2x}\xtolabel{stegoSum}\\[5pt]
\xfromlabel{stegoExp,stegoSum}\xmath{x_1/y}\xmath{\log{x}}
\xtolabel{\omega}
\end{tabular}
}}
\end{array}
\end{equation}

The mean squared error $MSE(A,\widetilde{A})$ for any image $A$ with $N$ pixels and its modified version $\widetilde{A}$ with the same number of pixels is defined as the normalized by $N$, the squared Euclidean norm of $A-\widetilde{A}.$ Then the gradient flow is proportional to $A-\widetilde{A}$:
\begin{equation}
MSE(A,\widetilde{A}) \eqd \frac{1}{N}\|A-\widetilde{A}\|_2^2 \lra
\od{\cl{E}}{\widetilde{A}} = \frac{2}{N}(A-\widetilde{A})\odot\od{\cl{E}}{MSE}
\end{equation}

The structural similarity measures (of signal fidelity  are given by rational formulas wrt to the means $\mu$, standard deviations $\sigma$, variances $\sigma^2$, and covariances $\sigma_{xy}$ of signals $x,y$ for any local signal domain window (cf. \cite{Wang03a}). The average similarity over all such local windows is the scalar measure we look for.

\begin{equation}
\begin{array}{l}
\ds l(x,y)\eqd \frac{2\mu_x\mu_y+C_1}{\mu_x^2+\mu_y^2+C_1}, \quad\quad
c(X,Y)\eqd \frac{2\sigma_x\sigma_y+C_2}{\sigma_x^2+\sigma_y^2+C_2},\ \quad\quad
s(X,Y)\eqd \frac{\sigma_{xy}+C_2/2}{\sigma_x\sigma_y+C_2/2}\\[10pt]
\ds SSIM(X,Y)\eqd l(x,y)\cdot c(x,y)\cdot s(x,y) = 
\frac{2\mu_x\mu_y+C_1}{\mu_x^2+\mu_y^2+C_1}\cdot
\frac{2\sigma_{xy}+C_2}{\sigma_x^2+\sigma_y^2+C_2}
\end{array}
\end{equation}
where for the signal with $b$ bits per sample the regularizing constants $C_i=K_i\cdot 2^{b}$, $i=1,2$ (authors of \cite{Wang03a} verified $K_1=0.01, K_2=3K_1$). The measures compare the following image features: $l(x,y)$ -- signal luminance, $c(x,y)$ -- signal contrast, $s(x,y)$ -- signal structure, $SSIM(x,y)$ -- signal similarity. 

The measure MSSIM(x,y) evaluates in the  signal pyramid $(x_i,y_i),$ $i=1,\dots,m$, the cumulative contrast $c(x_1,y_1)\cdots c(x_m,y_m)$ multiplied by the luminance $l_m$ of the lowest resolution signal.
\begin{equation}
MSSIM_m(x,y)\eqd l(x_m,y_m)\cdot\prod_{i=1}^mc(x_i,y_i)
\end{equation}
The signal pyramid is the result of $50\%$ decimation preceded by Gaussian kernel applied $m-1$ times. In symbolic notation it is implemented by the layer $\xinterp{50}{g}$ with the fixed (non learned) kernel parameters.

\paragraph{Second order statistical units\\}

Using the convolution layer \xconv{11}{}{G}{}{} with fixed Gaussian kernel, the computations of statistics can be compactly represented in the symbolic notation:
\begin{enumerate}
\item Signal mean \xunit{E[\cdot]}{}{}:\\
\xunitdef{E[\cdot]}{
\xconv{11}{}{g}{}{}
}\\[3pt]
where the option $g$ denotes the fixed Gaussian kernel matching to kernel size, e.g.: $5\sigma+1=11.$
\item Signal variance unit {\it var} (with the mean value $\mu$ given as the second input, the second output is the mean value squared, i.e. $\mu^2$): 

\xunitdef{var}{
\begin{tabular}{l}
\xfromlabel{1_{\alpha}}\xmath{x^2}
\xunit{E[\cdot]}{}{}\xtolabel{avgsquare}
\quad\quad
\xfromlabel{2_{\alpha}}\xmath{x^2}\xtolabel{2_{\omega}}
\\[5pt]
\xfromlabel{2_{\omega},avgsquare}\xmath{y-x}\xtolabel{1_{\omega}}
\end{tabular}
}
\item Signal mean and variance unit {\it mvar}:

\xunitdef{mvar}{
\begin{tabular}{l}
\xfromlabel{\alpha}\xunit{E[\cdot]}{}{}\xtolabel{\mu}
\quad\quad
\xfromlabel{\alpha}\xmath{x^2}
\xunit{E[\cdot]}{}{}\xtolabel{avgsquare}
\\[5pt]
\xfromlabel{\mu}\xmath{x^2}\xtolabel{2_{\omega}}
\quad\quad
\xfromlabel{2_{\omega},avgsquare}\xmath{y-x}\xtolabel{1_{\omega}}
\end{tabular}
}

\item Signal covariance unit {\it cov} (with the signal means $\mu_x, \mu_y$ given as the third and the fourth input, the second output is the product of mean values): 

\xunitdef{cov}{
\begin{tabular}{l}
\xfromlabel{1_{\alpha},2_{\alpha}}\xmath{x\cdot y}\xunit{E[\cdot]}{}{}\xtolabel{correlation}\quad\quad
\xfromlabel{3_{\alpha},4_{\alpha}}\xmath{x\cdot y}\xtolabel{2_{\omega}}\\[5pt]
\xfromlabel{correlation,2_{\omega}}\xmath{x-y}\xtolabel{1_{\omega}}
\end{tabular}
}
\end{enumerate}

\paragraph{Signal similarity units\\}

\begin{enumerate}
\item Signal similarity -- the luminance factor {\it lumf} (inputs are $\mu_x^2,$ $\mu_y^2$, and $\mu_x\mu_y$):

\xunitdef{lumf}{
\begin{tabular}{l}
\xfromlabel{1_{\alpha},2_{\alpha}}\xmath{x+y+C_1}\xtolabel{lumA}\quad\quad\xfromlabel{3_{\alpha}}\xmath{2\cdot x+C_1}\xtolabel{lumB}\\[5pt]
\xfromlabel{lumA,lumB}\xmath{y/x}\xtolabel{\omega}
\end{tabular}
}

\item Signal similarity -- the contrast factor {\it conf} (inputs are the signal variances, beside the contrast value the second output is the product of standard deviations):

\xunitdef{conf}{
\begin{tabular}{l}
\xfromlabel{1_{\alpha}}\xmath{\sqrt{x}}\xtolabel{\sigma_x}\quad\quad \xfromlabel{2_{\alpha}}\xmath{\sqrt{x}}\xtolabel{\sigma_y}
\\[5pt]
\xfromlabel{\sigma_x,\sigma_y}\xmath{x\cdot y}\xtolabelto{2_{\omega}}\xmath{x+C_2}\xtolabel{conA}
\\[5pt]
\xfromlabel{1_{\alpha},2_{\alpha}}\xmath{x+y+C_2}\xtolabel{conB}\quad\quad
\xfromlabel{conA,conB}\xmath{x/y}\xtolabel{1_{\omega}}
\end{tabular}
}

\item Signal similarity -- the structure factor {\it strf} (inputs are: the covariance followed by the product of standard deviations):

\xunitdef{strf}{
\begin{tabular}{l}
\xfromlabel{1_{\alpha}}\xmath{x+C_3}\xtolabel{strA}\quad\quad
\xfromlabel{2_{\alpha}}\xmath{x+C_3}\xtolabel{strB}\\[5pt]
\xfromlabel{strA,strB}\xmath{x/y}\xtolabel{\omega}
\end{tabular}
}
where $C_3\eqd C_2/2.$

\item Signal similarity -- the structural similarity {\it ssim} (outputs 2,3,4 are similarity factors: luminance, contrast,structure):

\hspace*{-10mm}
\xunitdef{ssim}{
\begin{tabular}{l}
\xfromlabel{1_{\alpha}}\xunit{mvar}{}{}\xtolabel{\mu_x,\sigma^2_x,\mu_x^2}
\quad
\xfromlabel{2_{\alpha}}\xunit{mvar}{}{}\xtolabel{\mu_y,\sigma^2_y,\mu_y^2}
\quad
\xfromlabel{1_{\alpha},2_{\alpha},\mu_x,\mu_y}\xunit{cov}{}{}\xtolabel{\sigma_{xy},\mu_x\mu_y}
\\[5pt]
\xfromlabel{\mu_x^2,\mu_y^2,\mu_x\mu_y}\xunit{lumf}{}{}\xtolabel{2_{\omega}}
\quad
\xfromlabel{\sigma^2_x,\sigma^2_y}\xunit{conf}{}{}\xtolabel{3_{\omega},\sigma_x\sigma_y}
\quad
\xfromlabel{\sigma_{xy},\sigma_x\sigma_y}\xunit{strf}{}{}\xtolabel{4_{\omega}}
\\[5pt]
\xfromlabel{2_{\omega},3_{\omega},4_{\omega}}\xmath{x\cdot y\cdot z}\xtolabel{1_{\omega}}
\end{tabular}
}
\item Signal similarity -- contrast factor for the next level of pyramid {\it lconf}:

\xunitdef{lconf}{
\begin{tabular}{l}
\xfromlabel{1_{\alpha}}\xinterp{50}{g}\xtolabelto{2_{\omega}}\xunit{mvar}{}{}\xtolabel{\mu_{nx},\sigma^2_{nx},\mu_{nx}^2}
\\[5pt]
\xfromlabel{2_{\alpha}}\xinterp{50}{g}\xtolabelto{3_{\omega}}
\xunit{mvar}{}{}\xtolabel{\mu_{ny},\sigma^2_{ny},\mu_{ny}^2}
\quad\quad
\xfromlabel{\sigma^2_{nx},\sigma^2_{ny}}\xunit{conf}{}{}\xtolabel{1_{\omega},\sigma_{nx}\sigma_{ny}}
\end{tabular}
}

\item Signal similarity -- multi-scale (here 3-scale) similarity index {\it mssim} (third input is the contrast factor of  the first level in the image pyramid which is computed by {\it ssim}): 

\hspace*{-10mm}
\xunitdef{mssim}{
\begin{tabular}{l}
\xfromlabel{1_{\alpha},2_{\alpha}}\xunit{lconf}{}{}
\xtolabel{conf_2,lev_{2x},lev_{2y}}
\quad
\xfromlabel{lev_{2x},lev_{2y}}\xunit{lconf}{}{}
\xtolabel{conf_3,lev_{3x},lev_{3y}}
\\[5pt]
\xfromlabel{\mu_{3x},\mu_{3y}}\xmath{x\cdot y}\xtolabel{\mu_{3x}\mu_{3y}}
\quad\quad
\xfromlabel{\mu_{3x}^2,\mu_{3y}^2,\mu_{3x}\mu_{3y}}\xunit{lumf}{}{}\xtolabel{lumf_3}
\\[10pt]
\xfromlabel{3_{\alpha},conf_2,conf_3}\xmath{x\cdot y\cdot z}\xtolabel{contrast}
\quad\quad
\xfromlabel{contrast,lumf_3}\xmath{x\cdot y}\xtolabel{\omega}
\end{tabular}
}
\end{enumerate}

\paragraph{Loss function for 
{\it isEncDec} -- the symbolic diagram\\}

\begin{enumerate}
\item Loss term -- MSE for host and secret images:

\xunitdef{mse2}{
\begin{tabular}{l}
\xfromlabel{1_{\alpha},2_{\alpha}}
\xmath{\|x-y\|^2}\xmath{x/size(x)}\xtolabel{mseHost}
\\[5pt]
\xfromlabel{3_{\alpha},4_{\alpha}}
\xmath{\|x-y\|^2}\xmath{x/size(x)}\xtolabel{mseSecret}
\\[5pt]
\xfromlabel{mseHost,mseSecret}\xmath{x+y}
\xmath{2-x}\xmath{x\cdot \gamma_{mse}}
\xtolabel{\omega}
\end{tabular}
}
\item Loss term -- SSIM for host and secret images (the second and the third output are contrast values, respectively):

\xunitdef{ssim2}{
\begin{tabular}{l}
\xfromlabel{1_{\alpha},2_{\alpha}}
\xunit{ssim}{}{}\xtolabel{ssimY,2_{\omega}}
\quad\quad
\xfromlabel{3_{\alpha},4_{\alpha}}
\xunit{ssim}{}{}\xtolabel{ssimS,3_{\omega}}
\\[5pt]
\xfromlabel{ssimY,ssimS}\xmath{x+y}
\xmath{2-x}\xmath{x\cdot \gamma_{ssim}}
\xtolabel{1_{\omega}}
\end{tabular}
}
\item Loss term -- MSSIM for host and secret images (the last two inputs are contrast values for the first levels in the image pyramids, respectively):

\xunitdef{mssim2}{
\begin{tabular}{l}
\xfromlabel{1_{\alpha},2_{\alpha},5_{\alpha}}
\xunit{mssim}{}{}\xtolabel{mssimY}
\quad\quad
\xfromlabel{3_{\alpha},4_{\alpha},6_{\alpha}}
\xunit{mssim}{}{}\xtolabel{mssimS}
\\[5pt]
\xfromlabel{mssimY,mssimS}\xmath{x+y}
\xmath{2-x}\xmath{x\cdot \gamma_{mssim}}
\xtolabel{\omega}
\end{tabular}
}
\item Aggregation of all loss terms:
\begin{enumerate}
\item GAN and MSE terms:

\framebox{
\begin{tabular}{l}
\xfromlabel{net.stegoY,net.hostU,net.hostV}
\xunit{gloss}{}{}\xmath{x\cdot \gamma_{gan}}\xtolabel{ganL}
\\[5pt]
\xfromlabel{net.hostY,net.stegoY,net.secretY,net.secret\widetilde{Y}}
\xunit{mse2}{}{}\xtolabel{mseL}
\end{tabular}
}

\item SSIM and MSSIM terms:

\framebox{
\begin{tabular}{l}
\xfromlabel{net.hostY,net.stegoY,net.secretY,net.secret\widetilde{Y}}\xunit{ssim2}{}{}\xtolabel{ssimL,cH,cS}
\\[5pt]
\xfromlabel{net.hostY,net.stegoY,net.secretY,net.secret\widetilde{Y},cH,cS}\xunit{mssim2}{}{}\xtolabel{mssimL}
\end{tabular}
}

\item {\it isEncDec} loss function -- structure and final summations:

\xlossdef{isEncDec}{}{
\begin{tabular}{l}
GAN and MSE terms\\
\hline
SSIM and MSSIM terms\\
\hline
\xfromlabel{ganL}\xmath{\tp{1}x}\xtolabel{gL}
\quad\quad
\xfromlabel{gL,mseL}\xmath{x+\tp{1}y}\xtolabel{mL}
\\[5pt]
\xfromlabel{mL,ssimL}\xmath{x+\tp{1}y}\xtolabel{sL}
\quad\quad
\xfromlabel{sL,mssimL}\xmath{x+\tp{1}y}\xtolabel{\omega}
\end{tabular}
}
\end{enumerate}
\end{enumerate}

\paragraph{GAN discriminator {\it isGAN}\\}

\begin{enumerate}
\item Unit for spatial pyramid pooling:

\framebox{
\begin{tabular}{l}
\xunitdef{spp}{
as defined in section \ref{sec:spp}
}
\xunitinstance{spp}{1}{[4,2,1]}
\end{tabular}
}
\item Unit for GAN scoring the input image :

\framebox{
\xunitdef{isgan}{
\begin{tabular}{l}
\xconv{3}{8}{p}{}{br_1}\xpool{5_k2_{\sigma}}{}{a}{}{}
\xconv{3}{16}{p}{}{br_1}\xpool{5_k2_{\sigma}}{}{a}{}{}
\xconv{1}{32}{}{}{b}\xpool{5_k2_{\sigma}}{}{a}{}{}
\xconv{1}{64}{}{}{b}\xpool{5_k2_{\sigma}}{}{a}{}{}\\[5pt]
\xconv{1}{128}{}{}{br_1}\xpool{5_k2_{\sigma}}{}{a}{}{}
\xunit{spp}{1}{}
\xdense{}{128}{}{}{}
\xdense{}{2}{}{}{}
\xtolabel{score}
\end{tabular}
}
}

\item GAN discriminator for  invisible steganography {\it isGan}:

\doublebox{
\begin{tabular}{l}
\xin{yx}{3}{hostYUV}
\xsplit{a}{hostY,hostU,hostV}
\quad\quad\quad
\xin{yx}{}{secretY}
\\[5pt]
\xin{yx}{3}{hostYUV}
\xunit{isgan}{}{}
\xtolabel{hostScore}
\quad\quad\quad
\xfromlabel{hostY,secretY}
\xunitcall{isEnc}{}{}
\xtolabel{stegoY}
\\[5pt]
\xmerge{stegoY,hostU,hostV}{a}
\xunit{isgan}{}{}
\xtolabel{stegoScore}
\\[5pt]
\hline
\xbound{isGAN}{}{
\begin{array}{l}
hostYUV := 3_a256_{yx};\ 
optima := [gain, MomentumSGA, eq. \eqref{eq:stego-gan-gain}]
\end{array}
}
\end{tabular}
}
\end{enumerate}

\paragraph{Gain function  {\it isGAN} for invisible steganography\\}

Let $h\inv{2}$, $s\inv{2}$ be the host score, and stego scores produced by the analyzer. We assume that $h_0>h_1$  denotes higher probability of host image for the input. In order to apply standard GAN gain function $\log$ we convert them to probabilities by SoftMax formula: 
\begin{equation}\label{eq:stego-gan-gain}
p_0 \eqd \frac{e^{h_0}}{e^{h_0}+e^{h_1}},\ q_1\eqd \frac{e^{s_1}}{e^{s_0}+e^{s_1}} \quad\lra\quad
gain(h,s) \eqd \log p_0 + \log q_1
\end{equation}

\xgaindef{isGAN}{}{
\begin{tabular}{l}
\xfromlabel{net.hostScore}
\xmath{e^x}
\xtolabelto{hostExp}
\xmath{\tp{1}_2x}
\xtolabel{hostSum}\\[5pt]
\xfromlabel{net.stegoScore}
\xmath{e^x}\xtolabelto{stegoExp}
\xmath{\tp{1}_2x}
\xtolabel{stegoSum}\\[5pt]
\xfromlabel{hostExp,hostSum}
\xmath{x_0/y}
\xtolabel{p0}\\[5pt]
\xfromlabel{stegoExp,stegoSum}
\xmath{x_1/y}
\xtolabel{q1}\\[5pt]
\xfromlabel{p0,q1}
\xmath{\log(x)+\log(y)}
\xtolabel{\omega}
\end{tabular}
}


%

\subsection{STNN for image content annotation}

Automatic image  annotation as the research task includes different approaches like:
\begin{itemize}
  \item {\it image indexing} when a descriptor and similarity/proximity function are designed to solve fast image searching problem\footnote{For instance using {\it k-NN} ($k$ nearest neighbors) methodology. In its original formulation, the indexing task does not assume any knowledge about the class the images belong to.}
  \item {\it image captioning} when a semantic description is given for the query image,
  \item {\it image labeling} when the image objects are masked-out in the pixel domain and the labels from a list of object categories are assigned to the masked areas.
\end{itemize}

Descriptor extraction for face recognition fits to the first above category. The solution proposed by Jacek Naruniec and Marek Kowalski of Warsaw University of Technology (KN-FR \cite{Kowalski-Naruniec18}) is a combination of VGG-like solution with loss function aggregating SoftMax loss with intra class variance, proposed by Wen, Zhang et al. in \cite{WenY16a}: {\it A Discriminative Feature Learning Approach for Deep Face Recognition}.

\begin{description}
\item[KN-FR as unstructured STNN:] $ $\\[5pt]
\hspace*{-10mm}
\doublebox{
\begin{tabular}{l}
\xin{yx}{1}{image}
\xconv{5}{64}{p}{}{br}
\xconv{5}{64}{p}{}{br}
\xconv{5}{64}{p}{}{br}
\xpool{2}{}{m}{}{}
\xconv{5}{128}{p}{}{br}
\xconv{5}{128}{p}{}{br}
\xconv{5}{128}{p}{}{br}
\xpool{2}{}{m}{}{}
\xconv{5}{256}{p}{}{br}
\xconv{5}{256}{p}{}{br}
\xconv{5}{256}{p}{}{br}
\\[5pt]
\xconv{5}{256}{p}{}{br}
\xpool{2}{}{m}{}{}
\xconv{5}{512}{p}{}{br}
\xconv{5}{512}{p}{}{br}
\xconv{5}{512}{p}{}{br}
\xconv{5}{512}{p}{}{br}
\xpool{2}{}{m}{}{}
\xconv{5}{512}{p}{}{br}
\xconv{5}{512}{p}{}{br}
\xconv{5}{512}{p}{}{br}
\xconv{5}{512}{p}{}{br}
\xpool{2}{}{m}{}{}
\\[5pt]
\xdense{}{4096}{}{}{br}
\xdense{}{1024}{}{}{}
\xmath{x/\|x\|}
\xtolabel{norm}
\xdense{}{P}{}{}{}
\\[5pt]
\hline
\xbound{knFaRec}{}{
\begin{array}{l}
image := 112_{yx};\ optima := [loss,MomentumSGD,eq.\eqref{eq:kn-loss}]
\end{array}
}
\end{tabular}
}\\
$P$ is the number of persons taken for training.
\item[KN-FR as structured STNN:] User units definitions and their instances:\\[5pt]
\hspace*{-10mm} 
\framebox{
\begin{tabular}{l}
\xunitdef{c3}{
\begin{tabular}{l}
\xexpression{f = 64\cdot 1_{\$}}\\
\hline
\xconv{3}{f}{p}{}{br}
\xconv{3}{f}{p}{}{br}
\xconv{3}{f}{p}{}{br}
\xpool{2}{}{m}{}{}
\end{tabular}
}\ \ 
\xunitinstance{c3}{1}{1}\ \ 
\xunitinstance{c3}{2}{2}
\\[20pt]
\xunitdef{c4}{
\begin{tabular}{l}
\xexpression{f = 256\cdot 1_{\$}}\\
\hline
\xconv{3}{f}{p}{}{br}
\xconv{3}{f}{p}{}{br}
\xconv{3}{f}{p}{}{br}
\xconv{3}{f}{p}{}{br}
\xpool{2}{}{m}{}{}
\end{tabular}
}\ \ 
\xunitinstance{c4}{1}{1}\ \ 
\xunitinstance{c4}{2}{2}
\end{tabular}
}
\item[KN-FR as structured STNN:] The main architecture:\\[5pt]
\hspace*{-10mm}
\doublebox{
\begin{tabular}{l}
\xin{yx}{1}{image}
\xunit{c3}{1}{}
\xunit{c3}{2}{}
\xunit{c4}{1}{}
\xunit{c4}{2}{}
\xunit{c4}{2}{}
\xdense{}{4096}{}{}{br}
\xdense{}{1024}{}{}{}
\xmath{x/\|x\|}
\xtolabelto{norm}
\xdense{}{P}{}{}{}
\xtolabel{score}
\\[5pt]
\xfromlabel{ones}\xmath{\bm{1}\cdot C}\xtolabel{centers}
\\[5pt]
\hline
\xbound{knFaRec}{}{
\begin{array}{l}
image := 112_{yx};\ optima := [loss,MomentumSGD,eq.\eqref{eq:kn-loss}]
\end{array}
}\end{tabular}
}\\
$P$ is the number of persons taken for training.
\end{description}

Loss function is combination of the SoftMax loss and the total squared error for distances of mini-batch tensors to class centers $C[k]$, $k\in[K],$ where both are considered from the input of the last full connection layer (labeled by {\it norm}). Namely, for any $b\in[N_b]:$
\begin{equation}\label{eq:kn-loss}
\begin{array}{l}
loss_b\left(X_{norm}[b,:],X_{score}[b,:]\right) =
\ds-\ln\left(\frac
{e^{X_{score}[b,t_b]}}
{\ds\sum_{k\in[K]}e^{X_{score}[b,k]}}
\right) + \lambda\cdot\bigg\|X_{norm}[b,:]-C[t_b]\bigg\|^2\\[25pt]
\begin{array}{rcl}
loss\left(X_{norm},X_{score}\right) &=& \ds\sum_{b\in[N_b]} loss_b\left(X_{norm}[b,:],X_{score}[b,:]\right)\\[15pt]
&=&\ds\sum_{b\in[N_b]}\left[
\ds-\ln\left(\frac
{e^{X_{score}[b,t_b]}}
{\ds\sum_{k\in[K]}e^{X_{score}[b,k]}}
\right) + \lambda\cdot\bigg\|X_{norm}[b,:]-C[t_b]\bigg\|^2
\right]
\end{array}
\end{array}
\end{equation}

The above total $loss$ function value is back propagated to each of $N_b$ duals of loss functions $loss_b$. The symbolic diagrams of $loss_b$ are the same for all $b$:

\begin{equation}\label{eq:kn-loss-stable}
\text{
\xlossdef{knFR}{}{
\begin{tabular}{l}
\xfromlabel{net.score}\xmath{x-\max(x)}\xmath{e^x}\xtolabelto{expScore}\xmath{\tp{\bm{1}}x}\xtolabel{totalExpScore}\\[5pt]
\xfromlabel{expScore,net.target}\xmath{x[y]}\xtolabel{targetExpScore}\\[5pt]
\xfromlabel{targetExpScore,totalExpScore}\xmath{x/y}\xmath{-\ln x}\xtolabel{KL}\\[5pt]
\xfromlabel{net.norm,net.centers}\xmath{\lambda\cdot\|x-y\|^2}\xtolabel{intra}\\[10pt]
\xfromlabel{KL,intra}\xmath{x+y}\xtolabel{\omega}
\end{tabular}
}}
\end{equation}

The descriptor for indexing is taken from the output labeled by $norm.$ The Euclidean distance without weights is applied as the distance function.

The authors of KN-FR informed that their estimation of class centers  is actually the exponential moving average with weights $0.3$ for the current example and $0.7$  for the previous approximation of the the center of the class, the current example belongs to. It differs from the method presented above, which in turn differs in the way of the centroid computing from the original method described in the paper \cite{WenY16a}.

\paragraph{Optimization details\\}

The authors of KN-FR net report the following parameters for optimization:
\begin{itemize}
\item the weight for intra term: $\lambda = 0.005,$
\item the number of person for the training: $P=10^5$,
  \item batch size: $N_b=64$,
  \item optimization method: {\it MomentumSGD},
  \item momentum coefficient: $\beta=0.09,$ with decay $5\cdot 10^{-4},$
  \item learning rate: $\alpha=10^{-1}$ with logarithmic decay to $10^{-3},$
  \item one epoch consists of $N_e=64000$ images selected from the training set of about $8\cdot 10^6$ images,
  \item the termination condition is: the number of epochs is equal to $200$ or the validation error stops to decrease,
  \item the model was tested on the {\it LFW} dataset and the  recognition rate exceeded $98,9\%$.
\end{itemize}

\subsection{STNN for image based human-computer interfacing via 3D models}

We show two simplified solutions for fp68 landmarks detection (FP68-PIL) and for the pose detection (POSE-PIL). The  architecture for both problems is the same in feature extraction part. They differ in the regression part and obviously in the loss functions. They were recently developed by Rafa\l\ Pilarczyk in \cite{PilarczykR18a}: {\it Tuning deep learning algorithms for face alignment and pose estimation}.

\paragraph{FP68-PIL and POSE-PIL as structured STNN -- common feature extraction unit\\}

\noindent
\framebox{
\begin{tabular}{l}
\xunitdef{feat}{
\xconv{2_{\sigma}3}{32}{p}{}{br}
\xconv{3}{64}{p}{}{r}
\xconv{2_{\sigma}3}{64}{p}{}{br}
\xconv{3}{64}{p}{}{r}
\xconv{2_{\sigma}3}{64}{p}{}{br}
\xconv{3}{128}{p}{}{r}
\xconv{3}{128}{p}{}{br}
\xconv{2_{\sigma}3}{256}{p}{}{br}
\xpool{g}{}{a}{}{}
}
\end{tabular}
}

\paragraph{FP68-PIL as structured STNN -- the main architecture\\[5pt]}

\noindent
\doublebox{
\begin{tabular}{l}
\xin{yx}{1}{image}
\xunit{feat}{}{}
\xdense{}{136}{}{}{}
\xtolabel{landmarks}
\\[5pt]
\hline
\xbound{FP68}{}{
\begin{array}{l}
image := 128_{yx};\ 
optima := [loss, AdamSGD, eq.\eqref{eq:fp68-loss}]
\end{array}
}
\end{tabular}
}

The loss function computes the average distance between the predicted facial landmarks and the ground truth landmarks. The measure is normalized by the interocular distance in the ground truth image (points with indexes $l=52, r=20$):
\begin{equation}\label{eq:fp68-loss}
loss(M_p,M_g) = \frac{\sum_{i\in[68]}\|M_p[i]-M_g[i]\|_2}{68\cdot\|M_g[20]-M_g[52]\|_2}
\end{equation}
where $M_p$ are the landmarks predicted and $M_g$ are ground truth landmarks.


\begin{enumerate}
\item Loss function for {\it  FP68} -- general structure:

\hspace*{-5mm}
\xlossdef{FP68}{}{
\begin{tabular}{l|l}
\xlossdef{FP68}{partA}{PART A} \quad &\quad
\xlossdef{FP68}{partB}{PART B}
\end{tabular}
}
\item Loss function for {\it  FP68} (part A) -- (a) convenient views of data received from the network and the "teacher"; (b) interocular distance:

\hspace*{-5mm}
\xlossdef{FP68}{partA}{
\begin{tabular}{l}
\xfromlabel{net.landmarks}\xmath{view(68,2)}\xtolabel{landmarks68by2}\\[5pt]
\xfromlabel{net.target}\xmath{view(68,2)}\xtolabel{ground68by2}
\\[5pt]
\xfromlabel{ground68by2}\xmath{\|x_{20}-x_{52}\|_2}\xtolabel{interocular}
\end{tabular}
}
\item Loss function for {\it  pose} (part B) -- (a) computing average distance between predicted and ground truth landmarks; (b) normalizing the error by interocular distance:

\hspace*{-5mm}
\xlossdef{FP68}{partB}{
\begin{tabular}{l}
\xfromlabel{landmarks68by2,ground68by2}\xmath{x-y}\xmath{x\cdot x}
\xtolabel{squares}\\[5pt]
\xfromlabel{squares}\xmath{x1_2}\xmath{\sqrt{x}}\xmath{\tp{1}_{68}x}\xmath{x/68}\xtolabel{adistance}
\\[5pt]
\xfromlabel{adistance,interocular}\xmath{x/y}\xtolabel{\omega}
\end{tabular}
}

\end{enumerate}

\paragraph{POSE-PIL as structured STNN -- the main architecture\\[5pt]}

\doublebox{
\begin{tabular}{l}
\xin{yx}{1}{image}
\xunit{feat}{}{}
\xdense{}{6}{}{}{}
\xtolabel{poseParams}
\\[5pt]
\hline
\xbound{pose}{}{
\begin{array}{l}
image := 128_{yx};\ 
optima := [loss, AdamSGD, eq.\eqref{eq:pose-loss}]
\end{array}
}
\end{tabular}
}

\paragraph{Loss function via Candide-3 model}

The pose is represented by six free parameters produced by the network: $s'\in\bb{R},r'\inv{3},t\inv{2}$ where:
$s\eqd (s')^2$ -- the scaling coefficient, $r'$ -- the twist vector which is changed to the rotation matrix $R$ by the Rodriguez formula, and $t$ -- translation vector in the normalized to $[0,1]$ pixel coordinates:
\[
m = \max(n_x,n_y) \lra \left[x'\ass x/m;\ y'\ass y/m\right]
\]
where the image resolution is $n_x\times n_y.$

The loss function instead of ground truth pose,  exploits two data tensors: 
\begin{itemize}
  \item {\it net.target.candide} denoted as $X_{cand}\inm{3}{42}$: 3D points $C_i$ of standard Candide-3 model -- the subset of {\it still points}\footnote{The still landmark is the neutral point wrt to possible facial expressions - its share in creating the expression is small.} is transformed by the affine transformation (determined from the pose) and by the orthographic projection (which simply drops the $z$ coordinate) onto normalized pixels in the image plane,
  \item {\it net.target.landmarks} denoted as $X_{land}\inm{2}{42}$: 2D ground truth normalized facial landmarks which are compared with the above pixels. 
\end{itemize}

\begin{equation}\label{eq:pose-loss}
loss(X_{pose},X_{cand},X_{land}) = 
\left\|s(RX_{cand})_{xy}+t\tp{1}_{42}-X_{land}\right\|_2^2
\end{equation}


The above affine transformation and the orthographic projection joined together can be explained as follows:
\begin{equation}
\begin{array}{l}
y= s\cdot (Rx)_{xy}+t\\[5pt]
x\inv{3},\ y\inv{2},\ R\inm{3}{3},\ s\eqd (s')^2,\ s'\in\bb{R},\ t\inv{2}
\end{array}
\end{equation}
where 
\begin{itemize}
  \item $x$ -- 3D point vector in the Candide-3 coordinates, 
  \item $y$ -- 2D point vector in the normalized pixel coordinates, 
  \item $s$ -- scaling from Candide-3 to the normalized pixel coordinates, 
  \item $t$ -- translation in the normalized pixel coordinates, 
  \item $R$ -- rotation matrix in Candide-3 coordinates. 
\end{itemize}

The Rodriguez formula enables transfer from the twist vector $r'\inv{3}$ of free parameters into the rotation matrix $R\inm{3}{3}$ of nine constrained parameters.
The rotation matrix $R$ is found from its rotation axis $u\inv{3},\|u\|=1$ and the rotation angle $\theta$  which in turn are established from the twist vector $r':$
\begin{equation}\label{eq:rodrigues}
\begin{array}{l}
\theta = \|r'\|,\ u \eqd \frac{r'}{\theta}\\[5pt]
R = \cos\theta\cdot I_3+(1-\cos\theta)u\tp{u}+\sin\theta
\left[
\begin{array}{ccc}
0 & -u_z & u_y\\
u_z & 0 & -u_x\\
-u_y & u_x & 0
\end{array}
\right]
\end{array}
\end{equation}

\begin{enumerate}
\item Loss function for {\it  pose} -- general structure:

\hspace*{-5mm}
\xlossdef{pose}{}{
\begin{tabular}{l|l}
\xlossdef{pose}{partA}{PART A} \quad &\quad
\xlossdef{pose}{partB}{PART B}
\\
\hline
\xlossdef{pose}{partC}{PART C} \quad &\quad
\xlossdef{pose}{partD}{PART D}
\end{tabular}
}
\item Loss function for {\it  pose} (part A) -- convenient views of data received from {\it the network and the teacher}:

\hspace*{-5mm}
\xlossdef{pose}{partA}{
\begin{tabular}{l}
\xfromlabel{net.target.landmarks}\xmath{view(42,2)}\xtolabel{landmarks42by2}\\[5pt]
\xfromlabel{net.target.candide}\xmath{view(42,3)}\xtolabel{candide42by3}
\\[5pt]
\xfromlabel{poseParams}
\xmath{split(1,3,2)}
\xtolabel{pscale,twist,shift}
\end{tabular}
}
\item Loss function for {\it  pose} (part B) -- recovering the scale coefficient $s$, the angle $\theta$, and the rotation axis $u$:

\hspace*{-5mm}
\xlossdef{pose}{partB}{
\begin{tabular}{l}
\xfromlabel{pscale}
\xmath{x^2}
\xtolabel{scale}\\[5pt]
\xfromlabel{twist}
\xmath{\|x\|_2}
\xtolabel{angle}\\[5pt]
\xfromlabel{twist,angle}
\xmath{x/y}\xtolabel{axis}
\end{tabular}
}
\item Loss function for {\it  pose} (part C) -- computing the rotation matrix $R$:

\hspace*{-5mm}
\xlossdef{pose}{partC}{
\begin{tabular}{l}
\xfromlabel{axis}
\xmath{[[0,-x_2,x_1],[x_2,0,-x_0],[-x_1,x_0,0]]}
\xtolabel{hat}\\[5pt]
\xfromlabel{angle}
\xmath{x\cdot I_3}
\xtolabel{rot1}\\[5pt]
\xfromlabel{rot1,angle,axis}
\xmath{x+(1-\cos(y))\cdot z\tp{z}}
\xtolabel{rot2}\\[5pt]
\xfromlabel{rot2,angle,axis}
\xmath{x+\sin(y)\cdot hat}
\xtolabel{rot}
\end{tabular}
}
\item Loss function for {\it  pose} (part D) -- (a) performing the affine transformation (rotating, scaling, translating, and orthographic projection) for the selected $42$ points of Candide-3 model; (b) evaluating the squared error of predicted landmarks wrt to ground truth landmarks:

\hspace*{-5mm}
\xlossdef{pose}{partD}{
\begin{tabular}{l}
\xfromlabel{rot,candide42by3}
\xmath{xy}\xmath{x_{0:2}}
\xtolabel{rotated}\\[5pt]
\xfromlabel{scale,rotated,shift}
\xmath{x\cdot y+z}
\xtolabel{plandmarks}\\[5pt]
\xfromlabel{landmarks42by2,plandmarks}
\xmath{\|x-y\|_2^2}\xtolabel{\omega}
\end{tabular}
}
\end{enumerate}

\paragraph{Optimization details\\}

For optimization AdamSGD optimizer is used with the learning rate $\alpha=10^{-4}$. The mini-batch size equals to $N_b=64.$ Number of optimizer steps for each model is bounded by $3\cdot 10^5.$ However, the training is terminated if there is no progress in reducing of the validation error.

\subsection{STNN for image based data/code security}

The applications of CNN for data or code security are still sparse. However, the simple classifier of possibly infected data bit streams into several malware software like  trojans or viruses is selected from MSc thesis of David Gibert: {\it Convolutional Neural Networks for Malware Classification}, Department of Computer Science, University of Barcelona . 

In {\bf 3C-2D} scheme, the bit streams is divided into packets of $2^{10}$ bits and their 2D views of shape $32\times 32$ are delivered to the CNN classifier. The author claims the  detector accuracy for the nine  types of malware software on the level of about $98\%.$

\noindent
\doublebox{
\begin{tabular}{l}
\xin{yx}{1}{view2D}
\xconv{3}{64}{}{}{}
\xpool{2}{}{m}{}{}
\xconv{3}{128}{}{}{}
\xpool{2}{}{m}{}{}
\xconv{3}{256}{}{}{}
\xpool{2}{}{m}{}{}
\xdense{}{1024}{}{}{}
\xdense{}{512}{}{}{}
\xdense{}{9}{}{}{s}
\xtolabel{out}
\\[5pt]
\hline
\xbound{3c2d}{}{
\begin{array}{l}
view2D := 32_{yx},\ 
optima := [loss, AdamSGD, \eqref{eq:soft-max-loss}]
\end{array}
}
\end{tabular}
}\\[5pt]

\xlossdef{SoftMax}{}{
\begin{tabular}{l}
\xfromlabel{net.score}\xmath{e^x}\xtolabelto{expScore}\xmath{\tp{\bm{1}}x}\xtolabel{totalExpScore}\\[5pt]
\xfromlabel{expScore,net.target}\xmath{x_y}\xtolabel{targetExpScore}\\[5pt]
\xfromlabel{targetExpScore,totalExpScore}\xmath{x/y}\xmath{-\ln x}\xtolabel{\omega}
\end{tabular}
}

\section{Comments, conclusions, and next steps for STNN}

\paragraph{Comments\\}

The presented tutorial by no means is the survey of neural solutions for CREAMS applications neither it is the selection of the "best" representatives examples for each of application areas. However, if we interpret the concept of "best" via a single criteria then the author's decisions were following wrt CREAMS areas: 
\begin{enumerate}
  \item the best bit performance versus subjective quality -- the areas C,E,
  \item the most compact STNN design complexity: the areas A,M,S,
  \item the audio modern application: the area R.
\end{enumerate}

\paragraph{Conclusions\\}

The presented tutorial, according to the author's intention was 
\begin{enumerate}
  \item to present a review of novel CNN solutions for applications representing all CREAMS categories,
  \item to describe the CNN/CREAMS solutions in a symbolic notation,
  \item to give the complete outline of theory and algorithms which are behind the architectures used in the above CNN/CREAMS solutions,
  \item to make the CNN description and its gradient flow equations complete for all units exploited, 
  \item to give for all presented applications, the symbolic representations of mathematically defined loss/gain functions,
  \item to answer the question how those magnificent models work and how they are designed.
\end{enumerate}

It seems that STNN  notation independent of programming paradigms and particular interfaces offered by software CNN libraries, has helped to achieve the above goals. Obviously, to a large extend -- the readers are the last oracle.

Concluding, the author beliefs that STNN is not only a convenient symbolic notation for public presentations of CNN based solutions for not only CREAMS problems but also  that it is a design blueprint with a potential for automatic generation of application source code.

\paragraph{Next steps for STNN\\}

Certainly there are drawbacks of such complex graphical presentation via \LaTeX\ environment. The basic problem which touched me directly arises at \LaTeX\ edition of STNN diagrams. In order to make them, several dozen of \LaTeX\ commands were prepared and using them in textual editor is rather a tedious work. A GUI which is well designed and next implemented to make their selection and manipulation more convenient is of merit. A WEB GUI could be really interesting.

Having GUI for \LaTeX\ commands assembling, the next steps for STNN development can be outlined:
\begin{enumerate}
  \item STNN \LaTeX\ sequence parsing for its syntactic and semantic correctness,
  \item STNN analysis for the net complexity analysis,
  \item STNN translation to JSON representation for STNN indirect implementation,
  \item and finally the direct code generators based on STNN representation.
\end{enumerate}

\nocite{*}
\bibliographystyle{fundam}
\bibliography{citations}

\end{document}